\theoremstyle{definition}
\newtheorem{proposition}{Proposition}
\newtheorem{property}{Property}
\theoremstyle{definition}
\newtheorem{deff}{Definition}
\theoremstyle{definition}
\title{Vertex-based reachability analysis for verifying ReLU deep neural networks}
\author{João Zago\footnote{joao.zago@posgrad.ufsc.br}, Eduardo Camponogara\footnote{eduardo.camponogara@ufsc.br}, Eric Antonelo\footnote{eric.antonelo@ufsc.br }}
\date{January, 2023}
\begin{document}

\maketitle

\begin{abstract}
Neural networks achieved high performance over different tasks, i.e. image identification, voice recognition and other applications. Despite their success, these models are still vulnerable regarding small perturbations, which can be used to craft the so-called adversarial examples.
Different approaches have been proposed to circumvent their vulnerability, including formal verification systems, which employ a variety of techniques, including reachability, optimization and search procedures, to verify that the model satisfies some property.
In this paper we propose three novel reachability algorithms for verifying deep neural networks with ReLU activations.
The first and third algorithms compute an over-approximation for the reachable set, whereas the second one computes the exact reachable set. 
Differently from previously proposed approaches, our algorithms take as input a V-polytope.
Our experiments on the ACAS Xu problem show that the Exact Polytope Network Mapping (EPNM) reachability algorithm  proposed in this work surpass the state-of-the-art results from the literature, specially in relation to other reachability methods.
\end{abstract}

\section{Introduction}
\label{sec:introduction}

Regardless of the success of deep neural networks in computer vision and natural language processing, these models are susceptible to small perturbations applied to their inputs, i.e. it is possible to misguide the model output by applying a designed perturbation to a given input. For instance, the ACAS Xu model \cite{julian2019deep} (explained later in detail), that responded differently from expected while under specific circumstances. The inputs purposefully designed to force a misbehavior of the neural network are denoted as adversarial examples \cite{szegedy2013intriguing}.

To overcome such vulnerability many different approaches have been previously employed. One proposed the application of algorithms that were able to generate adversarial examples, the so called adversarial attacks, and subsequently applied these inputs in the training process of the network \cite{szegedy2013intriguing, goodfellow2014explaining, madry2017towards, tramer2017ensemble}. There were also those approaches that aim to identify the adversarial examples before feeding them as input to the neural network \cite{song2017pixeldefend, grosse2017statistical, metzen2017detecting, lu2017safetynet}.
Even though these procedures helped to reduce the vulnerability of the neural networks, these models remained vulnerable to adversarial attacks.

Formal methods were also applied to certify or guarantee that the model behaves as expected under some circumstances or within a specified domain region, nevertheless \cite{katz2017reluplex} showed that the verification problem is NP-hard, leaving the process of certifying large models still an open problem.

The existing formal procedures can be classified into three different categories: 1) reachability methods; 2) optimization methods; and 3) search methods. The first one relies on calculating the output mapping of an input set \cite{xiang2017reachable, xiang2018output, gehr2018ai2}, the second one comprises the application of mathematical optimization (Mixed Integer-Linear Programming or Convex Optimization) to identify counter-examples \cite{lomuscio2017approach, tjeng2017evaluating, dvijotham2018dual, wong2018provable}, and the third makes use of both reachability and optimization approaches in conjunction with search methods for identifying counter-examples \cite{katz2017reluplex, huang2017safety}.

In this paper, we propose three novel reachability algorithms: APNM and PAPNM algorithms that compute an over-approximation for the output, while EPNM which computes the exact mapping. 
We present demonstrations on the behavior and correctness of these algorithms and case studies of their applications for comparison with existing algorithms from the literature. We also show that the algorithms proposed in this work are highly parallelizable.

The rest of this paper is organized into five sections: Section \ref{sec:realated_works} gives an overview of the existing algorithms and related works;
Section \ref{sec:algorithms} describes the proposed algorithms;
Section \ref{sec:demonstrations} addresses the demonstrations regarding the completeness and soundness of the proposed algorithms;
Section \ref{sec:applications} presents a study case for application and comparison;
and finally Section \ref{sec:conclusion} discusses the outcome of the procedures presented in this work.

%%%%%%%%%%%%%%%%%%%%%%%%%%%%%%%%%%%%%%%%%
\section{Related Works}
\label{sec:realated_works}

Formal verification of neural networks is receiving a huge amount of attention mainly because of its importance in security sensitive tasks such as the application of neural networks in autonomous vehicles, systems controllers, aeronautics, and several other applications that can possibly involve financial, human or environmental injury \cite{huang2017safety, julian2019deep}.

As previously presented, there are three main research areas developed to provide guarantees for neural networks: 1) reachability methods; 2) optimization methods; and 3) search methods. In this work, we propose three novel approaches regarding reachability analysis, which consist of two major steps: computing the output set by mapping a subset of the neural network domain and comparing the output set with the specification.

One of the algorithms developed in the literature to verify neural networks by means of reachability analysis is denoted as MaxSens \cite{xiang2018output}, which computes an over-approximation to the output reachable set and compares it to the desired specification. As this algorithm computes an over-approximation it does not satisfy the completeness property of a verification algorithm, although it is sound (if the property is verified then the algorithm guarantees that it is satisfied). 
To compute such an over-approximation, MaxSens divides the input set into several smaller hyperrectangles and computes the maximum sensitivity for each of them layer-by-layer. The output of this algorithm consists of the union of several hyperrectangles, which approximates the exact reachable set. Note that their approach can be applied to neural networks with any activation function.

Another approach from the literature, denoted as ExactReach \cite{xiang2017reachable}, computes the exact reachable set given an input set. This procedure takes as inputs a convex H-polytope (a polytope represented by its inequalities) and computes the exact mapping layer-by-layer. The authors proposed this approach specifically for the ReLU activation function, which is reasonable as this function has achieved promising results for convolutional and fully connected neural networks, which are widely applied in different applications.
Due to the nature of the ReLU activation function, the authors separated the non-linear mapping process in three cases: 1) all the elements of the input are positive; 2) all the elements of the input are negative; and 3) the input has positive, negative or null elements. These cases cover all the mapping possibilities regarding the ReLU activation.

Similarly to the aforementioned approaches, we propose in this paper three novel algorithms for reachability analysis. However, instead of using the H-polytope representation, each of our proposed procedures take as input set a V-polytope (a polytope represented by its vertices). 
We demonstrate the correctness of both algorithms and compare them with the literature (not only with those that make use of reachability analysis) by using a case study (ACAS Xu \cite{julian2019deep}).

\section{Algorithms}
\label{sec:algorithms}

In this section we will state the verification problem and the three algorithms proposed in this work. The demonstrations regarding the correctness of each procedure will be presented in the following section.

\subsection{Problem statement}

Let $F: \mathbb{R}^n \mapsto \mathbb{R}^m$ represent a mapping given by a neural network composed of $L$ layers.
Then, for a given $\mathbf{x} \in \mathbb{R}^n$, $F(\mathbf{x}) = (F_L \circ F_{L-1} \circ \cdots \circ F_2 \circ F_1)(\mathbf{x})$, 
where $F_l$ is the mapping given by the $l$-th layer.
Further, $F_l$ consists of the composition of an affine mapping and a non-linear mapping (in this case a ReLU function): $F_l(\mathbf{x}_l) = ReLU(\mathbf{W}_l \mathbf{x}_l + \boldsymbol{\theta}_l)$, where $\mathbf{W}_l$ and $\boldsymbol{\theta}_l$ denote the weight matrix and bias vector of the $l$-th layer, respectively, and $\mathbf{x}_l$ is the input to the same layer.

Suppose that $\mathcal{X} \subseteq \mathbb{R}^n$ is the input set that we want to verify and $\mathcal{R} = \{F(\mathbf{x}) \mid \mathbf{x} \in \mathcal{X}\}$ is the exact output set associated with $\mathcal{X}$, which resulted from the application of the neural network $F$ to each input in $\mathcal{X}$. 
Moreover, the set $\mathcal{Y}$ comprises the expected output set for the inputs from $\mathcal{X}$. Then, the verification problem consists of assuring that:
\begin{equation}
    \mathcal{R} \cap \lnot \mathcal{Y} = \emptyset.
    \label{eq:verification_property}
\end{equation}
In other words, we want to guarantee that there will not exist an input of $F$ in $\mathcal{X}$ that will cause the network to generate an undesired output (an output that is not in $\mathcal{Y}$).

To perform such a verification, one needs to start by calculating the output set associated with $\mathcal{X}$. As we presented in previous sections, there are approaches that compute the exact reachable set and other approaches that over-approximate it. For those that compute an approximation for the output set, denoted by $\widehat{\mathcal{R}}$, we will have that $\mathcal{R} \subseteq \widehat{\mathcal{R}}$. Then, we can see that if $\widehat{\mathcal{R}} \cap \lnot \mathcal{Y} = \emptyset$, then the property given by the Equation \ref{eq:verification_property} will still be assured. In the following sections we will present the proposed algorithms.

\subsection{Approximate Polytope Network Mapping (APNM)}

The first algorithm proposed in this work is called Approximate Polytope Network Mapping (APNM)
and consists of a procedure to compute an over-approximation for the reachable set. 
Let $\mathcal{P}$ be a convex closed polytope, defined as a convex combination of its vertices, namely 
\begin{equation} 
\mathcal{P} = \left\{\sum_{i=1}^o \lambda_i \mathbf{v}_i \mathrel{\bigg|} \sum_{i=1}^o \lambda_i = 1, \lambda_i \geq 0 \text{ and } \mathbf{v}_i \in \mathcal{V}, \forall i \in \{1, \ldots, o\} \right \},
\end{equation} where $\mathcal{V}$ is the set of vertices of $\mathcal{P}$ ($\mathcal{V} = \{\mathbf{v}_1, \ldots, \mathbf{v}_o\}$).
To achieve its goal, the algorithm has five parts:

\begin{enumerate}
    \item Affine map of the vertices with weights and biases;
    \item Adjacent vertex identification;
    \item Polytope intersection with an orthant's hyperplanes;
    \item ReLU mapping; and
    \item Removing non-vertices.
\end{enumerate}

%We will follow the explanation of the way that the algorithm works with a simple visual 2-dimensional problem for visualization purposes. The input set $\mathcal{P}$ is presented by Figure \ref{fig:input_set}.
In what follows, the way the algorithm works will be explained with a simple $2$-dimensional problem for visualization purposes. The input set $\mathcal{P}$ is presented in Figure \ref{fig:input_set}: the set of vertices is $\mathcal{V} = \{\mathbf{v}_1, \mathbf{v}_2, \mathbf{v}_3, \mathbf{v}_4\}$, where $\mathbf{v}_1 = (1.0,1.0)$, $\mathbf{v}_2 = (-1.0,1.0)$, $\mathbf{v}_3 = (-1.0,-1.0)$ and $\mathbf{v}_4 = (1.0,-1.0)$; and the set of edges is $\mathcal{E}=\{(\mathbf{v}_1,\mathbf{v}_2), (\mathbf{v}_1,\mathbf{v}_4), (\mathbf{v}_2,\mathbf{v}_3), (\mathbf{v}_3,\mathbf{v}_4)\}$.

\begin{figure}[!ht]
    \centering
    \includegraphics[scale=0.3]{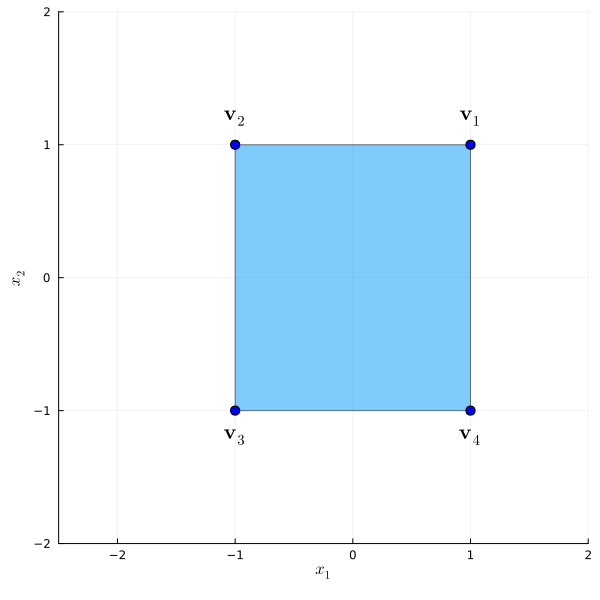}
    \caption{The input set for visualizing each step of the algorithm.}
    \label{fig:input_set}
\end{figure}

\subsubsection*{Affine map of the vertices with weights and biases}

The affine map comprises the product of each vertex of $\mathcal{P}$ by the weights associated with layer $l$ plus  the biases of the same layer. Algorithm \ref{alg:affine_map} contains the steps to compute the affine map of all vertices from $\mathcal{V}$. 
% The affine map comprises the product of each vertex of $\mathcal{P}$ by the weights associated with layer $l$ and the result of the product is added to the biases of the same layer. Algorithm \ref{alg:affine_map} contains the steps to compute the affine map of all vertices from $\mathcal{V}$. 

%%ea: https://tex.stackexchange.com/questions/649378/algorithm2e-defining-2-separate-functions-in-same-document?rq=1

\begin{algorithm}[h!]
\SetAlgoLined
 \textbf{Input:} $\mathbf{V} \in \mathbb{R}^{o \times n}$, $\mathbf{V} = \left[\mathbf{v}[1], \ldots, \mathbf{v}[o]\right]$, $\mathbf{v}[i] \in \mathbb{R}^n \, , \, \forall i \in \{1, \ldots, o\}$, $\mathbf{W} \in \mathbb{R}^{m \times n}$ and $\boldsymbol{\theta} \in \mathbb{R}^{m}$

 \SetKwFunction{FMain}{$\mathbf{AM}$}
 \SetKwProg{Fn}{Function}{:}{}
 \Fn{\FMain{$\mathbf{V}, \mathbf{W}, \boldsymbol{\theta}$  }}
  {
 % \textbf{function} AM($\mathbf{V}, \mathbf{W}, \boldsymbol{\theta}$)
 %
 
 let $\mathbf{Z}[1..o]$ be a new array, where each $\mathbf{Z}[i] \text{ is an empty array}, \, \forall i \in \{1, \ldots, o\}$
 
 \For{$i \in \{1, \ldots, o\}$}{
    $\mathbf{Z}[i] = \mathbf{W} \mathbf{v}[i] + \boldsymbol{\theta}$ \;
 }
 
 \textbf{return} $\mathbf{Z}$ 
 }
 
 \caption{Affine Map (AM)}
 \label{alg:affine_map}
\end{algorithm}

By applying the affine map to the input set, presented in Figure \ref{fig:input_set}, we have the output of this step as shown in Figure \ref{alg:affine_map}. The weight matrix $\mathbf{W}$ and biases $\boldsymbol{\theta}$ employed in this toy example are as follows:
\begin{equation}
\mathbf{W} = 
\begin{bmatrix}
    0.492693 & -1.29232 \\
    0.925861 & 0.675146
\end{bmatrix},
\quad
\boldsymbol{\theta} = 
\begin{bmatrix}
    -0.18857972 \\
    -0.14839205
\end{bmatrix}
\end{equation}
Note that, as expected, the output of the current operation is a simple affine transformation of the vertices of $\mathcal{P}$.

\begin{figure}[!ht]
    \centering
    \includegraphics[scale=0.3]{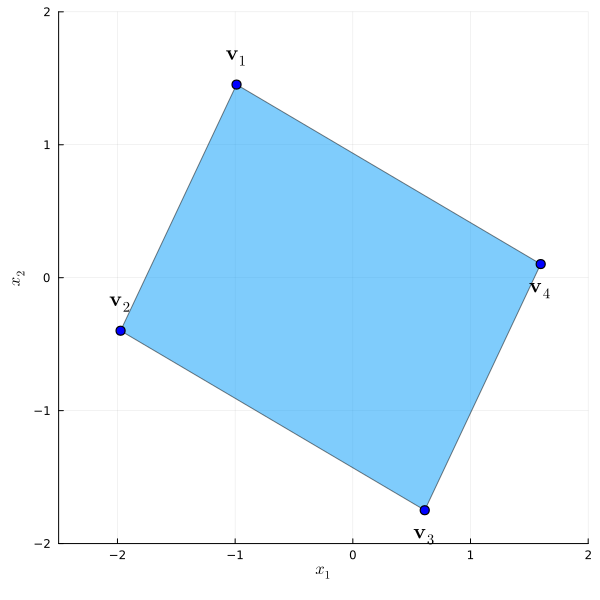}
    \caption{Representation of the output of the affine map operation.}
    \label{fig:affine_map}
\end{figure}

\subsubsection*{Adjacent vertex identification}

Following the affine map, the edge identification process takes place. 
Let $\mathbf{v}_i,\mathbf{v}_j\in\mathcal{V}$ be two distinct vertices, such that $i \neq j$.
If there is at least one combination of the vertices of $\mathcal{V}$, except from $\mathbf{v}_i$ or $\mathbf{v}_j$, that allow us to compute the middle point of $\mathbf{v}_i$ and $\mathbf{v}_j$, given by $(\mathbf{v}_i + \mathbf{v}_j)/2$, then $\mathbf{v}_i$ and $\mathbf{v}_j$ are not adjacent.
Equation \ref{eq:adjascent_vertices} presents this feasibility problem as a MILP (Mixed Integer-Linear Programming), where $\boldsymbol{\lambda} \in \mathbb{R}^o$ is the vector used to represent the convex combination of the vertices of $\mathcal{P}$ and $\eta \in \{0,1\}$ is the binary variable that enables evaluating $\mathbf{v}_i$ or $\mathbf{v}_j$ separately.

\begin{subequations}
\begin{eqnarray}
\max && 0 \, \text{,} \\
% \text{s.t.} && \frac{(v_i + v_j)}{2} \, = \, V \cdot \lambda \, \text{,} \\ \nonumber
\text{s.t.} && \frac{(\mathbf{v}_i + \mathbf{v}_j)}{2} \, = \, \sum_{k=1}^o \mathbf{v}_k \lambda_k \, \text{,} \\
&& \sum_{k=1}^o \lambda_k \, = \, 1 \, \text{,} \\ 
&& \lambda_k \, \geq \, 0 \, \text{,} \forall k \in \{1, \ldots, o\} \, \text{,} \\ 
&& \lambda_i \, \leq \, \eta \, \text{,} \\ 
&& \lambda_j \, \leq \, 1 - \eta \, \text{,} \\ 
&& \eta \, \in \, \{0,1\} \\ \nonumber
\end{eqnarray}
\label{eq:adjascent_vertices}
\end{subequations}

Algorithm \ref{alg:edge_identification} presents the identification %process 
of adjacent vertices. 
%The algorithm 
It creates an undirected graph using an adjacency list as the data structure to represent the $1$-skeleton of the polytope, which is the edge structure of some polytope \cite{mcmullen2002abstract, emiris2016efficient}. We denote the adjacency list of the undirected graph by $\mathbf{E}$, were each element is a set containing the %index
indices
of the adjacent vertices.

\begin{algorithm}
\SetAlgoLined
 \textbf{Input:} $\mathbf{V} \in \mathbb{R}^{o \times n}$, $\mathbf{V} = [\mathbf{v}[1], \ldots, \mathbf{v}[o]]$, $\mathbf{v}[i] \in \mathbb{R}^n \, , \, \forall i \in \{1, \ldots, o\}$
 
 % \textbf{function} EI($\mathbf{V}$)

 \SetKwFunction{FMain}{$\mathbf{EI}$}
 \SetKwProg{Fn}{Function}{:}{}
 \Fn{\FMain{$\mathbf{V}$  }}
 {
 let $\mathbf{E}[1 \ldots o]$ be a new array, where each $\mathbf{E}[i] \, , \, \forall i \in \{1, \ldots, o\}$, corresponds to an empty set
 
 \For{$i \in \{1, \ldots, o\}$}{
 
    \For{$j \in \{i+1, \ldots, o\}$}{
    
        \uIf{$\lnot \exists \boldsymbol{\lambda} \in \mathbb{R}^o : \frac{1}{2}(\mathbf{v}[i] + \mathbf{v}[j]) = \sum_{k=1}^o \mathbf{v}[k] \lambda[k] \land \sum_{k=1}^o \lambda[k] = 1 \land \lambda[k] \geq 0 \, , \, \forall k \in \{1, \ldots, o\} \land (\lambda[i] = 0 \lor \lambda[j] = 0)$}{
        
            $\mathbf{E}[i] \gets \mathbf{E}[i] \cup \{j\}$
        }
    
    }
 
 }

 \textbf{return} $\mathbf{E}$ 

 }
 
 \caption{Edge-skeleton Identification (EI)}
 \label{alg:edge_identification}
\end{algorithm}

Following the toy problem, where the affine map is shown in Figure \ref{fig:affine_map}, the undirected graph that represents the edge structure of the affine map of $\mathcal{P}$ is presented in Figure \ref{fig:graph_viz}. This graph $\mathcal{G}=(\mathcal{V},\mathcal{E})$ consists of the vertex set $\mathcal{V} = \{\mathbf{v}_1, \mathbf{v}_2, \mathbf{v}_3, \mathbf{v}_4\}$ and edge set  $\mathcal{E}=\{(\mathbf{v}_1,\mathbf{v}_2), (\mathbf{v}_1,\mathbf{v}_4), (\mathbf{v}_2,\mathbf{v}_3), (\mathbf{v}_3,\mathbf{v}_4)\}$.

Notice that each edge in the undirected graph indicates that the associated vertices are connected by an edge of the corresponding polytope.

\begin{figure}[htbp]
    \centering
    \includegraphics[scale=0.07]{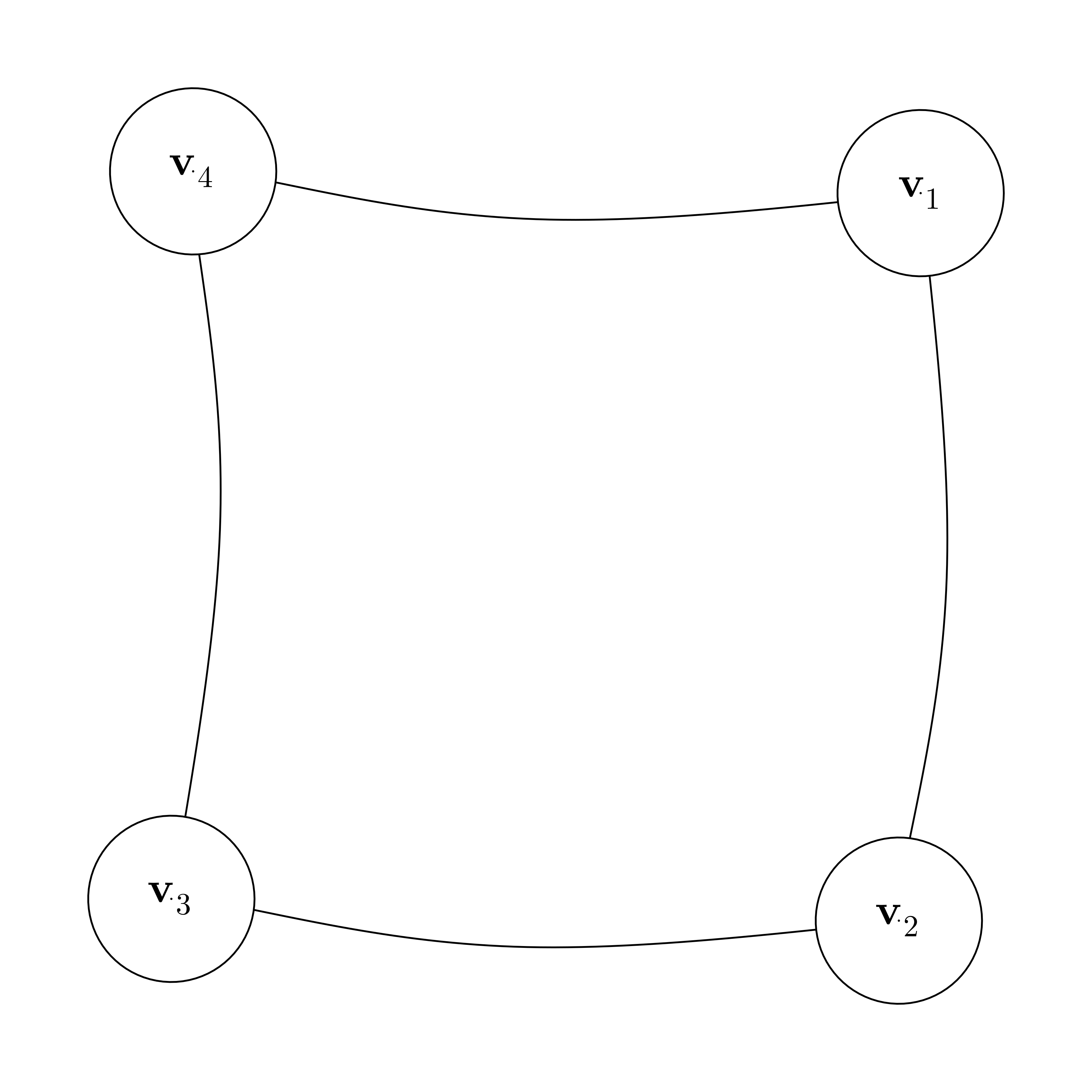}
    \caption{Undirected graph that represents the edge structure of the output of the affine map of $\mathcal{P}$.}
    \label{fig:graph_viz}
\end{figure}

\subsubsection*{Polytope intersection with orthant's hyperplanes}

The next step for the algorithm concerns the identification of those points at the intersection of the hyperplanes that define each orthant with the edges of $\mathcal{P}$.
As the edges of the polytope were previously computed, only the edges whose extreme points belong to different orthants need to be identified, i.e., extreme points that have at least one component with a different sign. So, we compute the difference of the sign for each element of both extreme points for a given edge, given by Equation \ref{eq:diff_vertex_signal}:
\begin{equation}
    \mathbf{a} = {\tt sign}(\mathbf{v}_{i}) - {\tt sign}(\mathbf{v}_{j})
    \label{eq:diff_vertex_signal}
\end{equation}
where $i$ denotes the vertex under consideration, $j \in \mathbf{E}_i$ represents the index of the vertices that are adjacent to $\mathbf{v}_i$, and ${\tt sign}:\mathbb{R}^n \rightarrow \mathbb{R}^n$ is the signal mapping that associates $1$ for the positive elements, $-1$ for the negative elements, and $0$ for the null ones. We also define a function $\sigma:\mathbb{R} \rightarrow \mathbb{R}$, given by Equation \ref{eq:sigma}. For the cases where a component of $\mathbf{a}$ is greater than or equal to $2$, we have that the sign indeed changed. For those cases where the difference is equal to $1$, there was a null component in one of the vertices, which does not indicate that they are in different orthants.
   %%%
\begin{equation}
    \sigma(\alpha) = 
    \begin{cases}
        1,& \text{if } |\alpha| \geq 2\\
        0,& \text{otherwise}
    \end{cases}
    \label{eq:sigma}
\end{equation}

Then, for those cases where there is a sign change, or, in other words, for those cases that satisfy the condition:
\begin{equation}
    \sum_{k=1}^n  \sigma(a_{k}) \neq 0
\end{equation}
we compute those points at which the intersection between the edge and the hyperplane occurred. Each sign change generates an intersection point.

Given two vertices $\mathbf{v}_i$ and $\mathbf{v}_j$, suppose that there is a $k \in \{1, \ldots, n\}$ for which $\sigma(a_{k}) \neq 0$, indicating that $\mathbf{v}_i$ and $\mathbf{v}_j$ belong to different orthants. So, there exists a value of $\lambda$, such that $0 \leq \lambda \leq 1$, defining a convex combination of the vertices that lies on the orthant hyperplane, namely a $\lambda$ that satisfies:
\begin{equation}
  \lambda v_{j,k} + (1-\lambda)v_{i,k} = 0 \iff  \lambda = \frac{-v_{i,k}}{v_{j,k} - v_{i,k}}
    \label{eq:lambda}
\end{equation}
Figure \ref{fig:lambda_view} contains a visual representation of this process for the case of vertices $\mathbf{v}_1=(-0.988207,1.45261)$ and $\mathbf{v}_4=(1.59643,0.102323)$. Notice that $\mathbf{a} = {\tt sign}(\mathbf{v}_1) - {\tt sign}(\mathbf{v}_4) = (-1,1) - (1,1) = (-2,0)$, thus $\boldsymbol{\sigma}(\mathbf{a}) = (1,0)$ implying that $\mathbf{v}_1$ and $\mathbf{v}_4$ lie in different orthants, and the edge connecting them intercepts the hyperplane defining the respective orthant at a point $\mathbf{p}$. By using the above equation, we compute the convex combination parameter $\lambda$ to be $0.382338$. 

\begin{figure}[htbp]
    \centering
    \includegraphics[scale=0.3]{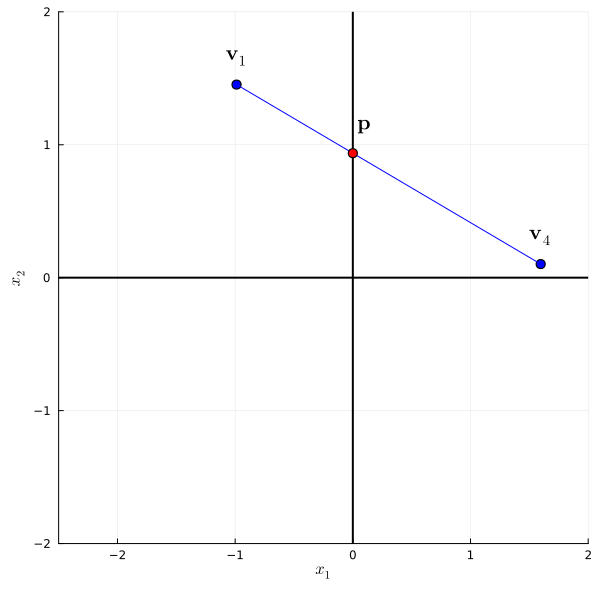}
    \caption{Representation of the intersection identification process. The blue dots represent two adjacent vertices and the line segment is the edge connecting them. The red dot is the intersection point of the edge with respect to the orthant hyperplane supporting plane.}
    \label{fig:lambda_view}
\end{figure}

Then, we compute each component $s \in \{1, \ldots, n\}$ of the point $\mathbf{p} \in \mathbb{R}^n$, as presented by Equation \ref{eq:componentes_calculation},
\begin{equation}
    p_s = (v_{j,s} - v_{i,s}) \lambda + v_{i,s},
    \label{eq:componentes_calculation}
\end{equation}
which represents the intersection between an edge of $\mathcal{P}$ and one of the hyperplanes of an orthant from $\mathbb{R}^n$.
For the example in Figure \ref{fig:lambda_view}, the intercept of the edge $(\mathbf{v}_1,\mathbf{v}_4)$ with the supporting hyperplane $\{\mathbf{x}=(x_1,x_2):x_1= 0\}$ of the
%that separates  $\mathbf{v}_1$ and $\mathbf{v}_4)$
orthant $\{\mathbf{x}:\mathbf{x}\geq 0\}$ is $\mathbf{p}=(0,0.93635)$.

The procedure, formalized by Algorithm \ref{alg:orthant_intersection}, computes the intersection points between each edge of $\mathcal{P}$ and each orthant's supporting hyperplane that are intercepted by the edge.

\begin{algorithm}[htb!]
\SetAlgoLined
 \textbf{Input:} $\mathbf{V} \in \mathbb{R}^{o \times n}$ where $\mathbf{V} = [\mathbf{v}[1], \ldots, \mathbf{v}[o]]$, $\mathbf{v}[i] \in \mathbb{R}^n \, , \, \forall i 
 \in \{1, \ldots, o\}$, \\
 \qquad\quad $\mathbf{E} = [\mathbf{E}[1], \ldots, \mathbf{E}[o]]$
 
 % \textbf{function} II($\mathbf{V},\mathbf{E}$)

 \SetKwFunction{FMain}{$\mathbf{II}$}
 \SetKwProg{Fn}{Function}{:}{}
 \Fn{\FMain{$\mathbf{V},\mathbf{E}$}}
 {
 
 \For{$i \in \{1, \ldots, o\}$}{

 \tcc{For each edge of vertice i} 
    \For{$j \in \mathbf{E}[i]$  
    }{   
 
        $\mathbf{a} \gets {\tt sign}(\mathbf{v}[i]) - {\tt sign}(\mathbf{v}[j])$
     
        %\uIf{$\sum_{k=1}^n \sigma(a[k]) \neq 0 $}{
            
            \For{$k \in \{1,\ldots,n\}$}{
            
                \uIf{$\sigma(a[k]) \neq 0$}{
                    
                    $\lambda \gets \frac{-v[i,k]}{v[j,k] - v[i,k]}$
                    
                    let $\mathbf{p}$ be a point in $\mathbb{R}^n$
                    
                    \For{$s \in \{1, \ldots, n\}$}{
                    
                        $p[s] \gets \frac{v[j,s] - v[i,s]}{v[i,s]} \lambda$
                    
                    }
                    
                    $\mathbf{V} \gets [\mathbf{V}\, |\, \mathbf{p}]$ 
                    \tcc{Append $\mathbf{p}$ to matrix $\mathbf{V}$ }
                    
                    % $\mathbf{V} \gets [\mathbf{v}[1], \ldots, \mathbf{v}[|V|] \, | \, \mathbf{p}]$ \CommentSty{\% concatenating}
                
                }
            }    
        %}
    }
    
 }
 
 \textbf{return} $\mathbf{V}$ 

 }
 \caption{Orthant intersection identification (II)}
 \label{alg:orthant_intersection}
\end{algorithm}

The result of the intersection identification for the output of the affine map of $\mathcal{P}$ is presented in Figure \ref{fig:intersection_identification}. There we present all 
the vertices of the polytope in addition to the computed intersection points.

\begin{figure}[!ht]
    \centering
    \includegraphics[scale=0.3]{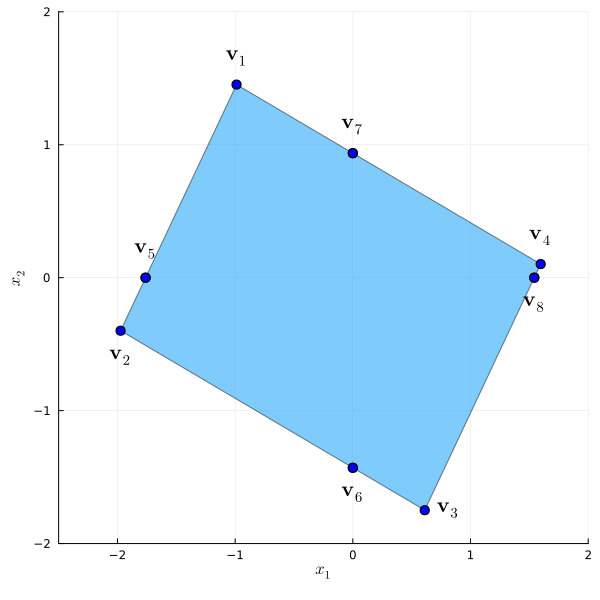}
    \caption{Intersection identification results representation.}
    \label{fig:intersection_identification}
\end{figure}

\subsubsection*{ReLU Mapping}

After calculating  the intersection points between edges and orthant's hyperplanes, the ReLU mapping of the resulting points is computed. We apply the mapping $ReLU:\mathbb{R}^n \rightarrow \mathbb{R}^n$, defined by Equation \ref{eq:relu_map}, to all the points given by the previous procedure. Algorithm \ref{alg:relu_map} summarizes the process where $\mathbf{V}$ is the output vertex set from Algorithm \ref{alg:orthant_intersection}.
\begin{equation}
    ReLU(\mathbf{x}) = \max(0,\mathbf{x})
    \label{eq:relu_map}
\end{equation}

\begin{algorithm}[htb!]
\SetAlgoLined
 \textbf{Input:} $\mathbf{V} \in \mathbb{R}^{o \times n}$, $\mathbf{V} = [\mathbf{v}[1], \ldots, \mathbf{v}[o]]$, $\mathbf{v}[i] \in \mathbb{R}^n \, , \, \forall i \in \{1, \ldots, o\}$
 
 % \textbf{function} ReLU($\mathbf{V}$)
\SetKwFunction{FMain}{$\mathbf{ReLU}$}
 \SetKwProg{Fn}{Function}{:}{}
 \Fn{\FMain{$\mathbf{V}$  }}
 {
 
 \For{$i \in \{1, \ldots, o\}$}{
 
    \For{$j \in \{1, \ldots, n\}$}{
    
        $v[i,j] \gets \max(0, v[i,j])$
        
    }
 
 }
 
 \textbf{return} $\mathbf{V}$ 

 }
 \caption{ReLU map}
 \label{alg:relu_map}
\end{algorithm}

We can see the result of the application of the $ReLU$ mapping as presented in Figure \ref{fig:relu_map}. As expected, all the vertices were projected to the positive orthant, whereby vertices $\mathbf{v}_2$,  $\mathbf{v}_5$, and $\mathbf{v}_6$ are projected onto the origin. Notice that the output set is not convex.

\begin{figure}[!ht]
    \centering
    \includegraphics[scale=0.3]{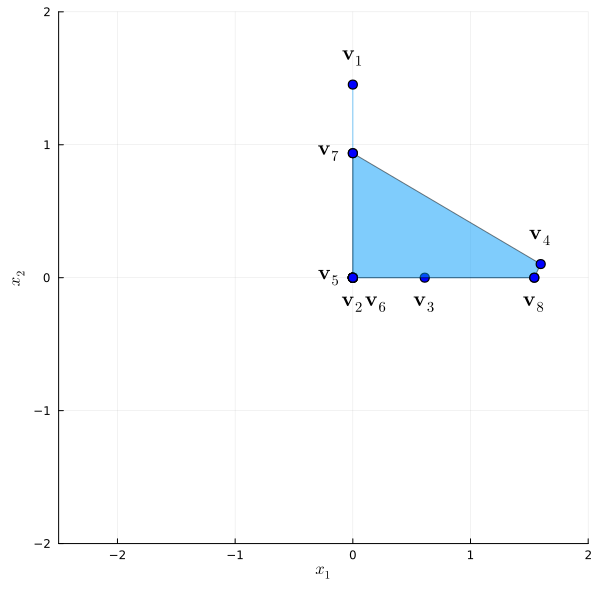}
    \caption{Representation of the output of the $ReLU$ mapping of the vertices and intersection points.}
    \label{fig:relu_map}
\end{figure}

%%%%%%%%%%%%%%
\subsubsection*{Removing Non-vertices}

To simplify the output of a single layer, algorithm APNM computes the convex hull on the output of the ReLU mapping. To compute the convex hull, the algorithm removes the points that are not vertices. To create a generalized process for eliminating non-vertices, and due to the high dimensionality of the internal layers of the neural network, we propose the application of a feasibility analysis to identify the points that can be expressed as a convex combination from the remaining points.
   
The feasibility problem is formalized in Equation \ref{eq:convex_hull}, as follows:
\begin{subequations}
\begin{eqnarray}
\min && 0 \, \text{,} \\
\text{s.t.} && \sum_{i=1}^o \lambda_i \mathbf{v}_i \, = \, \mathbf{v}_k \, \text{,} \\
&& \sum_{i=1}^o \lambda_i \, = \, 1 \, , \\
&& \lambda_k \, = \, 0 \, , \\
&& \lambda_i \, \geq \, 0 \, , \, \forall i \in \{1, \ldots, o\},  \\ \nonumber
\end{eqnarray}
\label{eq:convex_hull}
\end{subequations}
where, for each vertex candidate, denoted by $\mathbf{v}_k$, we search for a $\boldsymbol{\lambda}$ vector such that $\mathbf{v}_k$ can be described as a convex combination of the remaining vertices. If that is the case, then $\mathbf{v}_k$ is not a vertex and it can be removed. Otherwise, the point is a vertex and must remain in the set of vertices.
The non-vertex elimination is formalized in Algorithm \ref{alg:removing_internal_points}.

\begin{algorithm}[htb!]
\SetAlgoLined
 \textbf{Input:} $\mathbf{V} \in \mathbb{R}^{o \times n}$, $\mathbf{V} = [\mathbf{v}[1], \ldots, \mathbf{v}[o]]$, $\mathbf{v}[i] \in \mathbb{R}^n \, , \, \forall i \in \{1, \ldots, o\}$
 
 % \textbf{function} RP($\mathbf{V}$)
 \SetKwFunction{FMain}{$\mathbf{RP}$}
 \SetKwProg{Fn}{Function}{:}{}
 \Fn{\FMain{$\mathbf{V}$  }}
 {
 
 \For{$k \in \{1, \ldots, o\}$}{
    
    \uIf{$\lnot \exists \boldsymbol{\lambda} \in \mathbb{R}^o : \sum_{i=1}^o \lambda_i \mathbf{v}[i] = \mathbf{v}[k] \land \sum_{i=1}^o \lambda_i = 1 \land \lambda_k = 0 \land \lambda_i \geq 0 \, , \, \forall i \in \{1, \ldots, o\}$}{
        $\mathbf{V} \gets [\mathbf{v}[1], \ldots, \mathbf{v}[k-1], \mathbf{v}[k+1], \ldots, \mathbf{v}[o]]$
    }
    
        % $Z \gets \{v_k \mid \lnot \exists \mathbf{\lambda}(\sum_{i=0}^m \lambda_i v_i  =  v_k \land \sum_{i=0}^m \lambda_i = 1 \land \lambda_k = 0 \land \lambda_i \geq 0 , \forall i \in \{1, \ldots, m\})\}$
 
 }
 
 \textbf{return} $\mathbf{V}$
 }
 \caption{Removing Internal Points (RP)}
 \label{alg:removing_internal_points}
\end{algorithm}

The output of the process of removing non-vertices  can be viewed in Figure \ref{fig:identifying_non_vertices}. As we can see, instead of returning the exact non-convex set, this algorithm computes an over-approximation for the output set which is simpler than the exact output as it is a unique set.

\begin{figure}[!ht]
    \centering
    \includegraphics[scale=0.3]{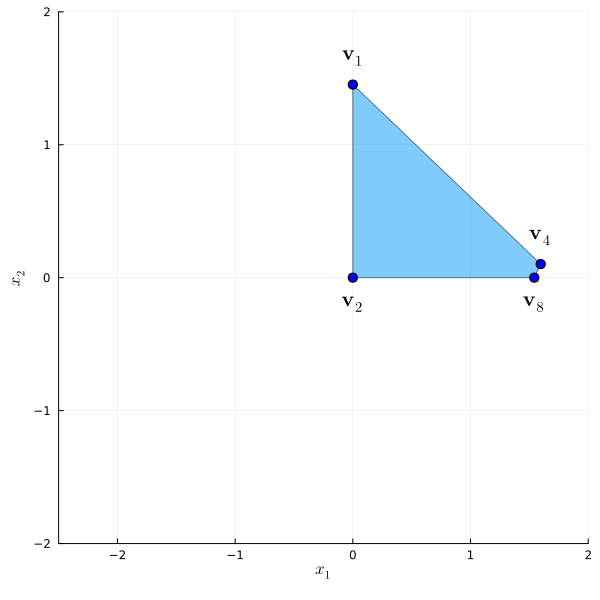}
    \caption{Representation of the convex hull of the output of the $ReLU$ mapping.}
    \label{fig:identifying_non_vertices}
\end{figure}

%%%%%%

\subsubsection*{Approximate Polytope Network Mapping}

Finally, we show the complete process, presented by the Algorithm \ref{alg:layer_map}, to compute an approximation for the output reachable set for a given input polyhedron $\mathcal{P}$.

\begin{algorithm}[htb!]
\SetAlgoLined
    \textbf{Input:} $\mathbf{V} \in \mathbb{R}^{o \times n}$, $\mathbf{W}_l$ and $\boldsymbol{\theta}_l$ for $l \in \{1, \ldots, L\}$
 
    % \textbf{function} SAPNM($\mathbf{V}, \mathbf{W}, \boldsymbol{\theta}$)
     \SetKwFunction{FMain}{$\mathbf{SAPNM}$}
     \SetKwProg{Fn}{Function}{:}{}
     \Fn{\FMain{$\mathbf{V}, \mathbf{W}, \boldsymbol{\theta}$  }}
     {
    
    $\mathbf{Z}[0] \gets \mathbf{V}$
 
    \For{$l \in \{1, \ldots, L\}$}{
    
        $\hat{\mathbf{Z}}[l] \gets \textbf{AM}(\mathbf{Z}[l-1], \mathbf{W}_l, \boldsymbol{\theta}_l)$ \tcc{Compute affine mapping for layer $l$}
        
        \uIf{$l < L$}{
        
            $\mathbf{E}[l] \gets \textbf{EI}(\hat{\mathbf{Z}}[l])$ \tcc{Compute edge-skeleton of polyhedron given by vertices $\hat{\mathbf{Z}}[l]$}
            
            $\hat{\mathbf{Z}}[l] \gets \textbf{II}(\hat{\mathbf{Z}}[l], \mathbf{E}[l])$ \tcc{Obtain intercept of edges with horthant hyperplanes}
            
            $\hat{\mathbf{Z}}[l] \gets \textbf{ReLU}(\hat{\mathbf{Z}}[l])$ \tcc{Apply ReLU mapping to the vertex set $\hat{\mathbf{Z}}[l]$}
            
            $\mathbf{Z}[l] \gets \textbf{RP}(\hat{\mathbf{Z}}[l])$  \tcc{Remove non-vertex points from vertex set $\mathbf{Z}[l]$}
            
        }
    }
 
 \textbf{return} $\mathbf{Z}[L]$

  }
 \caption{Simple Approximated Politope Network Mapping}
 \label{alg:layer_map}
\end{algorithm}

%%%%%%%%%%%%%%%%%%%%%%%%%%%%%%%%%%%%%%%%%%%%%%%%%%%%%%%%
\subsection{Exact polytope network mapping (EPNM)}

We present in this section the second proposed approach, which similarly to the first one computes the reachable set by using the vertices of a given input polytope, but computes the exact output instead. We consider the same input closed convex polytope $\mathcal{P}$ and the same set of vertices $\mathcal{V}$ as presented in the previous section. This approach comprises six parts:
\begin{enumerate}
    \item Affine map of the vertices with weights and biases;
    \item Adjacent vertex identification;
    \item Origin verification;
    \item Polytope intersection with orthant's hyperplanes;
    \item Separate points according to the orthant to which they belong;
    \item ReLU mapping; and
    \item Removing non-vertices;
\end{enumerate}
In this section we only present parts 3 and 5, which were not previously introduced.

\subsubsection*{Origin verification}

After we compute the edges of $\mathcal{P}$, we need to verify if the origin is inside it. 
In case it is true, then we need to insert it into the set of vertices that will be partitioned with respect to the orthant that they belong in the next part of the algorithm. 
This verification is compulsory due to the separation of vertices according to the orthant to which they belong (for instance, if the origin is in $\mathcal{P}$ and is not in the set of the vertices of the partitions of the polytope, then the union of the partitions will not be equal $\mathcal{P}$), as presented in Figure \ref{fig:zeros_verification}. 
Notice that, each partition represent the portion of $\mathcal{P}$ that is inside an orthant, i.e., the portion of $\mathcal{P}$ that is in the positive orthant, in Figure \ref{fig:zeros_verification}, is the union of the red area and the shaded area.
We can see that if the origin is not part of the set of vertices of each partition of $\mathcal{P}$, after the orthant separation process, the shaded area will be missing for the portion of $\mathcal{P}$ inside the positive orthant. 
Thus, we need to include the origin point in the respective set so that the portion of $\mathcal{P}$ in the positive orthant also includes the shaded area.
    %%%%%%%%
\begin{figure}[htbp]
    \centering
    \includegraphics[scale=0.65]{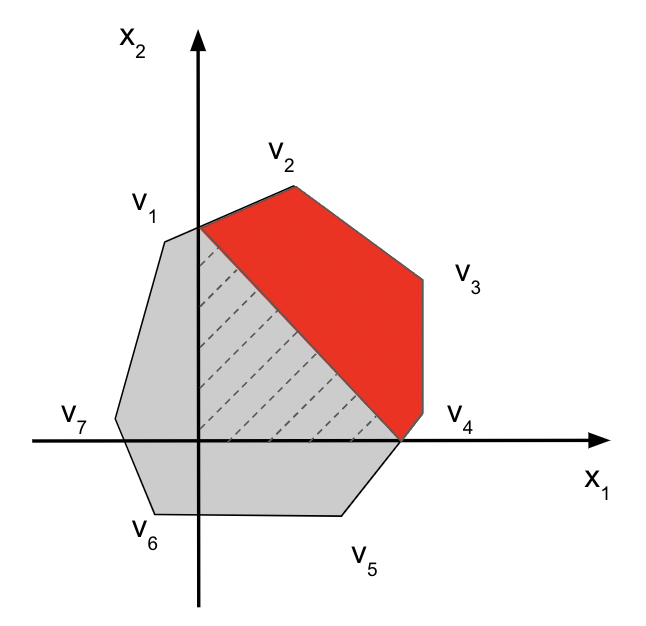}
    \caption{Representation of the issue that will occur if the origin is not inserted to the vertex set for the orthant separation process.}
    \label{fig:zeros_verification}
\end{figure}
To perform such a verification, we state the search problem as a feasibility analysis, in which we try to verify if there exists a vector $\boldsymbol{\lambda} \in \mathbb{R}^o$ such that the origin is a convex combination of the vertices in $\mathcal{V}$. This problem is formalized by Equation \eqref{eq:origin_search}.

\begin{subequations} \label{eq:origin_search}
\begin{eqnarray}
\min && 0 \, \text{,} \\ 
\text{s.t.} && \sum_{i=1}^o \lambda_i \mathbf{v}_i \, = \, 0 \, \text{,} \\
&& \sum_{i=1}^o \lambda_i \, = \, 1 \, , \\
&& \lambda_i \, \geq \, 0 \, , \, \forall i \in \{1, \ldots, o\} \text{.} \\ \nonumber
\end{eqnarray}
\end{subequations}

Algorithm \ref{alg:origin_search} formalizes the search process associated with the verification of the origin.

\begin{algorithm}[htb!]
\SetAlgoLined
    \textbf{Input:} $\mathbf{V}\in \mathbb{R}^{o \times n}$, $\mathbf{V} = [\mathbf{v}[1], \ldots, \mathbf{v}[o]]$, $\mathbf{v}[i] \in \mathbb{R}^n \, , \, \forall i \in \{1, \ldots, o\}$
 
    \SetKwFunction{FMain}{$\mathbf{OS}$}
    \SetKwProg{Fn}{Function}{:}{}
    \Fn{\FMain{$\mathbf{V}$  }}
    {
    
    \uIf{$\exists \boldsymbol{\lambda} \in \mathbb{R}^o : \sum_{i=1}^o \lambda_i \mathbf{v}_i = 0 \land \sum_{i=1}^o \lambda_i = 1 \land \lambda_i \geq 0 \, , \, \forall i \in \{1, \ldots, o\}$}{
        $\mathbf{V} \gets [\mathbf{v}[1], \ldots, \mathbf{v}[o] \, | \, \mathbf{0}]$ %\CommentSty{\% concatenating}
        \tcc{Adding the origin to the vertex set}
    }
 
    \textbf{return} $V$

    }
    \caption{Origin Search (OS)}
    \label{alg:origin_search}
\end{algorithm}

\subsubsection*{Separating points to their respective orthants}

We present in this section the process of splitting the vertex set $\mathcal{V}$ into several sets, each associated with a different orthant.
In other words, we split $\mathcal{V}$ in such a way that each resulting set contains points located in a single orthant (observe that the convex hull of each set of vertices represents the partition of $\mathcal{P}$ that is inside an orthant).
We begin by computing:
   %%%
\begin{equation}
    \mathbf{b} = \frac{1}{2}({\tt sign}(\mathbf{v}_i) + \mathbf{1}_{|\mathbf{v}_i|})
    \label{eq:sign_separation}
\end{equation}
where, $\mathbf{v}_i$ is the vertex under analysis, ${\tt sign}: \mathbb{R}^n \rightarrow \mathbb{R}^n$ is the sign mapping previously presented and $\mathbf{1}_{|\mathbf{v}_i|}$ is a vector with all the components equal to one and has size $|\mathbf{v}_i|$.

The elements of $\mathbf{b}$ with value equal to $1$ are associated with a positive element of $\mathbf{v}_i$, the elements equal to $0$ are associated with negative components of the vertex, and those components of $\mathbf{b}$ equal to $1/2$ are associated with the null components of the vertex. Notice that a null component of a vertex means that this point belongs to at least two different orthants (the origin being a special case inside all the orthants).

% For those cases where the component of $\mathbf{b}$ is equal to $1/2$, we must guarantee that the associated vertex $\mathbf{v}_i$ is properly inserted into each of those sets that are associated with the orthants it belongs to (i.e., if $\mathbf{v}_i$ has one null component, it must be inserted in two sets).  
% Let $\beta = \sum_{i=1}^n \delta(b_i)$, where $\delta:\mathbb{R} \rightarrow \mathbb{R}$ is given by Equation \eqref{eq:delta}:
% \begin{equation}
%     \delta(b_i) = 
% \begin{cases}
%     1 ,& \text{if } b_i = 1/2\\
%     0 ,& \text{otherwise}
% \end{cases}
% \label{eq:delta}
% \end{equation}

% \eric{
% If $\beta$ is greater than zero, at least one component of $\mathbf{v}_i$ is null, requiring that this point be placed into the sets associated with each orthant it is located in. 
% Then, each component of $\mathbf{b}$ is checked to identify the components that are equal to $1/2$. 
% For those components that the check is affirmative, two copies of $\mathbf{v}_i$ must be created. This verification process is repeated until all components of $\mathbf{b}$ have been checked. This process is detailed in Algorithm \ref{alg:zeros_verification}.}

For those cases where the component of $\mathbf{b}$ is equal to $1/2$, we must guarantee that the associated vertex $\mathbf{v}_i$ is properly inserted into each of those sets that are associated with the orthants it belongs to (i.e., if $\mathbf{v}_i$ has one null component, it must be inserted in two sets).  
Then, the algorithm starts a verification process to identify those component of $\mathbf{b}$ that are equal $1/2$.
In case it is equal true, two copies of $\mathbf{v}_i$ must be created.
This verification process is repeated until all components of $\mathbf{b}$ have been checked. This process is detailed in Algorithm \ref{alg:zeros_verification}.

\begin{algorithm}[htb!]
\SetAlgoLined
    \textbf{Input:} $\mathbf{b} \in [0,1]^n$
    
    % \textbf{ZV}($\mathbf{b}$)
    \SetKwFunction{FMain}{$\mathbf{ZV}$}
    \SetKwProg{Fn}{Function}{:}{}
    \Fn{\FMain{$\mathbf{b}$  }}
    {
    
    $\mathcal{A} \gets \{\mathbf{b}\}$
    
    $\mathcal{B} \gets \{\mathbf{b}\}$ 
    
    \While{$|\mathcal{B}| \neq 0$}{

        % $\mathcal{A} \gets \mathcal{B}$
    
        $\mathcal{B} \gets \emptyset$
    
        \For{$\mathbf{a} \in \mathcal{A}$}{
        
            \For{$i \in \{1, \ldots, |\mathbf{a}|\}$}{

                \tcc{Check for those components that are equal $1/2$}
                \uIf{$a[i] = 1/2$}{ 
                
                    $\mathbf{b} \gets \mathbf{a}$
                    
                    $b[i] \gets 1$
                    
                    $\mathcal{B} \gets \mathcal{B} \cup \mathbf{b}$ \tcc{Insert the first copy into $\mathcal{B}$}
                    
                    $\mathbf{b} \gets \mathbf{a}$
                    
                    $b[i] \gets 0$
                    
                    $\mathcal{B} \gets \mathcal{B} \cup \mathbf{b}$ \tcc{Insert the second copy into $\mathcal{B}$}
                    
                    \textbf{break}
                
                }
                
            }
        
        }
        
        \uIf{$|\mathcal{B}| \neq 0$}{
        
            $\mathcal{A} \gets \mathcal{B}$
        
        }
    
    }
    
    \textbf{return} $\mathcal{A}$ \tcc{return a set of vectors}

    }
\caption{Zeros Verification (ZV)}
\label{alg:zeros_verification}
\end{algorithm}

After checking for null values in $\mathbf{v}_i$, the sets in which it must be placed must be defined. It should be noted that at the end of this process, $\mathbf{b}$ is a binary vector that represents the index in which $\mathbf{v}_i$ must be placed (there might be more than one $\mathbf{b}$ for a single vertex). Equation \eqref{eq:decode_binary} presents the process for converting $\mathbf{b}$ into a decimal number $q$. The calculation of the index is presented in Algorithm \ref{alg:array_position}.
\begin{equation}
    q = \sum_{i=1}^n b_i 2^{i-1}
    \label{eq:decode_binary}
\end{equation}

\begin{algorithm}[htb!]
\SetAlgoLined
    \textbf{Input:} $\mathbf{b} \in \{0,1\}^m$
    
    % \textbf{AP}($\mathbf{b}$)
    \SetKwFunction{FMain}{$\mathbf{AP}$}
    \SetKwProg{Fn}{Function}{:}{}
    \Fn{\FMain{$\mathbf{b}$  }}
    {
    
    let $q$ be an integer
    
    $q \gets 0$
    
    \For{$i \in \{1, \ldots, n\}$}{
    
        $q \gets q + b[i] 2^{i-1}$
    
    }
    
    \textbf{return} $q$

    }
\caption{Get Array Position (AP)}
\label{alg:array_position}
\end{algorithm}

Figure \ref{fig:vertex_separation_process} illustrates a toy example of how this process works for a single vertex $\mathbf{v}_i$. In this example, $\mathbf{v}_i = (-2, 0)$, with the second component being null. As a result, $\mathbf{b} = (0,\frac{1}{2})$.
Next, two copies of $\mathbf{b}$ are created. The components of the first (second) copy that are equal to $\frac{1}{2}$ are replaced by $0$ ($1$).
In this case, as there was only one null component, the generated vectors are $(0,0)$ and $(0,1)$.
The vector generated from $\mathbf{b}$ contains the binary representation of the indices of the sets that $\mathbf{v}_i$ must be placed in. In this example, $\mathbf{v}_i$ must be placed in the orthants with indices $0$ and $2$.
It is important to note that the indices used by the algorithm to identify the destination sets are different from the traditional orthant enumeration (2$^{\tt nd}$ and 3$^{\tt rd}$ quadrants in this case).

\begin{figure}[htb!]
    \centering
    \includegraphics[scale=0.50]{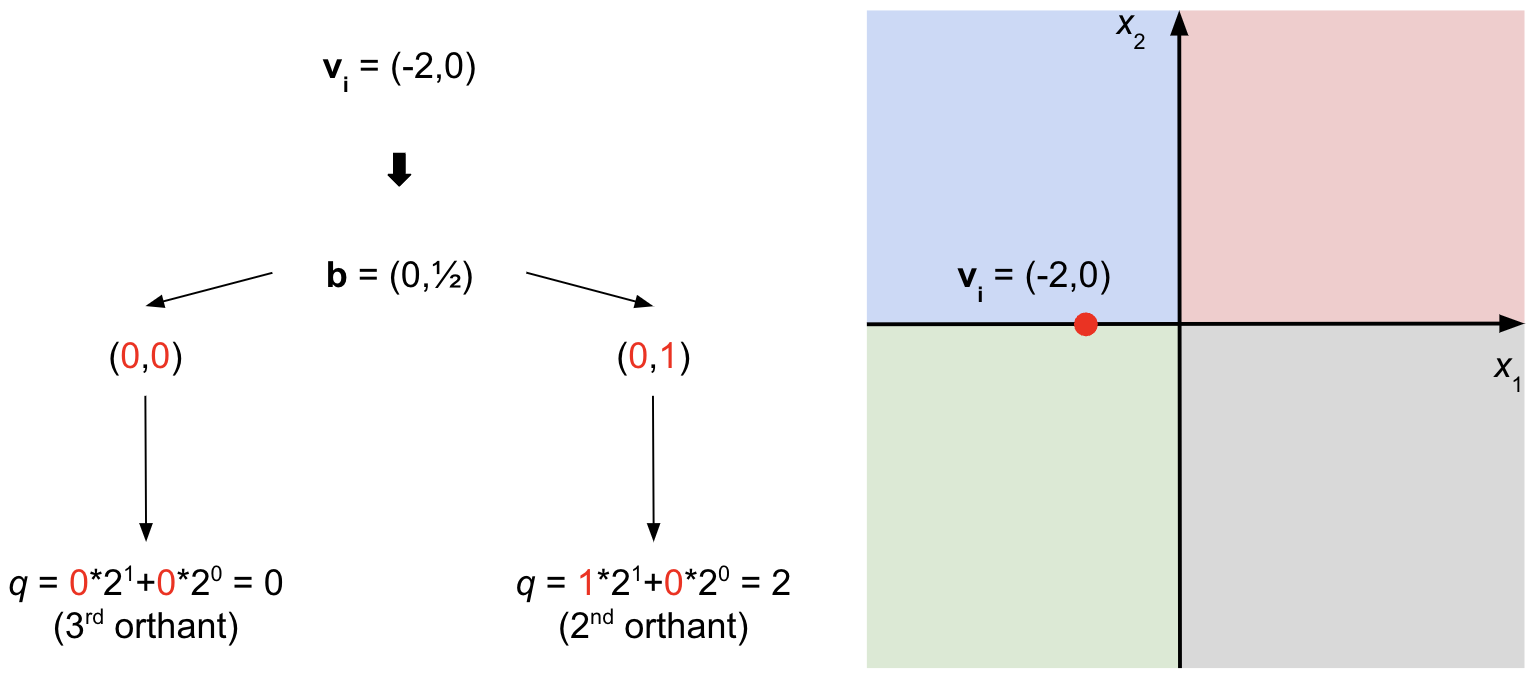}
    \caption{Vertex separation process for a single vertex. Given a vertex $\mathbf{v}_i = (-2, 0)$, the associated $\mathbf{b}$ is given by $(0,\frac{1}{2})$. As we have that there is a component of $\mathbf{b}$ equal to $\frac{1}{2}$, we split it into two vectors in such a way that the component with value equal to $\frac{1}{2}$ is substituted by $1$ and $0$ in the newly created vectors. These new vertices contain the binary representation of the orthant's index that $\mathbf{v}_i$ belong to. In the presented example, $\mathbf{v}_i$ belong to 2 different orthants (the blue and the green ones).}
    \label{fig:vertex_separation_process}
\end{figure}

The orthant separation complete process is computed accordingly by Algorithm \ref{alg:orthant_separation}.

\begin{algorithm}[htb!]
\SetAlgoLined
    \textbf{Input:} $\mathbf{V} \in \mathbb{R}^{o \times n}$, $\mathbf{V} = [\mathbf{v}[1], \ldots, \mathbf{v}[o]]$, $\mathbf{v}[i] \in \mathbb{R}^n \, , \, \forall i \in \{1, \ldots, o\}$
    
    \SetKwFunction{FMain}{$\mathbf{SP}$}
    \SetKwProg{Fn}{Function}{:}{}
    \Fn{\FMain{$\mathbf{V}$  }}
    {
    
    let $\mathbf{Z}[1 \ldots 2^n]$ be an array
    
    \For{$i \in \{1, \ldots, n\}$}{
    
        let $\mathbf{b}[1, \ldots, n]$ be an array
    
        $\mathbf{b} \gets \frac{1}{2}({\tt sign}(\mathbf{v}[i]) + \mathbf{1}_{|\mathbf{v}[i]|})$

        \CommentSty{\% identify if there are null coordinates}

            $\mathcal{B} \gets \textbf{ZV}(\mathbf{b})$ \tcc{Identify the orthants to which $v[i]$ belongs}
            
            \For{$\mathbf{b'} \in \mathcal{B}$}{
                
                $q \gets \textbf{AP}(\mathbf{b'})$  \tcc{Calculate index position of orthant $\mathbf{b}'$ to which $\mathbf{v}[i]$ belongs}
                
                \uIf{$\mathbf{Z}[q] = \emptyset$}{
                
                    $\mathbf{Z}[q] \gets \mathbf{v}[i]$
                
                } \uElse{
                    
                    $\mathbf{Z}[q] \gets \left [\mathbf{Z}[q] \, | \, \mathbf{v}[i]\right ]$ \tcc{Appending the new vertex}
                
                }
            
            }
    
    }
    
    \textbf{return} $\mathbf{Z}$

    }
\caption{Separate points per orthant (SP)}
\label{alg:orthant_separation}
\end{algorithm}

Continuing the example from the previous subsection, the orthant separation performed by Algorithm \ref{alg:orthant_separation} will divide the polytope illustrated in Figure \ref{fig:intersection_identification} into four separate polytopes, one for each orthant, as shown in Figure \ref{fig:orthant_separation}. It is worth noting that vertices $\mathbf{v}_5$, $\mathbf{v}_6$, $\mathbf{v}_7$, $\mathbf{v}_8$, and $\mathbf{v}_9$ have been placed in more than one set, as they are located in more than one orthant.

\begin{figure}[!ht]
    \centering
    \includegraphics[scale=0.3]{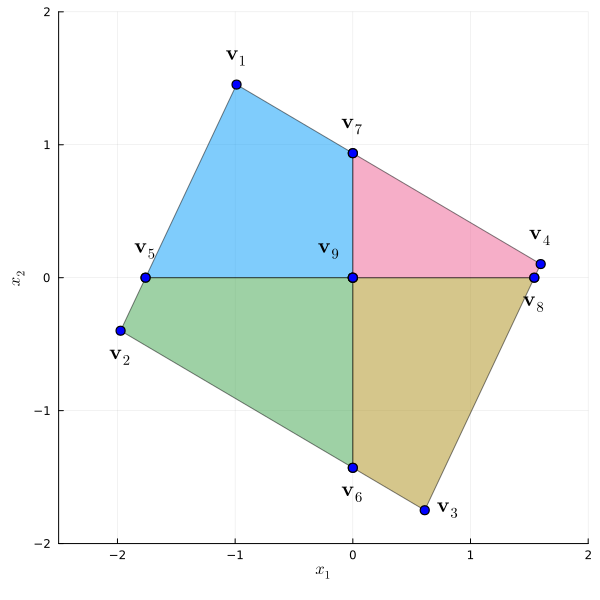}
    \caption{Representation of the separation of the polytope $\mathcal{P}$ after the application of the affine map.}
    \label{fig:orthant_separation}
\end{figure}

\subsubsection*{Exact Polytope Network Mapping}

The layer mapping and the complete process for the Exact Polytope Network Mapping is formalized in Algorithm \ref{alg:layer_map_2}. The algorithm takes as input the vertices of the input polytope aimed to be verified. 
Then, it computes the affine map of these vertices (calculated by the algorithm AM) and identifies the edges between adjacent vertices (with the algorithm EI). 
For the vertices connected by an edge, the algorithm computes the intersection of the corresponding edge with each orthant's supporting hyperplane, when those vertices are not in the same orthant (by means of the algorithm II), and verifies whether or not the origin belongs to the polytope under analysis (checked by the algorithm OS).
Next, it separates all of these points in different sets, where each set contains those vertices that belong to a single orthant (computed by the algorithm SP). 
Finally, the algorithm performs the $ReLU$ mapping and removes non-vertices for each partition generated. 
As later presented, the $ReLU$ mapping preserves the convexity inside a given orthant, though there are points that are not vertices after the application of the non-linear mapping (check vertex $\mathbf{v}_3$ in Figure \ref{fig:relu_map}).

\begin{algorithm}[hbt!]
\SetAlgoLined
    \textbf{Input:} $\mathbf{V} \in \mathbb{R}^{o \times n}$, \tcc{vertices of the polytope} 
    \qquad \quad 
    $\mathbf{W}_l$ and $\boldsymbol{\theta}_l$ for $l \in \{1, \ldots, L\}$ \tcc{network layers' weights and biases}

    \SetKwFunction{FMain}{$\mathbf{EPNM}$}
    \SetKwProg{Fn}{Function}{:}{}
    \Fn{\FMain{$\mathbf{V}, \mathbf{W}, \boldsymbol{\theta}$  }}
    {   
    
    let $\mathcal{P}$ be an empty set
    
    $\mathcal{P} \gets \{\mathbf{V}\}$ 
    \tcc{The list of polytopes's vertices; starting with the input polytope}
    
    \For{$l \in \{1, \ldots, L\}$}{ 
    
        $\hat{\mathcal{P}} \gets \emptyset$ \tcc{$\hat{\mathcal{P}}$ is the auxiliary set that represents $\mathcal{P}$ in the next layer}
        
        \For{$\mathbf{Z} \in \mathcal{P}$}{ 
        \tcc{iterate for each set $\mathcal{Z}$ of vertices in $\mathcal{P}$}
        % \tcc{iterate for each set of vertices, where $\mathbf{Z}$ is the set of vertices under consideration}
        
            \uIf{$l > 1$}{
            
                $\mathbf{Z} \gets \textbf{ReLU}(\mathbf{Z})$ \tcc{Apply ReLU mapping to the vertex set $\mathbf{Z}$}
                
                $\mathbf{Z} \gets \textbf{RP}(\mathbf{Z})$ \tcc{Remove non-vertex points from vertex set $\mathbf{Z}$}
            
            }
        
            $\mathbf{Z} \gets \textbf{AM}(\mathbf{Z}, \mathbf{W}_l, \boldsymbol{\theta}_l)$ \tcc{Perform affine mapping, $\mathbf{Z} \gets \mathbf{W}_l\mathbf{Z} + \boldsymbol{\theta}_l$}
        
            \uIf{$l < L$}{
            
                $\mathbf{E} \gets \textbf{EI}(\mathbf{Z})$ \tcc{Compute edge-skeleton $\mathbf{E}$ of the polyhedron given by vertices $\mathbf{Z}$}
            
                $\mathbf{Z} \gets \textbf{II}(\mathbf{Z}, \mathbf{E})$ \tcc{Add edge intercepts with orthant's supporting hyperplanes to vertex set}
                
                $\mathbf{Z} \gets \textbf{OS}(\mathbf{Z})$ \tcc{Add origin to the vertex set if needed}
                
                $\mathbf{Z} \gets \textbf{SP}(\mathbf{Z})$ \tcc{Transform the vertex set into a list of vertices for each orthant, considering the orthants to which each vertex belongs}
            
            }
            
            \For{$k \in \{1, \ldots, |\mathbf{Z}|\}$}{
              \uIf{$ \mathbf{Z}[k]\neq \emptyset $}{
                $\hat{\mathcal{P}} \gets \hat{\mathcal{P}} \cup \mathbf{Z}[k]$ 
              }
            }
        
        }
        
        $\mathcal{P} \gets \hat{\mathcal{P}}$
    
    }
 
    \textbf{return} $\mathcal{P}$

    }
\caption{Exact Polytope Network Mapping}
\label{alg:layer_map_2}
\end{algorithm}

Finally, after the application of the $ReLU$ mapping and the removal of non-vertices (RP) for each partition in the set, as illustrated in Figure \ref{fig:orthant_separation}, the result of the EPNM algorithm for a single layer mapping is shown in Figure \ref{fig:relu_map_2}. 
Observe that this output, which is a union of sets, represents the mapping of a unique layer. 
For each of the sets that result from the previous layer, the algorithm computes the exact mapping for the next layer. 
This process will repeat until the algorithm maps all the sets to the last layer of the neural network.
For instance, considering that the visual problem has one extra layer, each of the four output sets (one set for each orthant, as the intersection of $\mathcal{P}$ with each orthant is not empty) will be mapped to the next layer following the same described process.

\begin{figure}[!ht]
    \centering
    \includegraphics[scale=0.3]{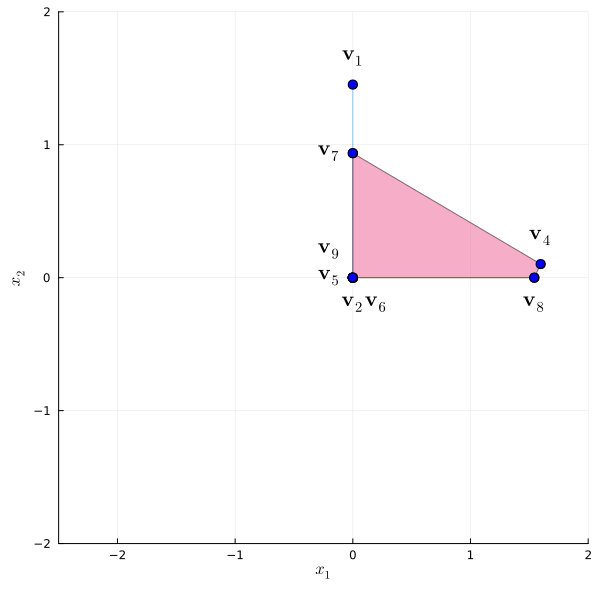}
    \caption{Representation of the output reachable set for a single layer of the EPNM algorithm.}
    \label{fig:relu_map_2}
\end{figure}

\subsection{Partially approximated polytope network mapping (PAPNM)}

We have proposed a third approach, which involves a slight modification of the EPNM algorithm. Instead of an exact mapping between layers, this approach allows for the merging of some of the resulting sets. By computing the convex hull of the union for some of the output sets, we aim to achieve a more accurate approximation of the output set (compared to APNM) while also reducing the execution time of the algorithm (compared to EPNM). The result is presented in Algorithm \ref{alg:layer_map_3}.

\begin{algorithm}
\SetAlgoLined
    \textbf{Input:} $\mathbf{V} \in \mathbb{R}^{o \times n}$, \tcc{vertices of the polytope} 
    \qquad \quad 
    $\mathbf{W}_l$ and $\boldsymbol{\theta}_l$ for $l \in \{1, \ldots, L\}$ \tcc{network layers' weights and biases}
    \qquad \quad 
    \tcc{$d$ is the size of each group of sets that will be merged}

    \SetKwFunction{FMain}{$\mathbf{PAPNM}$}
    \SetKwProg{Fn}{Function}{:}{}
    \Fn{\FMain{$\mathbf{V}, \mathbf{W}, \boldsymbol{\theta}, d$  }}
    {
    
    let $\mathcal{P}$ be an empty set
    
    $\mathcal{P} \gets \{\mathbf{V}\}$
    
    \For{$l \in \{1, \ldots, L\}$}{
    
        $\hat{\mathcal{P}} \gets \emptyset$
        
        \For{$\mathbf{Z} \in \mathcal{P}$}{
        
            \uIf{$l > 1$}{
            
                $\mathbf{Z} \gets \textbf{ReLU}(\mathbf{Z})$
                
                $\mathbf{Z} \gets \textbf{RP}(\mathbf{Z})$
            
            }
        
            $\mathbf{Z} \gets \textbf{AM}(\mathbf{Z}, \mathbf{W}_l, \boldsymbol{\theta}_l)$
        
            \uIf{$l < L$}{
            
                $\mathbf{E} \gets \textbf{EI}(\mathbf{Z})$
            
                $\mathbf{Z} \gets \textbf{II}(\mathbf{Z}, \mathbf{E})$
                
                $\mathbf{Z} \gets \textbf{OS}(\mathbf{Z})$
                
                $\mathbf{Z} \gets \textbf{SP}(\mathbf{Z})$
                
                $\mathbf{Z} \gets \textbf{MS}(\mathbf{Z}, d)$ \tcc{new function for merging sets}
            
            }
            
            \For{$k \in \{1, \ldots, |\mathbf{Z}|\}$}{
            
                $\hat{\mathcal{P}} \gets \hat{\mathcal{P}} \cup \mathbf{Z}_k$
            
            }
        
        }
        
        $\mathcal{P} \gets \hat{\mathcal{P}}$
    
    }
 
    \textbf{return} $\mathcal{P}$

    }
\caption{Partially approximate polytope network mapping}
\label{alg:layer_map_3}
\end{algorithm}

The main difference between Algorithm \ref{alg:layer_map_2} and Algorithm \ref{alg:layer_map_3} is the application of the $MS$ procedure after the separation process. 
The $MS$ algorithm comprises the merging procedure, where a set of sets of vertices is given as input. 
As each set of vertices represents a single polytope, this procedure merges some of these sets of vertices to produce fewer sets compared to the exact mapping. 
It is worth noting that the output of $MS$ is an overapproximation of the input set it receives, and that the extreme case in which a single set of vertices is computed is exactly the case of APNM.

Here, we propose a basic approach for the merging process in Algorithm \ref{alg:merge_sets}. 
The $MS$ algorithm takes the set $\mathcal{P}$ which contains the sets of vertices and an integer $d$ as inputs. 
The sets of vertices are then grouped into sets of size $d$ and each group is merged (i.e, for the case where $\mathcal{P}$ has 13 sets and $d=3$, there will be 4 groups of 3 sets and 1 group of 1 set). 
It is important to note that this is just a simple example of the merging procedure and that different heuristics can be implemented to improve this process.

\begin{algorithm}[!ht]
\SetAlgoLined
    \textbf{Input:} 
    $\mathcal{P} = \{\mathbf{V}_1, \ldots, \mathbf{V}_q\}, \mathbf{V}_i \in \mathbb{R}^{o_i \times n}, i \in \{1, \ldots, q\}$ \tcc{set of polytopes' vertices}
    
    \qquad \quad
    $d \in \mathbb{N}$ \tcc{$d$ is the size of each group of sets that will be merged}

    \SetKwFunction{FMain}{$\mathbf{MS}$}
    \SetKwProg{Fn}{Function}{:}{}
    \Fn{\FMain{$\mathcal{P}, d$  }}
    {
    
    let $\mathcal{A}$ be an empty set
    
    \For{$i \in \{1, \ldots, \left \lceil \frac{|\mathcal{P}|}{d} \right \rceil \}$}{
    
        let $\mathbf{B}[1 \ldots n]$ be an array
        
        \For{$j \in \{(i-1)\times d+1, \ldots, \min(i\times d, |\mathcal{P}|) \}$}{
        
            \uIf{$\mathbf{B} = \emptyset$}{
                
                $\mathbf{B} \gets \mathbf{V}[j]$
                
            } \uElse{
                
                $\mathbf{B} \gets (\mathbf{B} | \mathbf{V}[j])$
                
            }
        
        }
        
        $\mathcal{A} \gets \mathcal{A} \cup \mathbf{B}$
    
    }
    
    \textbf{return} $\mathcal{A}$

    }
\caption{Merge Sets (MS)}
\label{alg:merge_sets}
\end{algorithm}

The use of this merging procedure allows for different levels of approximation, resulting in a range of possible approximations from the exact mapping (EPNM) to the coarser case (APNM). It is important to ensure that the merging procedure returns an overapproximation of the exact mapping of each layer, which is a necessary condition to ensure the soundness of PAPNM.

%%%%%%%%%%%%%%%%%%%%%%%%%%%%%%%%%%%%%%%%
%%%%%%%%%%%%%%%%%%%%%%%%%%%%%%%%%%%%%%%%
\section{Demonstrations}
\label{sec:demonstrations}

We present in this section demonstrations that provide theoretical guarantees for the correctness of each proposed algorithm.

\subsection{Identification of adjacent vertices}

Let $\mathcal{P} = \left \{\sum_{i=1}^o \lambda_i \mathbf{v}_i \mid \sum_{i=1}^o \lambda_i = 1, \lambda_i \geq 0 \text{ e } \mathbf{v}_i \in \mathcal{V}, \forall i \in \{1, \ldots, o\} \right\}$ be a convex V-polytope defined as the convex combination of its vertices, where $\mathcal{V} = \{\mathbf{v}_1, \ldots, \mathbf{v}_o\}$ is the set of vertices of a polyhedron $\mathcal{P}$.

\begin{deff} Given a vector $\mathbf{c} \in \mathbb{R}^n$  and $\delta = \max \{\mathbf{c}^T \mathbf{x} \mid \mathbf{x} \in \mathcal{P}\}$, we have that $\mathcal{H} = \{\mathbf{x} \mid \mathbf{c}^T \mathbf{x} = \delta\}$ is a supporting hyperplane of $\mathcal{P}$. 
\end{deff}

\begin{deff} $\mathcal{F}$ is a face of $\mathcal{P}$ if $\mathcal{F} = \mathcal{P}$ or $\mathcal{F} = \mathcal{P} \cap \mathcal{H}$ for some supporting hyperplane $\mathcal{H}$. In other words, $\mathcal{F}$ is a face of $\mathcal{P}$ if, and only if, $\mathcal{F}$ is the set of optimal solutions for $\max  \{\mathbf{c}^T \mathbf{x} \mid \mathbf{x} \in \mathcal{P}\}$ for a given $\mathbf{c} \in \mathbb{R}^n$ \cite{schrijver2003combinatorial}.
\end{deff}

\begin{deff} $\mathbf{v}_i$ e $\mathbf{v}_j$ are adjacent vertices if there is a vector $\mathbf{c} \in \mathbb{R}^n$ such that $\mathbf{c}^T \mathbf{v}_i = \mathbf{c}^T \mathbf{v}_j = \max \{\mathbf{c}^T \mathbf{x} \mid \mathbf{x} \in \mathcal{P}\} > \mathbf{c}^T \mathbf{v}_k, \forall k \in \{1, \ldots, o\}, k\neq i \text{ and } k \neq j$ \cite{schrijver2003combinatorial}. 
\label{def:vertices_adj}
\end{deff}

Notice that the face $\mathcal{F}$ is an edge in the particular case stated by Definition \ref{def:vertices_adj}. We propose that:

\begin{proposition} Two extreme point $\mathbf{v}_i, \mathbf{v}_j \in \mathcal{V}$ are adjacent if, and only if, it is not possible to compute the median point, $\Bar{\mathbf{v}} = (\mathbf{v}_i + \mathbf{v}_j)/2$, as a convex combination of the vertices in $\mathcal{V}\setminus\{\mathbf{v}_i\}$ and $\mathcal{V}\setminus\{\mathbf{v}_j\}$.
\label{prop:adjacent_vertices}
\end{proposition}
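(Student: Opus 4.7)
The plan is to prove the two implications separately, using the supporting-hyperplane characterization of adjacency (Definition 3) for one direction and an algebraic affine-dependence argument for the other.

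For the forward direction $(\Rightarrow)$, assume $\mathbf{v}_i$ and $\mathbf{v}_j$ are adjacent, so by Definition 3 there exist $\mathbf{c} \in \mathbb{R}^n$ and $\delta$ with $\mathbf{c}^T \mathbf{v}_i = \mathbf{c}^T \mathbf{v}_j = \delta$ and $\mathbf{c}^T \mathbf{v}_k < \delta$ for all $k \neq i,j$. If $\bar{\mathbf{v}} = \sum_{k\neq i}\lambda_k \mathbf{v}_k$ were a convex combination, then $\delta = \mathbf{c}^T \bar{\mathbf{v}} = \sum_{k\neq i}\lambda_k\mathbf{c}^T\mathbf{v}_k$; since every $\mathbf{c}^T\mathbf{v}_k$ with $k \neq i,j$ is strictly below $\delta$, equality forces $\lambda_j = 1$ and hence $\bar{\mathbf{v}} = \mathbf{v}_j$, contradicting $\mathbf{v}_i \neq \mathbf{v}_j$. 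The case excluding $\mathbf{v}_j$ is symmetric.

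For the reverse direction $(\Leftarrow)$ I would argue by contrapositive: assume $\mathbf{v}_i,\mathbf{v}_j$ are not adjacent and construct a convex combination of $\mathcal{V}\setminus\{\mathbf{v}_i\}$ or of $\mathcal{V}\setminus\{\mathbf{v}_j\}$ equal to $\bar{\mathbf{v}}$. The first step is to produce an alternative representation $\bar{\mathbf{v}} = \sum_k \mu_k \mathbf{v}_k$, with $\sum_k \mu_k = 1$, $\mu_k \geq 0$, and $\mu_{k_0} > 0$ for some $k_0 \neq i,j$. This follows from the standard face-lattice fact that every point of $\mathcal{P}$ lies in the relative interior of a unique face $F$, whose vertex set is exactly the set of vertices that can appear with positive weight in any convex representation of that point. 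Since $[\mathbf{v}_i,\mathbf{v}_j]$ is not a face (non-adjacency), the smallest face $F$ containing $\bar{\mathbf{v}}$ has a vertex $\mathbf{v}_{k_0}$ outside $\{\mathbf{v}_i,\mathbf{v}_j\}$, and a small perturbation of the trivial representation toward $\mathbf{v}_{k_0}$ within $F$ supplies the desired $\mu$.

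With both $\lambda := \tfrac{1}{2}\mathbf{e}_i + \tfrac{1}{2}\mathbf{e}_j$ and $\mu$ representing $\bar{\mathbf{v}}$, define $\nu(t) := (1-t)\lambda + t\mu$. Because $\sum_k(\mu_k - \lambda_k) = 0$, the identities $\sum_k \nu_k(t)\mathbf{v}_k = \bar{\mathbf{v}}$ and $\sum_k \nu_k(t) = 1$ hold for every real $t$. For $t \geq 0$ the entries $\nu_k(t) = t\mu_k$ at indices $k \neq i,j$ remain non-negative. Since $\mu_{k_0} > 0$ forces $\mu_i + \mu_j < 1$, at least one of $\mu_i,\mu_j$ is strictly less than $1/2$, so at least one of $\nu_i(t) = 1/2 + t(\mu_i - 1/2)$ and $\nu_j(t) = 1/2 + t(\mu_j - 1/2)$ is strictly decreasing in $t$. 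Taking $t^{*}$ as the smallest positive $t$ at which one of them vanishes yields a non-negative coefficient vector $\nu(t^{*})$, hence a convex combination equal to $\bar{\mathbf{v}}$ that omits either $\mathbf{v}_i$ or $\mathbf{v}_j$, as required.

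The main obstacle is the first sub-step of the reverse direction: turning the geometric non-adjacency condition (absence of a separating supporting hyperplane) into the algebraic existence of a second convex representation of $\bar{\mathbf{v}}$. Once $\mu$ is in hand, the segment between $\lambda$ and $\mu$ in the coefficient simplex yields the rest mechanically. An alternative that avoids invoking the face lattice would be to argue via LP duality on $\max\{\mathbf{c}^T\mathbf{x} : \mathbf{x}\in\mathcal{P}\}$, showing that infeasibility of both MILP branches in Equation~\eqref{eq:adjascent_vertices} is equivalent to the existence of a $\mathbf{c}$ separating $\{\mathbf{v}_i,\mathbf{v}_j\}$ from the remaining vertices.
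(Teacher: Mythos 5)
Your proof is correct. The forward direction coincides with the paper's: both invoke the supporting-hyperplane characterization of adjacency, though you make explicit the step the paper only asserts, namely that $\mathbf{c}^T\bar{\mathbf{v}}=\delta$ together with $\mathbf{c}^T\mathbf{v}_k<\delta$ for $k\neq i,j$ forces all weight onto $\mathbf{v}_j$ and collapses the putative representation to $\bar{\mathbf{v}}=\mathbf{v}_j$. The non-adjacent direction is where you genuinely diverge. The paper argues set-theoretically: writing $\mathcal{P}_i=\mathrm{conv}(\mathcal{V}\setminus\{\mathbf{v}_i\})$ and $\mathcal{P}_j=\mathrm{conv}(\mathcal{V}\setminus\{\mathbf{v}_j\})$, it observes that either $\bar{\mathbf{v}}\in\mathcal{P}_j$ or $\bar{\mathbf{v}}\in\mathcal{P}\setminus\mathcal{P}_j$, and then claims $\mathcal{P}\setminus\mathcal{P}_j\subseteq\mathcal{P}_i$ on the grounds that the vertices adjacent to $\mathbf{v}_j$ all lie in $\mathcal{V}\setminus\{\mathbf{v}_i,\mathbf{v}_j\}$. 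You instead work in the coefficient simplex: the minimal face containing $\bar{\mathbf{v}}$ must own a third vertex $\mathbf{v}_{k_0}$ (otherwise $[\mathbf{v}_i,\mathbf{v}_j]$ would be an edge, contradicting non-adjacency), which yields a second representation $\mu$ with $\mu_{k_0}>0$; sliding along $\nu(t)=(1-t)\bigl(\tfrac12\mathbf{e}_i+\tfrac12\mathbf{e}_j\bigr)+t\mu$ until the first of $\nu_i,\nu_j$ vanishes then produces the required combination, the critical $t^{*}$ existing because $\mu_i+\mu_j<1$ forces at least one of the two coordinates to decrease. Your route is the more self-contained and rigorous of the two: the paper's inclusion $\mathcal{P}\setminus\mathcal{P}_j\subseteq\mathcal{P}_i$ is itself a non-trivial covering statement that is asserted rather than proved, whereas each step of your construction is directly verifiable. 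The price is an appeal to standard face-lattice facts (minimal faces, relative interiors, and which vertices can carry positive weight), machinery the paper's terser geometric argument avoids invoking explicitly.
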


\begin{proof}[\textbf{Proof}]
Given two vertices $\mathbf{v}_i$ and $\mathbf{v}_j$, we have two possibilities regarding their adjacency: 1) they are adjacent, or else 2) they are not adjacent.

For the first case, as $\mathbf{v}_i$ and $\mathbf{v}_j$ are adjacent, we have by definition that there is a hyperplane $\mathcal{H}_a = \{\mathbf{x} \mid \mathbf{c}^T \mathbf{x} = d \}$ that contains both $\mathbf{v}_i$ and $\mathbf{v}_j$, such that $\mathcal{H}_a \cap \mathcal{P} = \max \{\mathbf{c}^T \mathbf{x} \mid \mathbf{x} \in \mathcal{P}\} > \mathbf{c}^T \mathbf{v}_k, \forall k \in \{1, \ldots, o\}, k\neq i \text{ and } k \neq j$ for a given $\mathbf{c} \in \mathbb{R}^n$.
Therefore, all the remaining extreme points from $\mathcal{P}$, except from $\mathbf{v}_i$ and $\mathbf{v}_j$, are in the open half-space $\mathcal{H}_b = \{\mathbf{x} \mid \mathbf{c}^T \mathbf{x} < d \}$. Consequently, the median point of $\mathbf{v}_i$ and $\mathbf{v}_j$, denoted by $\Bar{\mathbf{v}} = {(\mathbf{v}_i + \mathbf{v}_j)}/{2}$, can not be expressed as a convex combination of $\mathbf{v}_k$, for $k \in \{1, \ldots, o\} \setminus \{i\}$ or $k \in \{1, \ldots, o\} \setminus \{j\}$.

Considering that $\mathbf{v}_i$ and $\mathbf{v}_j$ are not adjacent, let $\mathcal{P}_i$ and $\mathcal{P}_j$ be two polytopes given by the convex combination of the extreme points $\mathcal{V} \setminus \{\mathbf{v}_i\}$ and $\mathcal{V} \setminus \{\mathbf{v}_j\}$, respectively.
   %%%
Hence, there are two distinct possibilities: $\Bar{\mathbf{v}} \in \mathcal{P}_j$ or $\Bar{\mathbf{v}} \not \in \mathcal{P}_j$. For the first case, as $\Bar{\mathbf{v}} \in \mathcal{P}_j$, then we can compute this point as a convex combination of the vertices in $\mathcal{V} \setminus \{\mathbf{v}_j\}$.
   %%%
For the second case, as $\Bar{\mathbf{v}} \not \in \mathcal{P}_j$ and $\Bar{\mathbf{v}} \in \mathcal{P}$, then $\Bar{\mathbf{v}} \in \mathcal{P} \setminus \mathcal{P}_j$. However, since the vertices that are adjacent to $\mathbf{v}_j$ are in $\mathcal{V} \setminus \{\mathbf{v}_i, \mathbf{v}_j\}$, then $\mathcal{P} \setminus \mathcal{P}_j \subseteq \mathcal{P}_i$. From this fact it follows that $\Bar{\mathbf{v}} \in \mathcal{P}_i$ and, consequently, $\Bar{\mathbf{v}}$ can be expressed as a convex combination of the vertices $\mathcal{V} \setminus \{\mathbf{v}_i\}$.
\end{proof}

\subsection{ReLU convexity inside an orthant}

Let $f:\mathbb{R}^n\rightarrow\mathbb{R}^n$ be a function that denotes the $ReLU$ mapping, defined by the Equation \eqref{eq:ReLUMap}:
\begin{equation}
    f(\mathbf{x})_i = \max(0,x_i)
    \label{eq:ReLUMap}
\end{equation}

\begin{proposition} Given $\mathbf{x},\mathbf{y} \in \mathbb{R}^n$, if $\sup\{\gamma \mid \gamma = {\tt sign}(\mathbf{x})_i - {\tt sign}(\mathbf{y})_i, \forall i \in \{1, \ldots, n\} \} \leq 1$, or, in other words, if $\mathbf{x}$ and $\mathbf{y}$ belong to the same orthant, we have that $f(\mathbf{x}+\mathbf{y}) = f(\mathbf{x}) + f(\mathbf{y})$ and that $f(\alpha\mathbf{x}) = \alpha f(\mathbf{x})$.\textbf{}
\label{prop:relu_convexa_por_ortant}
\end{proposition}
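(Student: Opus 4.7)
The plan is to reduce both identities to the one-dimensional scalar ReLU $\phi(t) = \max(0,t)$ by working componentwise, and then use the hypothesis only where it is needed, namely in the additivity identity. The sign condition $\sup_i ({\tt sign}(x_i) - {\tt sign}(y_i)) \leq 1$ (read in absolute value, so that the statement is actually symmetric in $\mathbf{x}$ and $\mathbf{y}$) is exactly the statement that for no coordinate $i$ can $x_i$ and $y_i$ have strictly opposite strict signs; equivalently, for every $i$, either $x_i, y_i \geq 0$ or $x_i, y_i \leq 0$.

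For additivity, I would fix an arbitrary coordinate $i$ and do a two-case analysis. If $x_i \geq 0$ and $y_i \geq 0$, then $x_i + y_i \geq 0$, so $f(\mathbf{x} + \mathbf{y})_i = x_i + y_i = \phi(x_i) + \phi(y_i) = f(\mathbf{x})_i + f(\mathbf{y})_i$. If $x_i \leq 0$ and $y_i \leq 0$, then $x_i + y_i \leq 0$ and both sides collapse to zero. The only case that would break additivity — strictly opposite signs in the same coordinate, e.g.\ $x_i = 1$, $y_i = -2$ giving $\phi(x_i + y_i) = 0 \neq 1 = \phi(x_i) + \phi(y_i)$ — is precisely what the orthant hypothesis excludes. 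Since the identity holds in every coordinate, it holds vectorially.

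For positive homogeneity, no orthant hypothesis is truly needed; what is needed is $\alpha \geq 0$, which is the regime relevant to the convex combinations used throughout the paper. I would simply observe that for $\alpha \geq 0$, multiplication by $\alpha$ commutes with $\max(0, \cdot)$ coordinatewise: $\max(0, \alpha x_i) = \alpha \max(0, x_i)$. This is immediate for $\alpha = 0$ and follows for $\alpha > 0$ because $\alpha x_i$ has the same sign as $x_i$.

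There is no deep obstacle here; the proof is a short case analysis. The only point that warrants care is the convention for points on orthant boundaries (coordinates equal to zero), which lie in several orthants at once: one must read the sign-difference hypothesis as $|{\tt sign}(x_i) - {\tt sign}(y_i)| \leq 1$, so that a shared orthant can always be selected, and one must note explicitly that the homogeneity claim requires $\alpha \geq 0$, since otherwise it already fails in one dimension on any positive input.
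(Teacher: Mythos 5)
Your proof is correct, and it reaches the same conclusion by a somewhat more elementary route than the paper. The paper's proof fixes an orthant $\mathcal{O}_j$ and represents the restriction of $f$ to $\mathcal{O}_j$ as multiplication by a diagonal $0/1$ matrix $\boldsymbol{\Lambda}_j$, after which additivity and homogeneity are inherited from linearity of matrix multiplication; your proof unpacks the same observation coordinatewise, splitting each coordinate into the two cases $x_i, y_i \geq 0$ and $x_i, y_i \leq 0$ and noting that the excluded case (strictly opposite signs) is exactly what the orthant hypothesis forbids. The underlying idea is identical, namely that ReLU acts on a fixed orthant as the identity or the zero map in each coordinate, so neither argument buys substantially more generality. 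What your version does add is two points of care the paper leaves implicit: you read the sign condition symmetrically (as $|{\tt sign}(x_i) - {\tt sign}(y_i)| \leq 1$), which matters for boundary points with zero coordinates, and you state explicitly that the homogeneity identity $f(\alpha \mathbf{x}) = \alpha f(\mathbf{x})$ requires $\alpha \geq 0$ — the paper writes $\alpha \in \mathbb{R}$, but for $\alpha < 0$ the claim fails already in one dimension (and the matrix argument silently breaks because $\alpha \mathbf{x}$ leaves $\mathcal{O}_j$), whereas $\alpha \geq 0$ suffices for the convex combinations the proposition is actually used for.
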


\begin{proof}[\textbf{Proof}] Denoting each orthant of $\mathbb{R}^n$ as $\mathcal{O}_j, \forall j \in \{1, \ldots, 2^n\}$, we can rewrite the $ReLU$ mapping, given by $f$, as $f_j: \mathcal{O}_j \rightarrow \mathbb{R}^n$, such that:
\begin{equation}
    f_j(\mathbf{x}) = \boldsymbol{\Lambda}_j \mathbf{x}
\end{equation}
where $\boldsymbol{\Lambda}_j \in \mathbb{R}^{n \times n}$ is the matrix in which the elements of the principal diagonal associated with a negative component of $\mathbf{x} \in \mathcal{O}_j$ are equal $0$, while the remaining elements are equal $1$ ($\lambda_{k,l}$ for $k=l$). The  elements off the principal diagonal are all equal to zero ($\lambda_{k,l} = 0$ for all $k \neq l$).

Therefore, $f_j$ preserves convexity since it consists of a linear mapping that satisfies both properties stated in the proposition, the addition property:
\begin{align*}
    f_j(\mathbf{x} + \mathbf{y}) &=  \boldsymbol{\Lambda}_j(\mathbf{x} + \mathbf{y})\nonumber \\ 
    &=  \boldsymbol{\Lambda}_j\mathbf{x} + \boldsymbol{\Lambda}_j\mathbf{y}\nonumber \\
    &=  f_j(\mathbf{x}) + f_j(\mathbf{y}) \nonumber
\end{align*}
and the product by a scalar:
\begin{align*}
    f_j(\alpha \mathbf{x}) &=  \boldsymbol{\Lambda}_j(\alpha \mathbf{x}) \nonumber \\ 
      &=  \alpha \boldsymbol{\Lambda}_j \mathbf{x}  \nonumber \\
     &=  \alpha f_j(\mathbf{x}) \nonumber
\end{align*}
where $\alpha \in \mathbb{R}$.

Hence, as $f_j$ satisfies the linearity conditions within each orthant $\mathcal{O}_j$ and the linear mapping preserves convexity, it follows that:
\begin{align*}
    f_j(\theta \mathbf{x} + (1 - \theta) \mathbf{y}) &=  \boldsymbol{\Lambda}_j(\theta \mathbf{x} + (1 - \theta) \mathbf{y}) \nonumber \\ 
    &=  \theta \boldsymbol{\Lambda}_j \mathbf{x} + (1 - \theta) \boldsymbol{\Lambda}_j \mathbf{y} \nonumber \\
    &=  \theta f_j(\mathbf{x}) + (1 - \theta) f_j(\mathbf{y}) \nonumber
\end{align*}
for all $\mathbf{x}, \mathbf{y} \in \mathcal{O}_j$ e $\theta \in [0,1]$. Therefore, the $ReLU$ mapping preserves the convexity inside a given orthant.
  %%%
\end{proof}

%%%%%%%%%%%%%%%%5
\subsection{V-polytope and half-space intersection}

Let $\mathcal{P} = \left\{\sum_{i=1}^o \lambda_i \mathbf{v}_i \mid \sum_{i=1}^o \lambda_i = 1, \lambda_i \geq 0 \text{ and } \mathbf{v}_i \in \mathcal{V}, \forall i \in \{1, \ldots, o\} \right\}$ be a closed convex polyhedron defined in terms of the convex combination of its vertices (or extreme points), where $\mathcal{V} = \{\mathbf{v}_1, \ldots, \mathbf{v}_n\}$ is the set of the vertices of $\mathcal{P}$. Such a polyhedron can also be defined as a set of inequalities $\mathcal{P} = \{\mathbf{x} \mid \mathbf{C} \mathbf{x} \leq \mathbf{d}\}$, such that $\mathbf{C} \in \mathbb{R}^{m \times n}$ and $\mathbf{d} \in \mathbb{R}^m$.

\begin{figure}[!ht]
     \centering
     \begin{subfigure}[b]{0.3\textwidth}
         \centering
         \includegraphics[width=\textwidth]{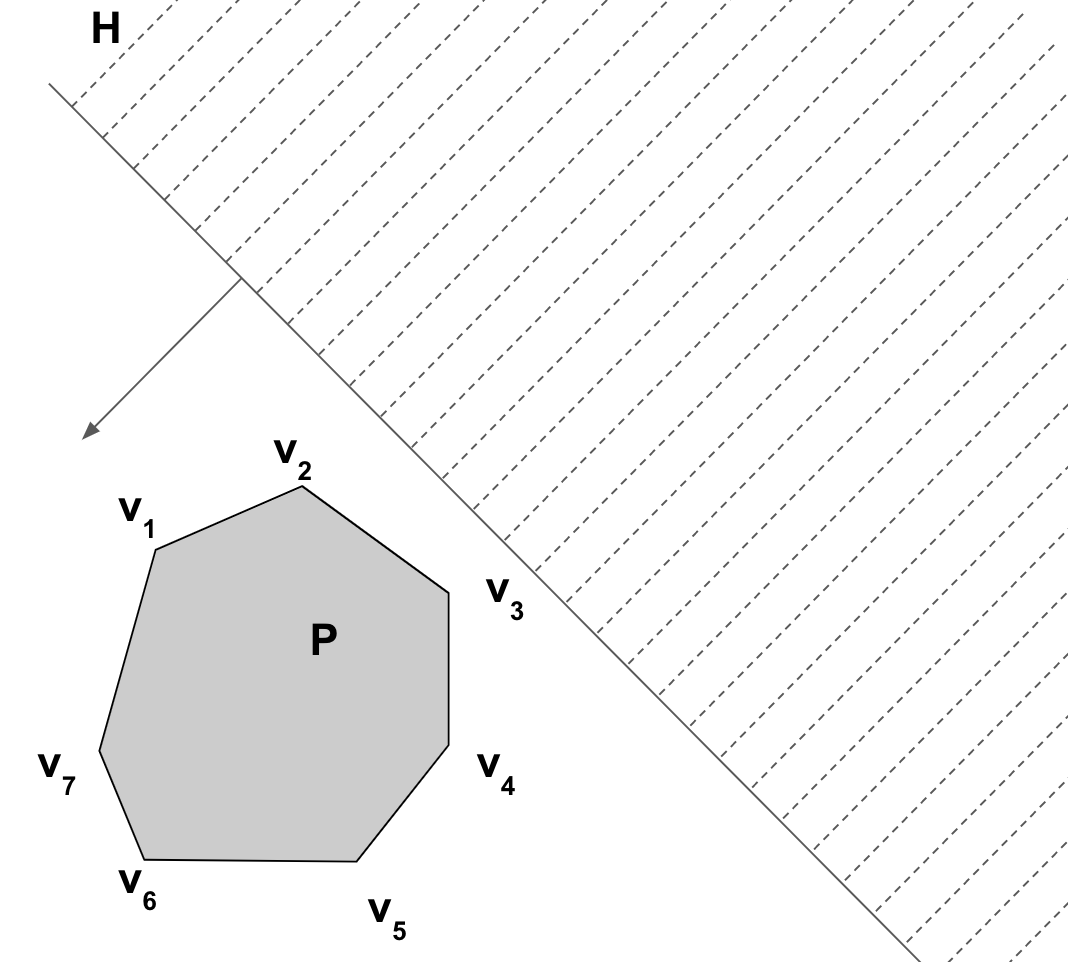}
         \caption{$\mathcal{P} \cap \mathcal{H} = \mathcal{P}$}
         \label{fig:y equals x}
     \end{subfigure}
     \hfill
     \begin{subfigure}[b]{0.3\textwidth}
         \centering
         \includegraphics[width=\textwidth]{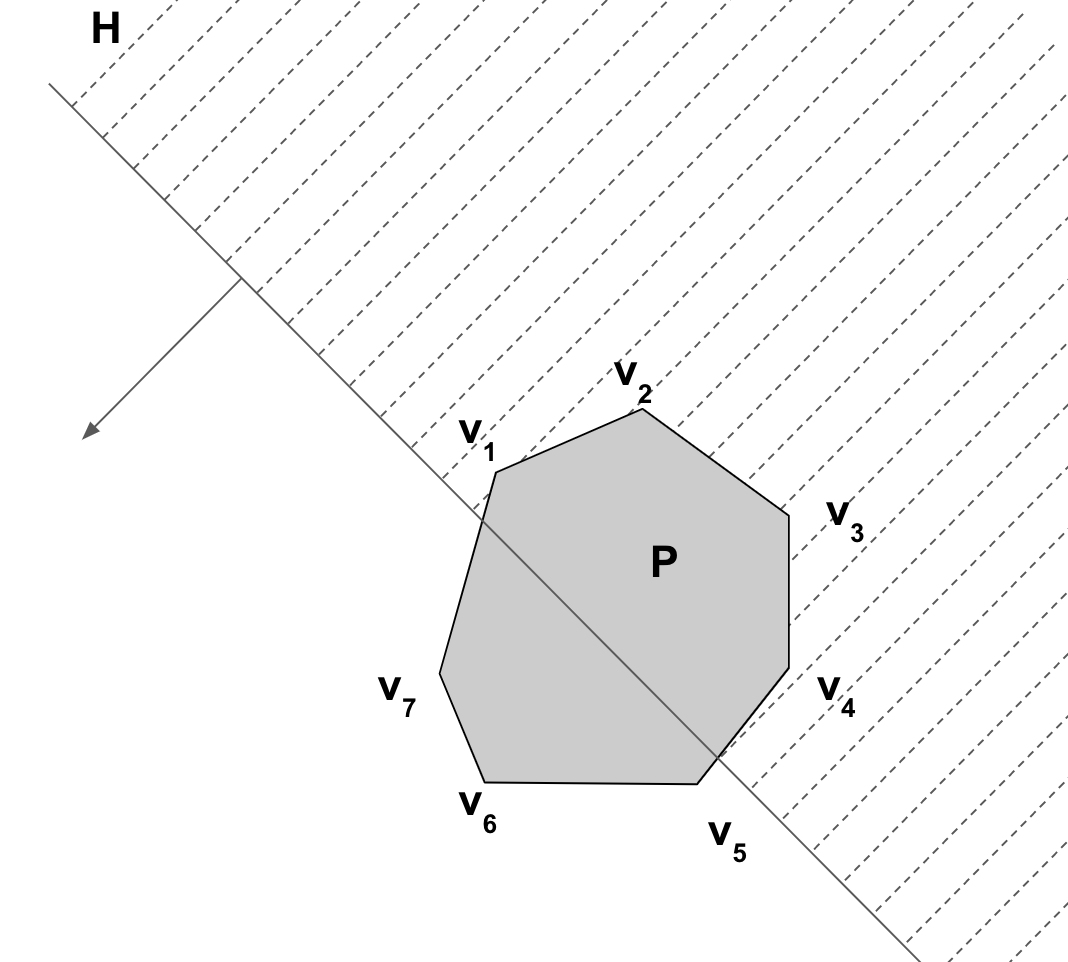}
         \caption{$\mathcal{P} \cap \mathcal{H} \subset \mathcal{P}$}
         \label{fig:three sin x}
     \end{subfigure}
     \hfill
     \begin{subfigure}[b]{0.3\textwidth}
         \centering
         \includegraphics[width=\textwidth]{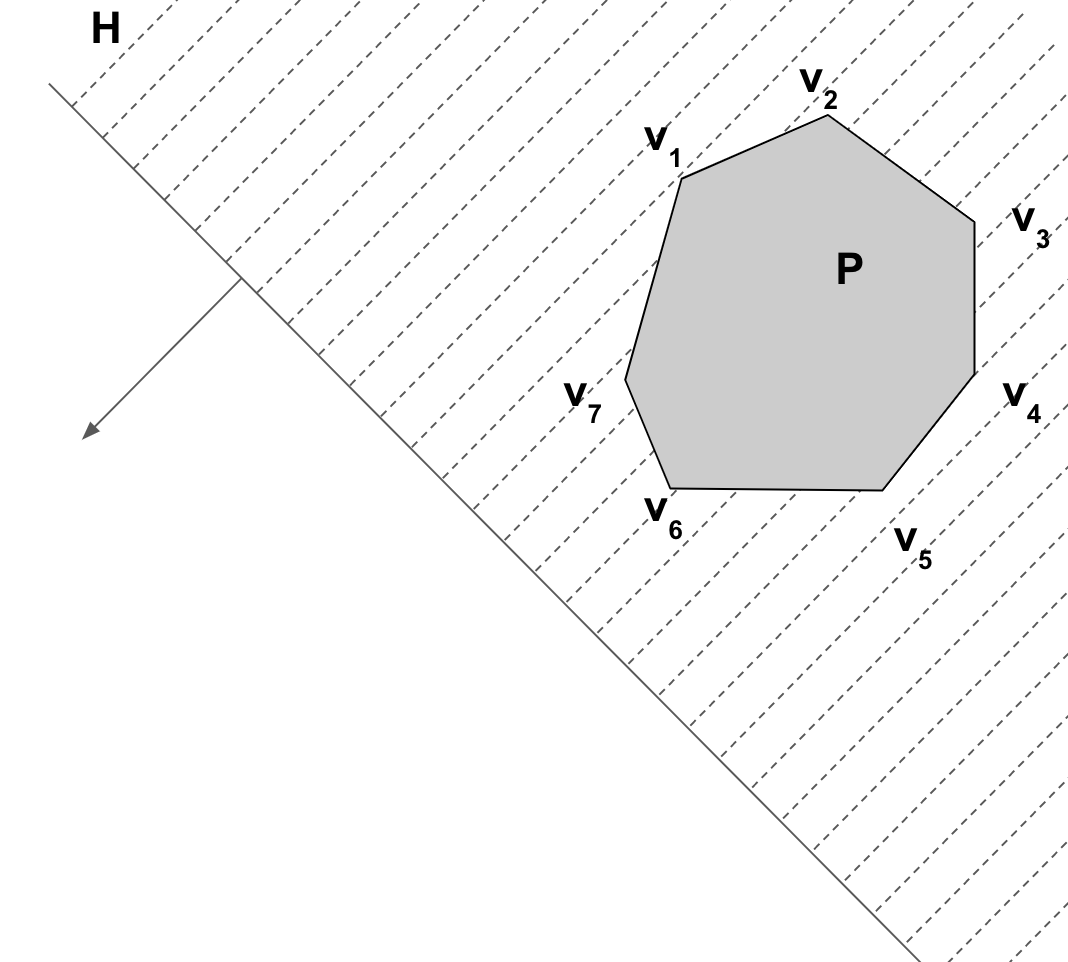}
         \caption{$\mathcal{P} \cap \mathcal{H} = \emptyset$}
         \label{fig:five over x}
     \end{subfigure}
        \caption{Representation of all the three different possible cases with respect to the intersection of $\mathcal{P}$ with a half-space $\mathcal{H}$.}
        \label{fig:three_possibilities}
\end{figure}

There are three different possible cases regarding the intersection of $\mathcal{P}$ with a half-space $\mathcal{H} = \{\mathbf{x} \mid \mathbf{a}^T \mathbf{x} \leq b\}$, where $\mathbf{a} \in \mathbb{R}^n$ and $b \in \mathbb{R}$ (Figure \ref{fig:three_possibilities} presents a visual representation for each case):
\begin{enumerate}
    \item $\mathcal{P} \cap \mathcal{H} = \mathcal{P}$;
    \item $\mathcal{P} \cap \mathcal{H} \subset \mathcal{P}$;
    \item $\mathcal{P} \cap \mathcal{H} = \emptyset$.
\end{enumerate}

The first and the third cases are trivial. For the first case, we have that all the extreme points of $\mathcal{P}$ are in $\mathcal{H}$. For the third case none of the extreme points of $\mathcal{P}$ belongs to $\mathcal{H}$.

For the second case, a new face is generated for $\mathcal{P}$, given by $\mathcal{P}_s = \mathcal{P} \cap \mathcal{H}_s$, where $\mathcal{H}_s = \{\mathbf{x} \mid \mathbf{a}^T \mathbf{x} = b\}$ such that $\mathbf{a} \in \mathbb{R}^n$ and $b \in \mathbb{R}$. Note that $\mathcal{H}_s$ is the supporting hyperplane of $\mathcal{H}$. Therefore, those vertices of $\mathcal{P}$ that are not in $\mathcal{H}$, are not extreme points of $\mathcal{P} \cap \mathcal{H}$. Thus, it is necessary to find those extreme points of the new polytope $\mathcal{P}_h = \mathcal{P} \cap \mathcal{H}$. Figure \ref{fig:structure_vizualization} presents a visualization of the elements previously defined.

\begin{figure}[!ht]
    \centering
    \includegraphics[scale=0.35]{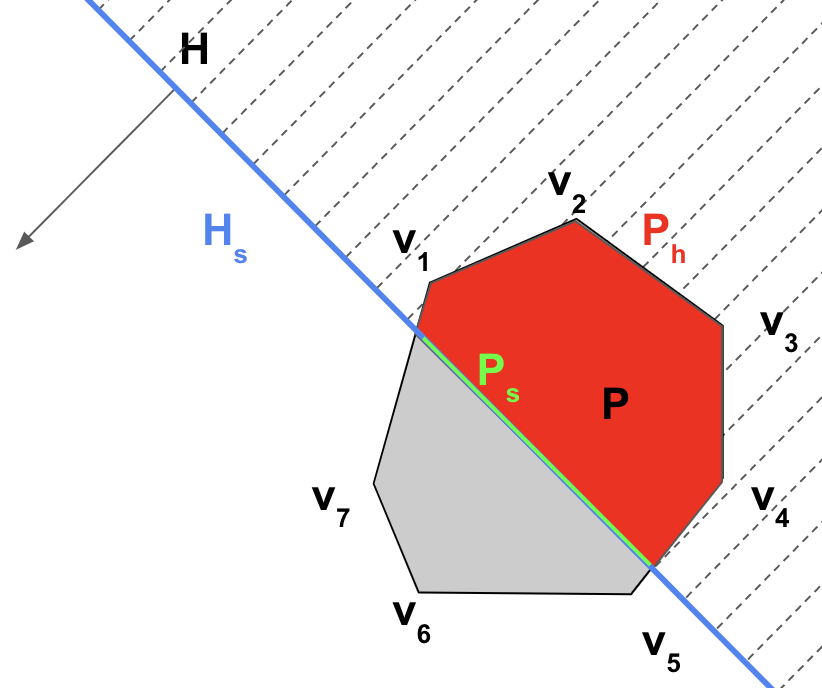}
    \caption{Representation of the elements of interest from the intersection between $\mathcal{P}$ and $\mathcal{H}$ for the second case. The supporting hyperplane $\mathcal{H}_s$ of $\mathcal{H}$ is presented in blue. In green we have the intersection between $\mathcal{H}_s$ and $\mathcal{P}$, and in red the result of the intersection between $\mathcal{P}$ with $\mathcal{H}$.
    Notice that, for this example,  $\mathcal{V}_h = \{\mathbf{v}_1, \mathbf{v}_2, \mathbf{v}_3, \mathbf{v}_4\}$.}
    \label{fig:structure_vizualization}
\end{figure}

We denote by $\mathcal{V}_{h}$ the subset of vertices of $\mathcal{P}$ that belong to $\mathcal{H}$, given by $\mathcal{V}_{h} = \mathcal{V} \cap \mathcal{H}$.
Let $\mathcal{E} = \{\mathcal{E}_1, \ldots, \mathcal{E}_p\}$ be the set of edges of $\mathcal{P}$. $\mathcal{E}_i$ denotes the set of points that belong to the $i$-th edge of $\mathcal{P}$.
Finally, we denote by $\mathcal{V}_p$ the set of vertices from $\mathcal{P}_h$. Observe that $\mathcal{V}_h \subseteq \mathcal{V}_p$.

Let $\mathbf{c}$ be a non-zero vector and $\delta = \max \{\mathbf{c}^T \mathbf{x} \mid \mathbf{C} \mathbf{x} \leq \mathbf{d}\}$. The affine hyperplane $\mathcal{H}_a = \{\mathbf{x} \mid \mathbf{c}^T \mathbf{x} = \delta\}$ is a supporting hyperplane of $\mathcal{P}$.

\begin{deff} A subset $\mathcal{F}$ of $\mathcal{P}$ is called a face of $\mathcal{P}$ if $\mathcal{F} = \mathcal{P}$ or $\mathcal{F} = \mathcal{P} \cap \mathcal{H}_a$ \cite{schrijver1998theory}.
\end{deff}

\begin{deff} $\mathcal{F}$ is a face of $\mathcal{P}$ if, and only if, $\mathcal{F}$ is not empty and $\mathcal{F} = \{\mathbf{x} \in \mathcal{P} \mid \mathbf{C}' \mathbf{x} = \mathbf{d}'\}$, for a subsystem $\mathbf{C}' \mathbf{x} \leq \mathbf{d}'$ of $\mathbf{C} \mathbf{x} \leq \mathbf{d}$ \cite{schrijver1998theory}.
\end{deff}

\begin{proposition} Considering that $\mathcal{P} \cap \mathcal{H} \subset \mathcal{P}$ and given that $\mathcal{V}_p$ is the set of extreme points of $\mathcal{P}_h$, if $\mathbf{v} \in \mathcal{V}_p$ and $\mathbf{v} \not \in \mathcal{V}_h$ then $\mathbf{v} \in \mathcal{P}_s$.
\label{prop:intersection_part1}
\end{proposition}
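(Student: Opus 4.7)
The plan is to argue by contradiction: I would assume $\mathbf{v} \notin \mathcal{P}_s$, i.e., $\mathbf{a}^T \mathbf{v} < b$ strictly, and show that this assumption forces $\mathbf{v}$ to fail to be extreme in $\mathcal{P}_h$.

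First I would unpack the hypotheses. From $\mathbf{v} \in \mathcal{V}_p \subseteq \mathcal{P}_h = \mathcal{P} \cap \mathcal{H}$ we get both $\mathbf{v} \in \mathcal{P}$ and $\mathbf{a}^T \mathbf{v} \leq b$. The hypothesis $\mathbf{v} \notin \mathcal{V}_h = \mathcal{V} \cap \mathcal{H}$, combined with the fact that $\mathbf{v} \in \mathcal{H}$ (which we just observed), forces $\mathbf{v} \notin \mathcal{V}$. In other words, $\mathbf{v}$ lies in $\mathcal{P}$ but is not extreme in $\mathcal{P}$, so it admits a representation $\mathbf{v} = \tfrac{1}{2}(\mathbf{p} + \mathbf{q})$ with $\mathbf{p}, \mathbf{q} \in \mathcal{P}$ and $\mathbf{p} \neq \mathbf{q}$.

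Next I would perturb $\mathbf{v}$ along the segment $[\mathbf{p}, \mathbf{q}] \subseteq \mathcal{P}$. Define $\mathbf{v}_\pm(t) = \mathbf{v} \pm t(\mathbf{q} - \mathbf{p})$; these are convex combinations of $\mathbf{p}$ and $\mathbf{q}$ (hence lie in $\mathcal{P}$) for all $|t| < \tfrac{1}{2}$. Because $\mathbf{x} \mapsto \mathbf{a}^T \mathbf{x}$ is continuous and $\mathbf{a}^T \mathbf{v} < b$ by assumption, for all sufficiently small $t > 0$ both $\mathbf{a}^T \mathbf{v}_+(t) < b$ and $\mathbf{a}^T \mathbf{v}_-(t) < b$. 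Hence $\mathbf{v}_\pm(t) \in \mathcal{P} \cap \mathcal{H} = \mathcal{P}_h$, and $\mathbf{v}$ is the midpoint of two distinct points of $\mathcal{P}_h$, contradicting $\mathbf{v} \in \mathcal{V}_p$. This contradiction yields $\mathbf{a}^T \mathbf{v} = b$, so $\mathbf{v} \in \mathcal{H}_s$ and therefore $\mathbf{v} \in \mathcal{P} \cap \mathcal{H}_s = \mathcal{P}_s$.

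I do not anticipate a real obstacle here; the only delicate point is ensuring that the perturbations stay inside $\mathcal{P}$, which is why I would parameterize along $\mathbf{q} - \mathbf{p}$ rather than use an abstract ball around $\mathbf{v}$ (the latter would require $\mathbf{v}$ to be in the relative interior of $\mathcal{P}$, a fact we do not have). The strict-subset hypothesis $\mathcal{P} \cap \mathcal{H} \subsetneq \mathcal{P}$ is not used explicitly; it simply rules out the trivial case where $\mathcal{H}_s$ does not cut $\mathcal{P}$ and $\mathcal{V}_p = \mathcal{V}_h$, in which the statement holds vacuously.
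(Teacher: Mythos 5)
Your proof is correct, and it takes a genuinely different route from the paper's. The paper works in the H-representation: it characterizes a vertex of $\mathcal{P}_h$ as a $0$-dimensional face determined by $n$ active constraints of the augmented system $\mathbf{C}\mathbf{x}\leq\mathbf{d}$, $\mathbf{a}^T\mathbf{x}\leq b$, and argues that for points of $\mathcal{H}\setminus\mathcal{H}_s$ the new inequality is inactive, so no new nonsingular active subsystem (hence no new vertex) can appear off the hyperplane $\mathcal{H}_s$. You instead argue by contradiction directly from the extreme-point definition: if $\mathbf{a}^T\mathbf{v}<b$, then $\mathbf{v}\notin\mathcal{V}$ forces a midpoint representation $\mathbf{v}=\tfrac{1}{2}(\mathbf{p}+\mathbf{q})$ in $\mathcal{P}$, and the perturbations $\mathbf{v}\pm t(\mathbf{q}-\mathbf{p})$ stay in $\mathcal{P}\cap\mathcal{H}$ for small $t$ by continuity of $\mathbf{x}\mapsto\mathbf{a}^T\mathbf{x}$, contradicting extremality in $\mathcal{P}_h$. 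Your argument is more elementary and self-contained — it avoids the facial/active-set machinery and the somewhat informal step in the paper about "no new solution being possible" — and your care in perturbing along the segment $[\mathbf{p},\mathbf{q}]$ rather than in a full-dimensional ball is exactly the right precaution. What the paper's approach buys in exchange is that the same active-constraint bookkeeping is reused immediately in Proposition \ref{prop:intersection_part2} to identify the new vertices as intersections of edges with $\mathcal{H}_s$, whereas your argument establishes only the membership claim of this proposition. Your closing observation that the hypothesis $\mathcal{P}\cap\mathcal{H}\subset\mathcal{P}$ is not actually needed is also accurate.
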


\begin{proof}[\textbf{Proof}] By definition, an extreme point of a polytope is a $0$-dimensional face, thus:
\begin{equation}
    \mathcal{F} = \{\mathbf{x} \mid \mathbf{C}'\mathbf{x} = \mathbf{d}'\}
\end{equation}
where $\mathbf{C}'\mathbf{x} \leq \mathbf{d}'$ is a subsystem of $\mathbf{C} \mathbf{x} \leq \mathbf{d}$. Remember that $\mathcal{P} = \{\mathbf{x} \mid \mathbf{C} \mathbf{x} \leq \mathbf{d}\}$ and that $\mathcal{H} = \{\mathbf{x} \mid \mathbf{a}^T \mathbf{x} \leq b\}$.
Furthermore, since a vertex $\mathbf{v}$ is the particular case of a face given by the intersection of $n$ hyperplanes, for a $n$-dimensional space, then:
\begin{equation}
    \mathbf{v} = \{\mathbf{x} \mid \mathbf{C}''\mathbf{x} = \mathbf{d}'' \}
\end{equation}
for some $\mathbf{C}'' \in \mathbb{R}^{n \times n}$ and $\mathbf{d}'' \in \mathbb{R}^n$, such that ${\tt det}(\mathbf{C}'') \neq 0$ and $\mathbf{v} \neq \emptyset$.

However, for $\mathbf{x} \in \mathcal{H} \setminus \mathcal{H}_s$, no new solution for the subsystem $\mathbf{C}''\mathbf{x} = \mathbf{d}''$ subsystem of:
\begin{equation}
    \begin{bmatrix}
    \mathbf{C} \\
    \mathbf{a}^T
    \end{bmatrix} \mathbf{x} \leq 
    \begin{bmatrix}
    \mathbf{d} \\
    b
    \end{bmatrix}
\end{equation}
such that ${\tt det}(\mathbf{C}'') \neq 0$ and $\mathbf{v} \neq \emptyset$, is possible. Put in other words, as no constraint was placed in $\mathcal{H} \setminus \mathcal{H}_s$, no new vertex was generated in $\mathcal{P} \cap ( \mathcal{H} \setminus \mathcal{H}_s )$.
% As no new intersections were included in $\mathcal{H} \setminus \mathcal{H}_s$, then no new vertex is increased in $\mathcal{P} \cap ( \mathcal{H} \setminus \mathcal{H}_s )$. 
Consequently, those vertices generated by the intersection with $\mathcal{H}$, in case they exist, must belong to $\mathcal{H}_s$. Hence, if $\mathbf{v} \in \mathcal{V}_p$ and $\mathbf{v} \not \in \mathcal{V}_h$, then $\mathbf{v} \in \mathcal{P}_s$.
   %%%
\end{proof}

\begin{proposition} Considering that $\mathcal{P} \cap \mathcal{H} \subset \mathcal{P}$, if $\mathcal{V}_p$ is the set of extreme points of $\mathcal{P}_h$, then $\mathcal{V}_p\setminus \mathcal{V}_h = \mathcal{E} \cap \mathcal{H}_s$.
\label{prop:intersection_part2}
\end{proposition}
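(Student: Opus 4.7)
The plan is to prove the set equality $\mathcal{V}_p \setminus \mathcal{V}_h = \mathcal{E} \cap \mathcal{H}_s$ by establishing the two inclusions separately, using the standard algebraic characterization of faces in terms of tight constraints: in $\mathbb{R}^n$, a vertex of a polyhedron arises from $n$ linearly independent tight constraints, while an edge (1-dimensional face) arises from a subsystem of tight constraints of rank $n-1$. I will apply this to $\mathcal{P} = \{\mathbf{x} : \mathbf{C}\mathbf{x} \le \mathbf{d}\}$ and to $\mathcal{P}_h = \{\mathbf{x} : \mathbf{C}\mathbf{x} \le \mathbf{d},\ \mathbf{a}^T\mathbf{x} \le b\}$. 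Proposition~\ref{prop:intersection_part1} already puts every new vertex on $\mathcal{H}_s$, so only the link with the edges of $\mathcal{P}$ is at stake.

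For the forward inclusion $\mathcal{V}_p \setminus \mathcal{V}_h \subseteq \mathcal{E} \cap \mathcal{H}_s$, I would take $\mathbf{v} \in \mathcal{V}_p \setminus \mathcal{V}_h$ and use Proposition~\ref{prop:intersection_part1} to get $\mathbf{a}^T \mathbf{v} = b$. Because $\mathbf{v}$ is a vertex of $\mathcal{P}_h$, exactly $n$ linearly independent constraints of the combined system are tight at $\mathbf{v}$. The new row $\mathbf{a}^T\mathbf{x} = b$ must appear among them: otherwise all $n$ tight constraints would come from $\mathbf{C}\mathbf{x} \le \mathbf{d}$, making $\mathbf{v}$ a vertex of $\mathcal{P}$ lying on $\mathcal{H}$, hence in $\mathcal{V}_h$, contradicting the choice of $\mathbf{v}$. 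The remaining $n-1$ tight $\mathcal{P}$-constraints have rank $n-1$ and therefore define a 1-dimensional face of $\mathcal{P}$, i.e.\ some edge $\mathcal{E}_k$ passing through $\mathbf{v}$. Combined with $\mathbf{v} \in \mathcal{H}_s$ this yields $\mathbf{v} \in \mathcal{E}_k \cap \mathcal{H}_s \subseteq \mathcal{E} \cap \mathcal{H}_s$.

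For the reverse inclusion $\mathcal{E} \cap \mathcal{H}_s \subseteq \mathcal{V}_p \setminus \mathcal{V}_h$, let $\mathbf{p} \in \mathcal{E}_k \cap \mathcal{H}_s$. The edge $\mathcal{E}_k$ is cut out by some $(n-1)$-rank subsystem of tight $\mathcal{P}$-constraints at every one of its points; adjoining $\mathbf{a}^T\mathbf{x} = b$, which is tight at $\mathbf{p}$ since $\mathbf{p} \in \mathcal{H}_s$, gives $n$ tight constraints. Under the (generic) assumption that the edge is not entirely contained in $\mathcal{H}_s$, the row $\mathbf{a}$ is linearly independent from the rows of the edge-defining subsystem, so the enlarged system has rank $n$ and pins $\mathbf{p}$ as a 0-dimensional face of $\mathcal{P}_h$, i.e.\ $\mathbf{p} \in \mathcal{V}_p$. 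Moreover, $\mathbf{p} \notin \mathcal{V}_h$, because the row $\mathbf{a}^T$ is strictly needed to achieve rank $n$: only $n-1$ tight $\mathcal{P}$-constraints act at $\mathbf{p}$, so $\mathbf{p}$ is not a vertex of $\mathcal{P}$.

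The delicate step is the reverse inclusion: it implicitly needs transversality, since in the degenerate case where a whole edge of $\mathcal{P}$ lies inside $\mathcal{H}_s$, every interior point of that edge would belong to $\mathcal{E} \cap \mathcal{H}_s$ without being a vertex of $\mathcal{P}_h$. I would address this by arguing that, in the context in which Algorithm~\ref{alg:orthant_intersection} invokes the intersection (edges of $\mathcal{P}$ against the orthant supporting hyperplanes, restricted to edges whose endpoints have opposite signs in some coordinate), the edge-in-hyperplane configuration is ruled out by construction, or can be eliminated by a standard genericity/perturbation argument. Modulo this caveat, the rank-counting argument above closes the equality.
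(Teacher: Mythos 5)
Your proposal is correct and follows essentially the same route as the paper: both characterize a vertex of $\mathcal{P}_h$ as the solution of $n$ tight constraints and split into the case where the new row $\mathbf{a}^T\mathbf{x}=b$ is among them (the remaining $n-1$ tight rows of $\mathbf{C}\mathbf{x}\leq\mathbf{d}$ then cut out an edge of $\mathcal{P}$, so the vertex lies in $\mathcal{E}\cap\mathcal{H}_s$) and the case where it is not (the vertex is already in $\mathcal{V}_h$). You go further than the paper in two respects worth keeping: you actually argue the reverse inclusion $\mathcal{E}\cap\mathcal{H}_s\subseteq\mathcal{V}_p\setminus\mathcal{V}_h$, which the paper asserts without proof after establishing only the forward containment, and you explicitly flag the degenerate configuration of an edge contained in $\mathcal{H}_s$ (and the need for a transversality or genericity assumption), which the paper silently ignores.
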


\begin{proof}[\textbf{Proof}] As $\mathcal{P} \cap \mathcal{H} \subset \mathcal{P}$, then $\mathcal{P}_s$ is a face of $\mathcal{P}_h$. Furthermore, if $\mathcal{F}$ is a face of $\mathcal{P}_s$, then $\mathcal{F}$ is a face of $\mathcal{P}_h$. Therefore, the 0-dimensional faces of $\mathcal{P}_s$ (vertices) are also faces of $\mathcal{P}_h$.

Recall that a vertex is defined by the intersection of $n$ hyperplanes, for a $n$-dimensional space, that $\mathcal{P} = \{\mathbf{x} \mid \mathbf{C} \mathbf{x} \leq \mathbf{d}\}$ and that $\mathcal{H} = \{\mathbf{x} \mid \mathbf{a}^T \mathbf{x} \leq b\}$. Therewith, there are two possible cases for the vertices of $\mathcal{P}_s$:
    %%%
\begin{itemize}
    \item $\mathbf{v}$ is given by $\mathbf{C}'\mathbf{x} = \mathbf{d}'$, where $\mathbf{C}'x \leq \mathbf{d}'$ is a subsystem of $\mathbf{C} \mathbf{x} \leq \mathbf{d}$;
    
    \item $\mathbf{v}$ is given by $\mathbf{C}''\mathbf{x} = \mathbf{d}''$, where:
    \begin{equation}
        \mathbf{C}'' = 
        \begin{bmatrix}
        \mathbf{C}' \\
        \mathbf{a}^T
        \end{bmatrix}, \ 
        \mathbf{d} '' = 
        \begin{bmatrix}
        \mathbf{d}' \\
        b
        \end{bmatrix}
    \end{equation}
\end{itemize}

For the first case, we trivially observe that $\mathbf{v} \in \mathcal{V}_h$. For the second case, as $\mathbf{C}'' \in \mathbb{R}^{n \times n}$ then $\mathbf{C}' \mathbf{x} = \mathbf{d}'$ represents the intersection of $n-1$ hyperplanes, resulting in an edge of $\mathcal{P}$, given by $\mathcal{E}_i = \{\mathbf{x} \mid \mathbf{C}'\mathbf{x} = \mathbf{d}'\}$. Hence, for the case where $\mathcal{P} \cap \mathcal{H} \subset \mathcal{P}$, if $\mathcal{V}_p$ is the set of vertices of $\mathcal{P}_h$, then $\mathcal{V}_p \setminus \mathcal{V}_h = \mathcal{E} \cap \mathcal{H}_s$.
  %%%
\end{proof}

%%%%%%%%%%%%%%%%%%%
\subsection{Origin regarding polytope intersection}

As presented in the previous section, considering that $\mathcal{P}$ is a closed polytope defined by the convex combination of its vertices, the intersection between $\mathcal{P}$ and a half-space $\mathcal{H}$ is given by the convex combination of the vertices from $\mathcal{P}$ in the intersection with $\mathcal{H}$, along with the vertices obtained by the intersection of the supporting hyperplane of $\mathcal{H}$ with the edges of $\mathcal{P}$.

Consequently, the intersection between $\mathcal{P}$ and $\mathcal{O}_j$, where $\mathcal{O}_j$ represents one of the $2^n$ orthants in a $n$-dimensional space, can be rewrite as $\mathcal{P} \cap \mathcal{H}_1 \cap \cdots \cap \mathcal{H}_n$. 
Note that $\mathcal{O}_j = \mathcal{H}_1 \cap \cdots \cap \mathcal{H}_n$ for suitable half-spaces $\mathcal{H}_1,\dots,\mathcal{H}_n$.
 By induction, it can be shown that the vertices of $\mathcal{P} \cap \mathcal{O}_j$ consist of the union of vertices of $\mathcal{P}$ that also belong to $\mathcal{O}_j$ with the vertices given by the intersection of the edges of $\mathcal{P}$ with the supporting hyperplanes of $\mathcal{O}_j$ (denoted as $\mathcal{H}_i, \forall i \in \{1, \ldots, n\}$). We define $\mathcal{O}_j$ as:
\begin{equation}
    \mathcal{O}_j = \{\mathbf{x} \mid \boldsymbol{\Phi}_j \mathbf{x} \leq \mathbf{0}\}
\end{equation}
where $\mathbf{\Phi}_j$ is the suitable matrix given by:
\begin{equation}
    \mathbf{\Phi}_j = 
    \begin{bmatrix}
    \phi_{j,1,1} & 0 & \ldots & 0 & \\
    0 & \phi_{j,2,2} & \ldots & 0 & \\
    \vdots & \vdots & \ddots & \vdots & \\
    0 & 0 & \ldots & \phi_{j,n,n} &
    \end{bmatrix}
\end{equation}
such that $\phi_{j,i,i} \in \{-1, 1\}, \forall i \in \{1, \ldots, n\}$.

We can see that the equation system $\mathbf{\Phi}_j \mathbf{x} = \mathbf{0}$ has the origin as its sole solution for any orthant $j$. Therefore, the unique extreme point of $\mathcal{O}_j$ is the origin for all $j \in \{1, \ldots, 2^n\}$.

\begin{proposition} If the origin belongs to $\mathcal{P}$, then it is an extreme point of $\mathcal{P} \cap \mathcal{O}_j$, for all $j \in \{1, \ldots, 2^n\}$.
\label{prop:pertinencia_origem}
\end{proposition}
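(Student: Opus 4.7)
The plan is to exhibit an explicit linear functional on $\mathbb{R}^n$ that attains its maximum over $\mathcal{P}\cap\mathcal{O}_j$ uniquely at the origin, which by the face definitions stated earlier in the excerpt is exactly the condition for $\mathbf{0}$ to be a vertex (a $0$-dimensional face) of $\mathcal{P}\cap\mathcal{O}_j$. I prefer this supporting-hyperplane route over an ``assume $\mathbf{0}=\theta\mathbf{x}+(1-\theta)\mathbf{y}$'' direct argument because the definition of vertex already used throughout Section \ref{sec:demonstrations} is the optimization-based one.

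First I would observe that the proposition is well-posed: since $\boldsymbol{\Phi}_j\mathbf{0}=\mathbf{0}\leq\mathbf{0}$, the origin lies in every orthant $\mathcal{O}_j$, and together with the hypothesis $\mathbf{0}\in\mathcal{P}$ this gives $\mathbf{0}\in\mathcal{P}\cap\mathcal{O}_j$ for every $j\in\{1,\dots,2^n\}$. Next I would construct the desired functional orthant by orthant: define $\mathbf{c}_j\in\mathbb{R}^n$ by $c_{j,i}=\phi_{j,i,i}$. By the very definition $\mathcal{O}_j=\{\mathbf{x}\mid\boldsymbol{\Phi}_j\mathbf{x}\leq\mathbf{0}\}$ we have $\phi_{j,i,i}x_i\leq 0$ for every $\mathbf{x}\in\mathcal{O}_j$ and every $i$, hence
\begin{equation}
\mathbf{c}_j^T\mathbf{x}\;=\;\sum_{i=1}^n\phi_{j,i,i}x_i\;\leq\;0,\qquad\forall\mathbf{x}\in\mathcal{O}_j,
\end{equation}
with equality if and only if $\phi_{j,i,i}x_i=0$ for all $i$, i.e.\ (since $\phi_{j,i,i}\in\{-1,1\}$) if and only if $\mathbf{x}=\mathbf{0}$.

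Then, because $\mathcal{P}\cap\mathcal{O}_j\subseteq\mathcal{O}_j$ and $\mathbf{0}\in\mathcal{P}\cap\mathcal{O}_j$, it follows that $\max\{\mathbf{c}_j^T\mathbf{x}\mid\mathbf{x}\in\mathcal{P}\cap\mathcal{O}_j\}=0$ and this maximum is attained uniquely at $\mathbf{0}$. Applying the definitions reproduced in Section \ref{sec:demonstrations}, the hyperplane $\mathcal{H}=\{\mathbf{x}\mid\mathbf{c}_j^T\mathbf{x}=0\}$ is a supporting hyperplane of $\mathcal{P}\cap\mathcal{O}_j$, and the face $(\mathcal{P}\cap\mathcal{O}_j)\cap\mathcal{H}=\{\mathbf{0}\}$ is $0$-dimensional, so $\mathbf{0}$ is an extreme point of $\mathcal{P}\cap\mathcal{O}_j$, as required.

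The argument is essentially routine once the right functional is written down, so I do not expect a genuine obstacle; the only conceptual step is recognising that the diagonal structure of $\boldsymbol{\Phi}_j$ makes $\mathbf{c}_j=(\phi_{j,1,1},\dots,\phi_{j,n,n})^T$ simultaneously (i) a valid separating direction for $\mathcal{O}_j$ at $\mathbf{0}$ and (ii) independent of $\mathcal{P}$, so that the same construction works uniformly across all $2^n$ orthants. If one preferred a more self-contained presentation, the same $\mathbf{c}_j$ could instead be fed into the extreme-point characterisation via convex combinations, but the supporting-hyperplane route matches the definitions already established in the paper.
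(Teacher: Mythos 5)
Your proof is correct, but it takes a genuinely different route from the paper's. The paper argues by contradiction using the convex-combination characterization of extreme points: it supposes $\mathbf{0}=\lambda\mathbf{v}+(1-\lambda)\mathbf{u}$ for two points $\mathbf{v},\mathbf{u}\in\mathcal{P}\cap\mathcal{O}_j$ with $0<\lambda<1$, solves componentwise to obtain $v_i=\frac{(\lambda-1)u_i}{\lambda}$, and notes that the negative factor forces $v_i$ and $u_i$ to have opposite signs whenever both are nonzero --- impossible for two points lying in the same orthant, so both points collapse to the origin, a contradiction. You instead exhibit the explicit functional $\mathbf{c}_j=(\phi_{j,1,1},\dots,\phi_{j,n,n})^T$, show it is nonpositive on $\mathcal{O}_j$ and vanishes only at $\mathbf{0}$, and conclude via the optimization-based definition of a face. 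Both arguments are sound and exploit the same underlying fact (a fixed sign pattern cannot be cancelled inside a single orthant); yours has the advantage of being constructive, of matching the supporting-hyperplane definitions already used in Propositions \ref{prop:intersection_part1} and \ref{prop:intersection_part2}, and of making the uniformity over all $2^n$ orthants immediate, since $\mathbf{c}_j$ depends only on $\boldsymbol{\Phi}_j$ and not on $\mathcal{P}$. The paper's version is more elementary and mirrors the convex-combination feasibility tests (Algorithms RP and OS) that the method actually implements. One cosmetic remark: strictly speaking, showing that $\{\mathbf{0}\}$ is an exposed (zero-dimensional) face gives that it is an extreme point, which is the direction you need, so no gap arises; but it is worth being aware that ``exposed point'' and ``extreme point'' coincide here only because $\mathcal{P}\cap\mathcal{O}_j$ is a polytope.
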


\begin{proof}[\textbf{Proof}] Let $\mathcal{P}$ be a convex closed polytope and $\mathcal{O}_j$ a convex cone. It is known that the intersection between $\mathcal{P}$ and $\mathcal{O}_j$ is a closed convex polytope. We have that the vertices from $\mathcal{P}$ that belong to $\mathcal{O}_j$ are also vertices of $\mathcal{P} \cap \mathcal{O}_j$, denoted by $\mathcal{V}_{\mathcal{O}_j}$. Also, we have that $\mathcal{P} \cap \mathcal{O}_j = \mathcal{P} \cap \mathcal{H}_1 \cap \cdots \cap \mathcal{H}_n$.

Based on Proposition \ref{prop:intersection_part2}, we have by induction that the intersection of the edges of $\mathcal{P}$ with the supporting hyperplanes of $\mathcal{O}_j$ are also vertices of $\mathcal{P} \cap \mathcal{O}_j$, denoted by $\mathcal{V}_h$.

For the origin, which is the single vertex of $\mathcal{O}_j$, there are two possible cases:

\begin{enumerate}
    \item the origin is not in $\mathcal{P}$: this is the trivial case in which the origin can not be a vertex of $\mathcal{P} \cap \mathcal{O}_j$, as it is not in $\mathcal{P}$;
    
    \item the origin is in $\mathcal{P}$: in this case, by contradiction we suppose that the origin is not a vertex of $\mathcal{P} \cap \mathcal{O}_j$, denoted by $\mathbf{0}$. Then, there must exist two points $\mathbf{v}, \mathbf{u} \in \mathcal{P} \cap \mathcal{O}_j$, such that:
      %%%%
\begin{equation}
    \mathbf{0} = \lambda \mathbf{v} + (1 - \lambda) \mathbf{u}
\end{equation}
given that $0 < \lambda < 1$, since $\mathbf{0} \neq \mathbf{v}$ and $\mathbf{0} \neq \mathbf{u}$.
    %%%
As the origin is given by $\mathbf{0} = (0,0, \ldots ,0,0)$, there must exist a solution for the equation $0 = \lambda v_i + (1 - \lambda) u_i$ for all $i \in \{1,\ldots,n\}$, such that:
\begin{equation}
    v_i = \frac{(\lambda - 1) u_i}{\lambda} 
\end{equation}

Since $\frac{(\lambda - 1)}{\lambda} < 0$,  the sign of $v_i$ and $u_i$ must by different if $v_i \neq 0$ and $u_i \neq 0$. However, inside a given orthant there must not exist a point with components that have opposite signs. Hence, by contradiction, if the origin is in $\mathcal{P}$, it must be a vertex of $\mathcal{P} \cap \mathcal{O}_j$.
\end{enumerate} 
%%%
\end{proof}

%%%%%%%%%%%%%%%%%%%%%%%%%%%%%%%%%%%%%%%%%%%%%%%%%%55
\subsection{Correctness of APNM algorithm}

Let $F:\mathbb{R}^{n} \rightarrow \mathbb{R}^{m}$ denote a neural network mapping, where $L$ is the number of layers, the weights of the layer $l$ are given by $\mathbf{W}_l \in \mathbb{R}^{|\mathbf{x}_l|\times|\mathbf{x}_{l-1}|}$ and the biases by $\boldsymbol{\theta}_l \in \mathbb{R}^{|\mathbf{x}_l|}$. Thereby,  $F(\mathbf{x}) = (F_{L} \circ F_{L-1} \circ \cdots \circ F_1)(\mathbf{x})$ where $F_l(\mathbf{x}) = ReLU(\mathbf{W}_l \mathbf{x} + \boldsymbol{\theta}_l)$ is the mapping for each layer $l \in \{1, \ldots, L-1\}$. The only difference for the last layer is the activation (usually sigmoid for binary problems, or softmax for multi-class problems).

For a given layer $l$, we have that the set $\mathcal{X}_l$ denotes the inputs that we aim to map regarding $F_l$, such that $\mathcal{X}_l = \{\sum_{i=1}^o \lambda_i \mathbf{v_i} \mid \sum_{i=1}^o \lambda_i \land \lambda_i \geq 0 \land \mathbf{v}_i \in \mathcal{V}_l, \forall i \in \{1, \ldots, o\}\}$ and $\mathcal{V}_l$ is the set of vertices of $\mathcal{X}_l$.
   Associated with $\mathcal{X}_l$, there is $\mathcal{Y}_l = \{F_l(\mathbf{x}_l) \mid \mathbf{x}_l \in \mathcal{X}_l \}$, which corresponds to the output set for layer $l$.

Finally, let $\widehat{F}_l$ denote the output mapping for a given layer by the Algorithm \ref{alg:layer_map}, given by:
\begin{equation}
    \widehat{F}_l(\mathcal{V}_l, \mathbf{W}_l, \boldsymbol{\theta}_l) = RP\left(ReLU\bigl(II(AM(\mathcal{V}_l, \mathbf{W}_l, \boldsymbol{\theta}_l),EI(AM(\mathcal{V}_l, \mathbf{W}_l, \boldsymbol{\theta}_l)))\bigr)\right)
    \label{eq:convex_layer_mapping}
\end{equation}
where $AM$ corresponds the affine map given by Algorithm \ref{alg:affine_map}, 
    $EI$ is the edge identification given by the Algorithm \ref{alg:edge_identification}, 
$II$ is the intersection identification defined by Algorithm \ref{alg:orthant_intersection}, 
    $ReLU$ is given by Algorithm \ref{alg:relu_map}, and 
$RP$ stands for removing non-vertices defined by Algorithm \ref{alg:removing_internal_points}. 
   We denote the convex hull mapping for a given set of vertices by $CH$.

\begin{proposition} Given a closed convex polytope $\mathcal{X}_l$ as input set and $\mathcal{V}_l$ as the set of its vertices, then it implies that $\mathcal{Y}_l \subseteq CH(\widehat{F}_l(\mathcal{V}_l, \mathbf{W}_l, \boldsymbol{\theta}_l))$. In other words, every output of a layer $l$ associated with an input in $\mathcal{X}_l$ is in the convex hull of $\widehat{F}_l(\mathcal{V}_l, \mathbf{W}_l, \boldsymbol{\theta}_l)$
\label{prop:SAPNM_layer}
\end{proposition}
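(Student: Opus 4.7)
The plan is to show that every output $F_l(\mathbf{x}_l)$ with $\mathbf{x}_l\in\mathcal{X}_l$ lies in the convex hull of the set produced by $\widehat{F}_l$; since Algorithm~\ref{alg:removing_internal_points} only removes points expressible as a convex combination of the remaining ones, it leaves the convex hull unchanged, and so it suffices to establish the containment before the RP step.

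First, by linearity of the affine map, if $\mathbf{x}_l=\sum_i\lambda_i\mathbf{v}_i$ is a convex combination of the vertices in $\mathcal{V}_l$, then $\mathbf{z}:=\mathbf{W}_l\mathbf{x}_l+\boldsymbol{\theta}_l$ is the same convex combination of the affinely-mapped vertices $AM(\mathcal{V}_l,\mathbf{W}_l,\boldsymbol{\theta}_l)$, so $\mathbf{z}$ lies in the closed convex polytope $\mathcal{P}'$ whose vertex set is contained in this image. I would then pick any closed orthant $\mathcal{O}_j$ with $\mathbf{z}\in\mathcal{O}_j$ and apply Propositions~\ref{prop:intersection_part1} and \ref{prop:intersection_part2} iteratively to the $n$ half-spaces defining $\mathcal{O}_j$ (the inductive argument sketched by the author immediately before Proposition~\ref{prop:pertinencia_origem}). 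This identifies the vertices of $\mathcal{P}'\cap\mathcal{O}_j$ as being drawn from three sources: (i) vertices of $\mathcal{P}'$ that lie in $\mathcal{O}_j$, which belong to the output of $AM$; (ii) intersections of edges of $\mathcal{P}'$ with the supporting hyperplanes of $\mathcal{O}_j$, which are precisely the new points emitted by $II\circ EI\circ AM$; and, by Proposition~\ref{prop:pertinencia_origem}, (iii) possibly the origin, in the case $\mathbf{0}\in\mathcal{P}'$.

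Having this vertex description, I would write $\mathbf{z}=\sum_k\mu_k\mathbf{u}_k$ with $\mu_k\geq 0$ and $\sum_k\mu_k=1$, where each $\mathbf{u}_k$ is a vertex of $\mathcal{P}'\cap\mathcal{O}_j$. Since all $\mathbf{u}_k$ share the orthant $\mathcal{O}_j$ with $\mathbf{z}$, Proposition~\ref{prop:relu_convexa_por_ortant} allows me to pull ReLU through the combination, giving $ReLU(\mathbf{z})=\sum_k\mu_k\,ReLU(\mathbf{u}_k)$. For vertices of types (i) and (ii), the images $ReLU(\mathbf{u}_k)$ are produced directly by the ReLU step of Algorithm~\ref{alg:layer_map} and thus lie in $\widehat{F}_l(\mathcal{V}_l,\mathbf{W}_l,\boldsymbol{\theta}_l)$ (modulo RP, which preserves the convex hull), so $ReLU(\mathbf{z})$ is a convex combination of points in that set, establishing $F_l(\mathbf{x}_l)\in CH(\widehat{F}_l(\mathcal{V}_l,\mathbf{W}_l,\boldsymbol{\theta}_l))$.

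The main obstacle is case (iii): Algorithm~\ref{alg:layer_map} never explicitly inserts the origin into the vertex set, yet Proposition~\ref{prop:pertinencia_origem} says the origin can be a genuine vertex of $\mathcal{P}'\cap\mathcal{O}_j$ whenever $\mathbf{0}\in\mathcal{P}'$. To close the gap I would argue that when $\mathbf{0}\in\mathcal{P}'$, the intersection of $\mathcal{P}'$ with the closed negative orthant is non-degenerate, so by the same iterated application of Propositions~\ref{prop:intersection_part1} and \ref{prop:intersection_part2} it admits at least one vertex $\mathbf{w}^\star$ of type (i) or (ii) whose components are all non-positive; then $ReLU(\mathbf{w}^\star)=\mathbf{0}$ already appears in the output of the ReLU step, so the origin is available inside $\widehat{F}_l(\mathcal{V}_l,\mathbf{W}_l,\boldsymbol{\theta}_l)$ as a degenerate convex combination and the argument of the previous paragraph carries over without modification.
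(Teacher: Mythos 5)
Your argument follows the same route as the paper's own proof: linearity of the affine map, decomposition of the image polytope $\mathcal{P}'$ into its orthant pieces via Propositions~\ref{prop:intersection_part1} and~\ref{prop:intersection_part2}, per-orthant linearity of ReLU (Proposition~\ref{prop:relu_convexa_por_ortant}) to pull the activation through a convex combination, and the observation that RP (Algorithm~\ref{alg:removing_internal_points}) preserves the convex hull. You are in fact more careful than the paper, which argues only at the level of sets and never tracks an individual point $\mathbf{z}$; and you correctly notice something the paper's proof silently skips, namely that Algorithm~\ref{alg:layer_map} (unlike EPNM) has no OS step, so the origin --- which Proposition~\ref{prop:pertinencia_origem} shows can be a genuine vertex of $\mathcal{P}'\cap\mathcal{O}_j$ --- is never inserted into the computed vertex list. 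Identifying that mismatch is the most valuable part of your write-up.

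However, your patch for that gap does not work. You claim that whenever $\mathbf{0}\in\mathcal{P}'$ the intersection of $\mathcal{P}'$ with the closed negative orthant contains a vertex of type (i) or (ii), i.e.\ a point actually emitted by $II\circ EI\circ AM$ with all components non-positive, whose ReLU image supplies the origin. This fails for $n\geq 3$: the origin can lie in the relative interior of a face of $\mathcal{P}'$ of dimension at least $2$, with $\mathcal{P}'$ entirely on one side of its affine hull, in which case the intersection with the negative orthant is exactly $\{\mathbf{0}\}$ and $\mathbf{0}$ is neither a vertex of $\mathcal{P}'$ nor on any edge of $\mathcal{P}'$. Concretely, take $\mathcal{P}'$ to be the tetrahedron with vertices $(2,-1,-1)$, $(-1,2,-1)$, $(-1,-1,2)$, $(1,1,1)$: every point of $\mathcal{P}'$ has non-negative coordinate sum, so the only point with all coordinates $\leq 0$ is $\mathbf{0}$, which is the centroid of the facet spanned by the first three vertices and lies on no edge. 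Hence no computed point is mapped to $\mathbf{0}$ by ReLU, and since a convex combination of non-zero non-negative vectors can never equal $\mathbf{0}$, the point $\mathbf{0}=F_l(\mathbf{x}_l)\in\mathcal{Y}_l$ is \emph{not} in $CH(\widehat{F}_l(\mathcal{V}_l,\mathbf{W}_l,\boldsymbol{\theta}_l))$. So the gap you found is real but your repair does not close it; closing it requires adding an OS-type step to APNM or a genuinely different argument for the origin. A related caution: the ``iterated'' application of Proposition~\ref{prop:intersection_part2} that both you and the paper invoke only accounts for new vertices lying on edges of $\mathcal{P}'$, whereas intersecting with the $n$ orthant hyperplanes can also create vertices in the relative interior of higher-dimensional faces of $\mathcal{P}'$ (the origin being only the extreme instance), so the type-(i)/(ii)/(iii) trichotomy on which your convex-combination step rests is itself incomplete for $n\geq 3$.
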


\begin{proof}[\textbf{Proof}] The mapping of a layer $l$ from $F$ is composed of an affine map ($\mathbf{W}_l \mathbf{x}_l + \boldsymbol{\theta}_l$) and a non-linear map ($ReLU$). Hence, since $\mathcal{X}_l$ is a closed convex polytope and $\mathcal{V}_l$ its vertices, the affine map of $\mathcal{X}_l$ is given by the convex hull of the affine map of its vertices, as computed by  Algorithm \ref{alg:affine_map}.

As previously presented, the $ReLU$ map is non-linear and therefore does not necessarily preserve the convexity of a given input set. However, as established by Proposition \ref{prop:relu_convexa_por_ortant}, the $ReLU$ mapping preserves the convexity of a convex set inside a given orthant.

Therefore, we divide the resulting set of the affine map in such a way that each partition is inside a single orthant, so that we can apply the $ReLU$ map to the set $\widehat{\mathcal{X}}_l = CH(AM(\mathcal{V}, \mathbf{W}_l, \boldsymbol{\theta}_l))$.
   As assured by Proposition \ref{prop:intersection_part2}, the intersection of a given orthant $\mathcal{O}_j$ with the polytope $\widehat{\mathcal{X}}_l$ consists of the convex hull of the union of the vertices of $\widehat{\mathcal{X}}_l$ that are in $\mathcal{O}_j$, with the vertices from the intersection of the edges of $\widehat{\mathcal{X}}_l$ with the supporting hyperplanes that define $\mathcal{O}_j$. 
Note that $\mathcal{O}_j$ represents a given orthant, for all $j \in \{1, \ldots, 2^{|\mathbf{x}_l|}\}$.

Thus, we first need to compute the edges of $\widehat{\mathcal{X}}_l$, calculated by $EI(AM(\mathcal{V}, \mathbf{W}_l, \boldsymbol{\theta}_l))$, as stated by Proposition \ref{prop:adjacent_vertices}, followed by the determination of the intersection of these edges with the supporting hyperplanes of $\mathcal{O}_j$, given by $II(AM(\mathcal{V}_l, \mathbf{W}_l, \boldsymbol{\theta}_l),EI(AM(\mathcal{V}_l, \mathbf{W}_l, \boldsymbol{\theta}_l)))$ and computed by Algorithm \ref{alg:orthant_intersection}.

Therewith, we have that $\widehat{\mathcal{X}}_l$ was divided in such a way that each partition is inside a single orthant.
Then, the $ReLU$ mapping is applied to the vertices of each partition of $\widehat{\mathcal{X}}_l$, resulting in $ReLU(II(AM(\mathcal{V}_l, \mathbf{W}_l, \boldsymbol{\theta}_l),EI(AM(\mathcal{V}_l, \mathbf{W}_l, \boldsymbol{\theta}_l))))$. 
Proposition \ref{prop:relu_convexa_por_ortant} assures that the $ReLU$ mapping preserves the convexity inside a single orthant, which allows its previous application.
Note that all the vertices will be in the non-negative orthant after applying the $ReLU$ mapping.

Finally, we compute the convex hull of all the output sets of the $ReLU$ mapping (at most $2^{|\mathbf{x}_l|}$) by removing those points that are not vertices of $CH(ReLU(II(AM(\mathcal{V}_l, \mathbf{W}_l, \boldsymbol{\theta}_l),EI(AM(\mathcal{V}_l, \mathbf{W}_l, \boldsymbol{\theta}_l)))))$. This final step is  computed by  Algorithm \ref{alg:removing_internal_points} (RP) as stated in Equation \eqref{eq:convex_layer_mapping}.

Consequently,  $CH(\widehat{F}_l(\mathcal{V}_l, \mathbf{W}_l, \boldsymbol{\theta}_l))$ is the convex hull of the exact map $\mathcal{Y}_l$ of $\mathcal{X}_l$.
%%%
\end{proof}

We denote by $\widehat{F}$ the composition of $\widehat{F}_l$ for all layers of the neural network, except the last one, where the non-linear map is not applied, given by $\widehat{F}(\mathcal{V}, \mathbf{W}, \boldsymbol{\theta}) = (\widehat{F}_{L} \circ \widehat{F}_{L-1} \circ \cdots \circ \widehat{F}_{1})(\mathcal{V}, \mathbf{W}, \boldsymbol{\theta})$. 
   Furthermore, we define $\mathcal{Y} = \{F(\mathbf{x}) \mid \mathbf{x} \in \mathcal{X}\}$ as the exact output set of the network, regarding the  closed convex input polytope $\mathcal{X}$ and its corresponding set of vertices $\mathcal{V}$.

\begin{proposition} Given a closed convex polytope $\mathcal{X}$ as the input set and $\mathcal{V}$, the set of its vertices, then we have that $\mathcal{Y} \subseteq CH(\widehat{F}(\mathcal{V}, \mathbf{W}, \boldsymbol{\theta}))$. Put in different terms, each output of the network associated with an input in $\mathcal{X}$ is in $\widehat{F}(\mathcal{V}, \mathbf{W}, \boldsymbol{\theta})$.
\label{prop:SAPNM_network}
\end{proposition}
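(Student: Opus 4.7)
The natural strategy is induction on the number of layers $L$, using Proposition \ref{prop:SAPNM_layer} as the per-layer invariant. Introduce intermediate vertex sets $\widehat{\mathcal{V}}_l = (\widehat{F}_l \circ \cdots \circ \widehat{F}_1)(\mathcal{V},\mathbf{W},\boldsymbol{\theta})$ and exact reachable sets $\mathcal{Y}_l = F_l(\mathcal{Y}_{l-1})$ with $\mathcal{Y}_0 = \mathcal{X}$ and $\widehat{\mathcal{V}}_0 = \mathcal{V}$. The plan is to prove by induction on $l$ the invariant $\mathcal{Y}_l \subseteq CH(\widehat{\mathcal{V}}_l)$, from which the proposition follows by specialising to $l=L$.

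The base case $l=1$ is precisely Proposition \ref{prop:SAPNM_layer} applied to $\mathcal{X}_1 = \mathcal{X}$ with vertex set $\mathcal{V}_1 = \mathcal{V}$. For the inductive step, assume $\mathcal{Y}_l \subseteq CH(\widehat{\mathcal{V}}_l)$. The first move is to invoke the monotonicity of the forward image: since $F_{l+1}$ is applied pointwise, $\mathcal{Y}_{l+1} = F_{l+1}(\mathcal{Y}_l) \subseteq F_{l+1}(CH(\widehat{\mathcal{V}}_l))$. Now $CH(\widehat{\mathcal{V}}_l)$ is itself a closed convex polytope whose vertex set is (contained in) $\widehat{\mathcal{V}}_l$, and the RP step executed at the previous iteration guarantees that $\widehat{\mathcal{V}}_l$ consists exactly of extreme points. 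Proposition \ref{prop:SAPNM_layer} applied to this polytope then gives $F_{l+1}(CH(\widehat{\mathcal{V}}_l)) \subseteq CH(\widehat{F}_{l+1}(\widehat{\mathcal{V}}_l,\mathbf{W}_{l+1},\boldsymbol{\theta}_{l+1})) = CH(\widehat{\mathcal{V}}_{l+1})$, closing the induction. A short special case handles $l = L$: inspecting Algorithm \ref{alg:layer_map} shows that the last layer omits EI, II, ReLU and RP, reducing $\widehat{F}_L$ to the bare affine map AM; since the image of a polytope under an affine map equals the convex hull of the image of its vertices, the inclusion becomes an equality at the terminal step and carries no approximation gap beyond what the previous layers already introduced.

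The main obstacle is the chaining argument rather than any single algebraic step. Proposition \ref{prop:SAPNM_layer} is stated for a layer whose input is a concrete polytope together with its own vertex set, yet after one round of the algorithm the only information we retain is the finite set $\widehat{\mathcal{V}}_l$ together with the knowledge that its convex hull over-approximates the true reachable set. The subtle point is that one must re-interpret $\widehat{\mathcal{V}}_l$ as the vertex set of a fresh input polytope $CH(\widehat{\mathcal{V}}_l)$ for the next layer, which is exactly what permits reapplying the per-layer proposition; it is the combination of monotonicity of forward images with soundness of the RP feasibility test that makes this reinterpretation valid. Any gap in the completeness of RP (i.e., failing to remove a non-extreme point) would not break soundness of the over-approximation here, since $CH$ is insensitive to redundant points, but it is worth flagging because it keeps the inductive bookkeeping consistent with the hypotheses of Proposition \ref{prop:SAPNM_layer}.
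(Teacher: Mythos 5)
Your proposal is correct and follows essentially the same route as the paper's proof: induction over the layers, using Proposition \ref{prop:SAPNM_layer} as the base case and re-applying it at each step by treating the vertex set output by layer $l$ as the vertex set of the input polytope $CH(\widehat{\mathcal{V}}_l)$ for layer $l+1$ (the paper writes this as $\mathcal{X}_{l+1} = CH(\widehat{F}_l(\mathcal{V}_l,\mathbf{W}_l,\boldsymbol{\theta}_l))$, $\mathcal{V}_{l+1} = \widehat{F}_l(\mathcal{V}_l,\mathbf{W}_l,\boldsymbol{\theta}_l)$). Your version merely makes explicit two points the paper leaves implicit --- the monotonicity of the forward image under the inductive inclusion, and the harmlessness of RP retaining redundant points --- which is a welcome tightening but not a different argument.
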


\begin{proof}[\textbf{Proof}] As established by Proposition  \ref{prop:SAPNM_layer}, $\mathcal{Y}_l \subseteq CH(\widehat{F}_l(\mathcal{V}_l, \mathbf{W}_l, \boldsymbol{\theta}_l))$ for a given layer $l$ of the neural network $F$. 
Then, for $l=1$, it follows that:
\begin{equation}
    \mathcal{Y}_1 \subseteq CH(\widehat{F}_1(\mathcal{V}_1, \mathbf{W}_1, \boldsymbol{\theta}_1))
\end{equation}
where $\mathcal{V}_1$ is the set of vertices from $\mathcal{X}_1$ and $\mathcal{X}_1 = \mathcal{X}$. As the output of the first layer is the input of the second one, we have that  $\mathcal{X}_2 = CH(\widehat{F}_1(\mathcal{V}_1, \mathbf{W}_1, \boldsymbol{\theta}_1))$ and that $\mathcal{V}_2 = \widehat{F}_1(\mathcal{V}_1, \mathbf{W}_1, \boldsymbol{\theta}_1)$.

For $l=2$:
\begin{equation}
    \mathcal{Y}_2 \subseteq CH(\widehat{F}_2(\mathcal{V}_2, \mathbf{W}_2, \boldsymbol{\theta}_2))
\end{equation}
Then, by replacing $\mathcal{V}_2$ with the output of layer 1, results in:
\begin{equation}
    \mathcal{Y}_2 \subseteq CH(\widehat{F}_2(\widehat{F}_1(\mathcal{V}_1, \mathbf{W}_1, \boldsymbol{\theta}_1), \mathbf{W}_2, \boldsymbol{\theta}_2))
\end{equation}

Now, for layers $k$ and $k+1$, it follows by induction that:
\begin{equation}
    \mathcal{Y}_{k+1} \subseteq CH(\widehat{F}_{k+1}(\widehat{F}_k(\mathcal{V}_k, \mathbf{W}_k, \boldsymbol{\theta}_k), \mathbf{W}_{k+1}, \boldsymbol{\theta}_{k+1}))
\end{equation}
Consequently, the mapping  $\hat{F}$ in fact computes an over-approximation for the exact output set $\mathcal{Y}$.
%%%
\end{proof}

%%%%%%%%%%%%%%%%%%%%%%%%%%%%%%%%%%%%%%%%%
\subsection{Correctness of EPNM algorithm}

The set $\mathcal{Y}_l$ denotes the exact output associated with the input set $\mathcal{X}_l$, regarding the layer $l$ of the neural network $F$.
The mapping of a given layer $l$ implemented by Algorithm \ref{alg:layer_map_2}, denoted here by $\widehat{E}_l$, is stated as:
\begin{equation}
    \widehat{E}_{l,k}(\mathcal{V}_l, \mathbf{W}_l, \boldsymbol{\theta}_l) = ReLU\left(SP\left(OS\bigl(II(AM(\mathcal{V}_l, \mathbf{W}_l, \boldsymbol{\theta}_l),EI(AM(\mathcal{V}_l, \mathbf{W}_l, \boldsymbol{\theta}_l)))\bigr)\right)_k\right)
    \label{eq:exact_layer_mapping}
\end{equation}
where $AM$ is the affine map computed by Algorithm \ref{alg:affine_map}, 
   $EI$ is the edge identification implemented by Algorithm \ref{alg:edge_identification},
$II$ is the intersection identification  given by Algorithm \ref{alg:orthant_intersection}, 
    $ReLU$ is computed by Algorithm \ref{alg:relu_map}, 
$OS$ is implemented by Algorithm \ref{alg:origin_search} to verify if the origin is in the polytope, and 
    finally $SP$, which separates the vertices in orthants, is computed by Algorithm \ref{alg:orthant_separation}. 
Notice that $k \in \{1, \ldots, K\}$ is the index that represents each $\widehat{E}_l(\mathcal{V}_l, \mathbf{W}_l, \boldsymbol{\theta}_l)$. 
   We denote the convex hull mapping for a given input set of vertices by $CH$.

\begin{proposition} Given a convex closed polytope $\mathcal{X}_l$ as input set and $\mathcal{V}_l$, the set of its vertices, we have that $\mathcal{Y}_l = \bigcup_{k=1}^K CH(\widehat{E}_{l,k}(\mathcal{V}_l, \mathbf{W}_l, \boldsymbol{\theta}_l))$, where $K$ is the number of sets that comprise $\widehat{E}_{l}(\mathcal{V}_l, \mathbf{W}_l, \boldsymbol{\theta}_l)$. Put another way, the set of outputs of the layer $l$, resulting from all inputs in $\mathcal{X}_l$, consists of the union of the convex hull of each set $\widehat{E}_{l,k}(\mathcal{V}_l, \mathbf{W}_l, \boldsymbol{\theta}_l)$.
\label{prop:EPNM_layer}
\end{proposition}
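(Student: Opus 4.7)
The plan is to decompose the layer map $F_l = ReLU \circ A_l$, where $A_l(\mathbf{x}) = \mathbf{W}_l \mathbf{x} + \boldsymbol{\theta}_l$, into operations whose effect on V-polytopes is exactly characterized, and then check that Algorithm \ref{alg:layer_map_2} implements this decomposition without loss. First I would note that since $A_l$ is affine, $A_l(\mathcal{X}_l)$ is a closed convex polytope whose vertex set is (contained in) $A_l(\mathcal{V}_l)$; this step is produced by AM. By Proposition \ref{prop:adjacent_vertices}, EI then returns exactly the edge-skeleton of $\mathcal{X}'_l := A_l(\mathcal{X}_l)$.

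The central structural step is to show that, for each orthant $\mathcal{O}_j$, the vertex set of $\mathcal{X}'_l \cap \mathcal{O}_j$ is exactly the union of (i) vertices of $\mathcal{X}'_l$ that lie in $\mathcal{O}_j$, (ii) intersections of edges of $\mathcal{X}'_l$ with the supporting hyperplanes of $\mathcal{O}_j$, and (iii) the origin if $\mathbf{0} \in \mathcal{X}'_l$. Writing $\mathcal{O}_j = \mathcal{H}_1 \cap \cdots \cap \mathcal{H}_n$ as an intersection of half-spaces, parts (i)–(ii) follow by iterating Propositions \ref{prop:intersection_part1} and \ref{prop:intersection_part2} over the $n$ half-spaces, while (iii) is exactly Proposition \ref{prop:pertinencia_origem}. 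Thus the composition $SP \circ OS \circ II$ produces a list indexed by $k$ in which the $k$-th set is the vertex set of $\mathcal{X}'_l \cap \mathcal{O}_{j(k)}$ for the orthant $\mathcal{O}_{j(k)}$ it represents. Since $\bigcup_j \mathcal{O}_j = \mathbb{R}^{|\mathbf{x}_l|}$, we have the exact cover $\mathcal{X}'_l = \bigcup_k \bigl(\mathcal{X}'_l \cap \mathcal{O}_{j(k)}\bigr)$.

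To finish, I would apply ReLU to both sides of this cover. By Proposition \ref{prop:relu_convexa_por_ortant}, $ReLU$ restricted to $\mathcal{O}_{j(k)}$ is a linear map $\boldsymbol{\Lambda}_{j(k)}$, so it sends the V-polytope $\mathcal{X}'_l \cap \mathcal{O}_{j(k)}$ to the V-polytope whose vertex set is $\boldsymbol{\Lambda}_{j(k)}$ applied componentwise to the vertices produced in the previous paragraph — which is exactly the final ReLU step in the definition of $\widehat{E}_{l,k}$. Taking convex hulls and the union over $k$ gives
\begin{equation}
\mathcal{Y}_l \;=\; F_l(\mathcal{X}_l) \;=\; \bigcup_{k=1}^K ReLU\bigl(\mathcal{X}'_l \cap \mathcal{O}_{j(k)}\bigr) \;=\; \bigcup_{k=1}^K CH\bigl(\widehat{E}_{l,k}(\mathcal{V}_l, \mathbf{W}_l, \boldsymbol{\theta}_l)\bigr),
\end{equation}
which is the desired equality. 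Both inclusions are obtained simultaneously: $\supseteq$ because every CH element is the image of an actual point in $\mathcal{X}_l$, and $\subseteq$ because every $\mathbf{x} \in \mathcal{X}_l$ has $A_l(\mathbf{x})$ in some $\mathcal{O}_{j(k)}$.

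The main obstacle is justifying rigorously that the combination of II, OS, and SP produces \emph{exactly} the vertex set of each $\mathcal{X}'_l \cap \mathcal{O}_{j(k)}$, and not merely a superset. Completeness (no vertex is missed) comes from the inductive use of Propositions \ref{prop:intersection_part2} and \ref{prop:pertinencia_origem}; the subtle point is that a vertex lying on multiple orthant hyperplanes must be duplicated into every orthant it borders, which is precisely what the ZV subroutine inside SP enforces. Redundant points that survive may be discarded later by RP in the subsequent layer, and since they do not alter the convex hull they do not affect the identity above.
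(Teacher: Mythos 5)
Your proposal is correct and follows essentially the same route as the paper's proof: decompose the layer map into the affine part (AM) and the orthant-wise linear ReLU (Proposition \ref{prop:relu_convexa_por_ortant}), characterize the vertices of each $\mathcal{X}'_l \cap \mathcal{O}_j$ via Propositions \ref{prop:adjacent_vertices}, \ref{prop:intersection_part2}, and \ref{prop:pertinencia_origem}, and then take convex hulls and the union over orthants. Your version is somewhat more explicit than the paper's on the two inclusions and on the need to duplicate boundary vertices into every orthant they touch, but the decomposition and the key lemmas are identical.
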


\begin{proof}[\textbf{Proof}] As presented previously, the mapping of a given layer $l$ of the neural network $F$ is a composition of two different functions: one affine mapping ($\mathbf{W}_l \mathbf{x}_l + \boldsymbol{\theta}_l$) with one non-linear mapping ($ReLU$). As $\mathcal{X}_l$ is a closed convex polytope, the affine mapping is obtained by the convex hull of the affine map of each of its vertices, implemented by Algorithm \ref{alg:affine_map}.

$ReLU$ does not necessarily preserve the convexity of a given input set. However, as shown by  Proposition \ref{prop:relu_convexa_por_ortant}, the $ReLU$ mapping preserves the convexity of an input set if the input is inside a single orthant.

Note that the affine mapping of the input set is computed by $AM(\mathcal{V}_l, \mathbf{W}_l, \boldsymbol{\theta}_l)$, where $\widehat{\mathcal{X}}_l = CH(AM(\mathcal{V}_l, \mathbf{W}_l, \boldsymbol{\theta}_l))$ denotes the application of the affine mapping in $\mathcal{X}_l$.
   Therefore, it is necessary to split $\widehat{\mathcal{X}}_l$ in such a way that each partition lies inside a single orthant, so that it becomes possible to apply the $ReLU$ mapping to the vertices of $\widehat{\mathcal{X}}_l$. 
As assured by Proposition \ref{prop:intersection_part2}, the intersection of an orthant $\mathcal{O}_j$ with a polytope $\widehat{\mathcal{X}}_l$ consists of the union of the vertices of $\widehat{\mathcal{X}}_l$ that are in $\mathcal{O}_j$, with the vertices in the intersection of the edges of $\widehat{\mathcal{X}}_l$ with the supporting hyperplanes of $\mathcal{O}_j$ and the origin, if the latter lies inside $\widehat{\mathcal{X}}_l$, according to Proposition \ref{prop:pertinencia_origem}.

Thus, we firstly compute the edges of $\widehat{\mathcal{X}}_l$, a step denoted by $EI(AM(\mathcal{V}_l, \mathbf{W}_l, \boldsymbol{\theta}_l))$ and computed by Algorithm \ref{alg:edge_identification}, as stated by Proposition \ref{prop:adjacent_vertices}. Then, the intersection of these edges with the supporting hyperplanes of orthant $\mathcal{O}_j$ is obtained with Algorithm \ref{alg:orthant_intersection}, and finally Algorithm \ref{alg:origin_search} verifies whether the origin belongs to  $\widehat{\mathcal{X}}_l$.
The result of such an operation is given by:
\begin{equation}
    OS\bigl(II(AM(\mathcal{V}_l, \mathbf{W}_l, \boldsymbol{\theta}_l),EI(AM(\mathcal{V}_l, \mathbf{W}_l, \boldsymbol{\theta}_l)))\bigr)
\end{equation}

In the next step, we separate the vertices of $\widehat{\mathcal{X}}_l$ in different sets, such that those vertices in the same set represent the portion of $\widehat{\mathcal{X}}_l$ that is inside a single orthant. 
This process takes place to enable the application of the $ReLU$ mapping, as the separation allows the non-linear mapping to be applied in each partition of $\widehat{\mathcal{X}}_l$ while ensuring convexity, as established by Proposition \ref{prop:relu_convexa_por_ortant}.
Algorithm \ref{alg:orthant_separation} performs the partitioning operation, given by:
\begin{equation}
    SP\left(OS\bigl(II(AM(\mathcal{V}_l, \mathbf{W}_l, \boldsymbol{\theta}_l),EI(AM(\mathcal{V}_l, \mathbf{W}_l, \boldsymbol{\theta}_l)))\bigr)\right)
\end{equation}

The result of $SP$ is the set of sets of vertices, where the convex hull of each set represents a partition of $\widehat{\mathcal{X}}_l$ restricted to a single orthant. Thus, $\widehat{E}_l(\mathcal{V}_l, \mathbf{W}_l, \boldsymbol{\theta}_l)$ denotes the result of the application of the $ReLU$ mapping to each of these sets. 
Finally, we have that $\widehat{E}_l(\mathcal{V}_l, \mathbf{W}_l, \boldsymbol{\theta}_l)$ represents the exact mapping of $\mathcal{X}_l$, as we applied both the affine and the non-linear mapping without any over-approximation. Consequently, Algorithm \ref{alg:layer_map_2} in fact returns the exact output set for a given layer $l$ of the neural network $F$.
%%%
\end{proof}

Finally, we denote by $\widehat{E}$ the composition of $\widehat{E}_l$, for each $l \in \{1, \ldots, L\}$. 
To avoid repetition, the Proposition \ref{prop:EPNM_network} is not presented. However, it follows the same inductive process as presented for the APNM (Proposition \ref{prop:SAPNM_network}).

\begin{proposition} Given a closed convex polytope $\mathcal{X}$ as input set and $\mathcal{V}$, the set of its vertices, we have that $\mathcal{Y} = \bigcup_{k=1}^K CH(\widehat{E}_{k}(\mathcal{V}, \mathbf{W}, \boldsymbol{\theta}))$, where $K$ is the number of sets in $\widehat{E}(\mathcal{V}, \mathbf{W}, \boldsymbol{\theta})$. In other terms, the set of outputs of the neural network $F$, associated with the input set $\mathcal{X}$, is in the union of the convex hull of each set $\widehat{E}_{k}(\mathcal{V}, \mathbf{W}, \boldsymbol{\theta})$.
\label{prop:EPNM_network}
\end{proposition}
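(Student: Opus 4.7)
The plan is to mirror the inductive structure used in Proposition \ref{prop:SAPNM_network}, adapted to handle two features specific to EPNM: the replacement of set containment by set equality, and the fact that after the first layer the current state is no longer a single convex polytope but a finite union of such polytopes, one per orthant visited by the algorithm.

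First I would set up the induction on the layer index $l \in \{1,\ldots,L\}$. For the base case $l=1$ we have $\mathcal{X}_1=\mathcal{X}$ and $\mathcal{V}_1=\mathcal{V}$, so Proposition \ref{prop:EPNM_layer} immediately yields $\mathcal{Y}_1 = \bigcup_{k=1}^{K_1} CH(\widehat{E}_{1,k}(\mathcal{V}_1,\mathbf{W}_1,\boldsymbol{\theta}_1))$. For the inductive step, I would assume that after applying the first $l$ layers the exact reachable set can be written as $\mathcal{Y}_l = \bigcup_{k=1}^{K_l} CH(\mathcal{V}_{l,k})$, where $\{\mathcal{V}_{l,k}\}_{k=1}^{K_l}$ is precisely the family of vertex sets that Algorithm \ref{alg:layer_map_2} carries into the beginning of iteration $l+1$. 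The central observation is that any function distributes over unions, so
\begin{equation}
F_{l+1}(\mathcal{Y}_l) \;=\; \bigcup_{k=1}^{K_l} F_{l+1}\!\bigl(CH(\mathcal{V}_{l,k})\bigr).
\end{equation}
Each term on the right is the image of a single closed convex polytope through one layer of $F$, so Proposition \ref{prop:EPNM_layer} applies individually to each $k$ and gives $F_{l+1}(CH(\mathcal{V}_{l,k})) = \bigcup_{j} CH(\widehat{E}_{l+1,j}(\mathcal{V}_{l,k},\mathbf{W}_{l+1},\boldsymbol{\theta}_{l+1}))$. The inner loop of Algorithm \ref{alg:layer_map_2} iterates exactly over $\mathbf{Z}\in\mathcal{P}$ and accumulates every output vertex set into $\hat{\mathcal{P}}$, so taking the union over $k$ reproduces $\bigcup_{j=1}^{K_{l+1}} CH(\widehat{E}_{l+1,j}(\mathcal{V}_{l+1},\mathbf{W}_{l+1},\boldsymbol{\theta}_{l+1}))$ after re-indexing the pairs $(k,j)$ into a single index. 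Specializing to $l+1=L$ closes the induction and yields the stated equality.

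The main obstacle I expect is bookkeeping rather than mathematics: one has to argue cleanly that the vertex sets $\{\mathcal{V}_{l,k}\}$ produced by the algorithm genuinely are the vertex sets of the pieces of $\mathcal{Y}_l$, so that Proposition \ref{prop:EPNM_layer} can be invoked at the next layer with a bona fide convex polytope and its true vertex set. This requires two remarks. First, the $\mathbf{RP}$ step removes only points expressible as convex combinations of the others and therefore preserves both the vertex set and the convex hull of each piece; hence the state fed into iteration $l+1$ is the correct vertex description. Second, although the mathematical definition in Equation \eqref{eq:exact_layer_mapping} places $\mathbf{ReLU}$ inside $\widehat{E}_{l}$, Algorithm \ref{alg:layer_map_2} defers the $\mathbf{ReLU}$ and $\mathbf{RP}$ operations to the beginning of iteration $l+1$; since these operations act piecewise on each element of $\mathcal{P}$ and commute with the subsequent distribution over unions, the algorithmic order matches the mathematical composition, and the inductive hypothesis is preserved exactly.
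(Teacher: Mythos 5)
Your proposal is correct and follows exactly the route the paper intends: the paper explicitly omits the proof of Proposition \ref{prop:EPNM_network}, stating only that it ``follows the same inductive process'' as Proposition \ref{prop:SAPNM_network}, and your layer-by-layer induction invoking Proposition \ref{prop:EPNM_layer} on each convex piece and distributing the layer map over the union is precisely that process adapted to the union-of-polytopes state. Your two bookkeeping remarks (that $\mathbf{RP}$ preserves each piece's convex hull and true vertex set, and that the deferred placement of $\mathbf{ReLU}$/$\mathbf{RP}$ in Algorithm \ref{alg:layer_map_2} commutes with the union) are sound and in fact make the argument more complete than what the paper provides.
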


%%%%%%%%%%%%%%%%%%%%%%%%%%%%%%%%%%%%%%
%%%%%%%%%%%%%%%%%%%%%%%%%%%%%%%%%%%%%%
\section{Application}
\label{sec:applications}

In this section we present a comparative analysis between the proposed vertex-based reachability approach and representative algorithms from the literature. This comparison is carried out by verifying one of the properties from ACAS XU \cite{julian2019deep}.

\subsection{ACAS XU}

The Aircraft Collision Avoidance System (ACAS XU) \cite{julian2019deep} comprises a set of fully connected neural networks that aim to eliminate the possibility of collisions between two aircraft. The system comprises 45 trained models, where each model receives as inputs five properties of the ownship and the intruder (the aircraft that is invading the space of the ownship): the distance from the ownship to the intruder ($\rho$), the angle to the intruder regarding the ownship heading direction ($\theta$), the heading angle of intruder relative to ownship heading direction ($\psi$), the speed of the ownship ($v_{ownship}$), and the speed of the intruder ($v_{intruder}$). We can see a representation of the inputs in Figure \ref{fig:acas_xu_visualization}.

\begin{figure}[!ht]
    \centering
    \includegraphics[scale=0.5]{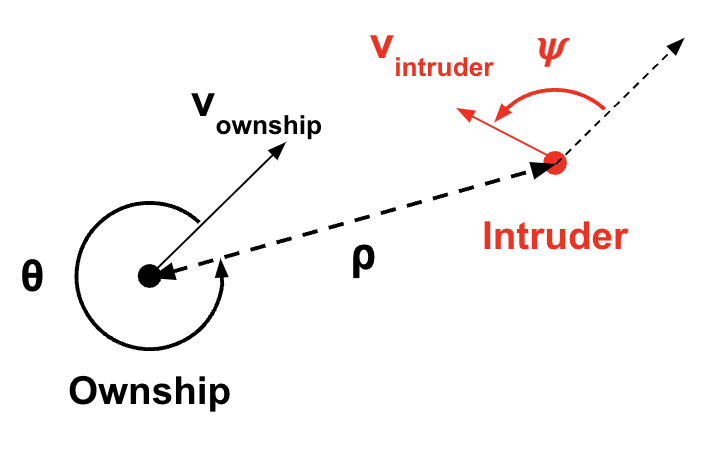}
    \caption{Representation of the ACAS XU inputs. The black dot represents the position of the ownship and the red dot the position of the intruder. $\rho$ represents the Euclidean distance between both aircrafts, $\theta$ the angle between the ownship heading direction and the vector that connects both aircraft, and $\psi$ the angle between the ownship and the intruder heading direction.}
    \label{fig:acas_xu_visualization}
\end{figure}

There are two extra parameters that are not used as inputs to the neural network. The first one is the time until loss of vertical separation ($\tau$), whereas the second one is the previous prediction advice ($a_{previous}$). 
These $\tau$ and $a_{previous}$ parameters are discretized such that for each possible combination of values for $\tau$ and $a_{previous}$, a different model is trained.
This process resulted in a total of 45 trained models, as mentioned before. Each trained model associates an input pattern to five possible categories: 
clear of conflict ($y_0$), weak left ($y_1$), weak right ($y_1$), strong left ($y_3$), and strong right ($y_4$). Figure \ref{fig:acas_xu_network} contains a simple representation of a single the neural network classifier associated with a value for $\tau$ and $a_{previous}$. Each neural network comprises 6 hidden layers, containing 50 neurons each. For further details on the training and prediction process of these models, please refer to \cite{julian2019deep}.

\begin{figure}[!ht]
    \centering
    \includegraphics[scale=0.4]{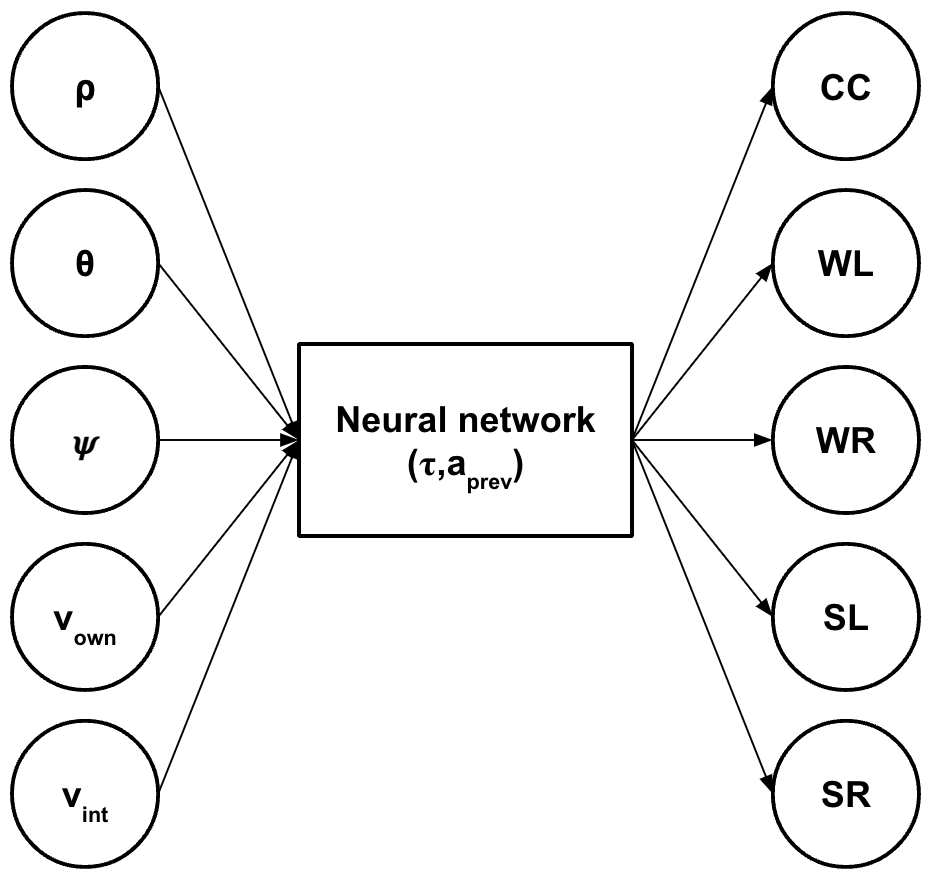}
    \caption{Representation of the inputs and the outputs of a single neural network from ACAS XU. On the left of the neural network we present each of the five expected inputs: $\rho$, $\theta$, $\psi$, $\mathbf{v}_{own}$ and $\mathbf{v}_{int}$. The neural network is defined according to a given $\tau$ and a $a_{prev}$. To the right of the neural network we have the expected probability for each output class. $CC$ stands for clear of conflict, $WL$ for weak left, $WR$ for weak right, $SL$ for strong left, and $SR$ for strong right.}
    \label{fig:acas_xu_network}
\end{figure}

Based on the trained models of the ACAS XU, ten desired properties have been proposed so that this system works correctly according to its crafted design. 
In this paper, to compare with existing verification approaches, we verified Property \ref{property:1}, formally stated as: 

\begin{property} \label{property:1} The conditions are established as follows:
\begin{itemize}
    \item input constraint: $\rho \geq 55947,691$,  $v_{ownship} \geq 1145$ and $v_{intruder} \leq 60$; 
    \item output constraint: $y_0 \leq 1500$;
\end{itemize}
where $y_0$ is the output associated with the \textit{clear of conflict} output class. 
\end{property}

Recall from the problem statement that the reachability analysis aims to verify a reachable set $\mathcal{R}$, obtained from $\mathcal{X}$, such that $\mathcal{R} \cap \lnot \mathcal{Y} = \emptyset$ holds. $\mathcal{Y}$ denotes the expected output for the inputs in $\mathcal{X}$.
For the Property \ref{property:1}, we have that:
\begin{eqnarray}
    \mathcal{X} = \{(\rho, \theta, \phi, v_{ownship}, v_{intruder}) \mid && 55947.691 \leq \rho \leq 60760, \notag \\
    && -\pi \leq \theta \leq \pi, \notag \\
    && -\pi \leq \phi \leq \pi, \notag \\
    && 1145 \leq v_{ownship} \leq 1200, \notag \\
    && 0 \leq v_{intruder} \leq 60\} 
\end{eqnarray}
and that:
\begin{eqnarray}
    \mathcal{Y} = \{(y_0, y_1, y_2, y_3, y_4) \mid y_0 \leq 1500\} 
\end{eqnarray}

Property \ref{property:1} is the only one presented in this paper because only this property is evaluated by means of comparison with other formal verification algorithms. 
However, all of the remaining properties are formally described in \cite{katz2017reluplex}.

%%%%%%%%%%%%%%%%%%%%%
\subsection{Experimental description}

We present in this section the procedures of our experimental setup. The experimental results are divided into two main parts: the first part aiming to validate and compare the results with algorithms from the literature, and the second part to evaluate the features of our approaches. For the first part, we conducted the experiments by:
\begin{enumerate}
    \item Generate the vertices of the input polytope, based on the input constraint imposed by Property \ref{property:1};
    \item Set a timeout of 24 hours for each algorithm verification;
    \item Compute the output reachable set and the verification status for each reachability algorithm (Exact polytope network mapping (EPNM), MaxSens \cite{xiang2018output}, Ai2 \cite{gehr2018ai2} and ExactReach \cite{xiang2017reachable});
    \item Compute counterexamples and the verification status for the search algorithm (Reluplex \cite{katz2017reluplex}, Duality \cite{dvijotham2018dual}, MIPVerify \cite{tjeng2017evaluating} and NSVerify \cite{lomuscio2017approach}); and
    \item Compare results.
\end{enumerate}

For the second experiment, which aims to analyze the parallelism behavior of the algorithm, the experimental setup consists of: 
\begin{enumerate}
    \item Generate the vertices of the input polytope, based on the input constraint imposed by Property \ref{property:1};
    \item Set the number of parallel processes, denoted by $p$, such that $p \in \{1,4,8,12,16,20,24,28,32\}$; and
    \item Verify Property \ref{property:1} for each $p$.
\end{enumerate}

\subsection{Hardware and software specification}

For comparative matters, we provide the specification for both hardware resources and software language. 
The comparative experiment previously presented was performed in an Intel Xeon CPU E5-2630 V4 of 2.20GHz, with 40 available CPU. 
    Those algorithms from the literature and the algorithms proposed in this work were developed in Julia language. 
The implementation of the algorithms from the literature were available on \cite{neuralverificationrepo}.
The implementation of our algorithms is available on \cite{vertexbasedreachabilityanalysis}.

%%%%%%%%%%%%%%%%%%%%%%%5
\subsection{Comparative results}

The validation and comparative results of the verification of ACAS Xu models for Property \ref{property:1} are presented in two different perspectives: firstly among those verification procedures that follow reachability approaches, then comparing with approaches that make use of different techniques (search or optimization). Finally, we present some useful features of the proposed approach.

\subsubsection{EPNM versus reachability approaches}

By comparing the EPNM approach with those verification algorithms that follow a reachability approach, as presented in Table \ref{tab:rechability_results}, it can be seen that the proposed exact approach verified most of the neural networks within the stipulated timeout time (43 out of 45 neural networks). 
As we can see, none of the other existing exact approaches were able to verify a single model within a day of execution (24 hours). 
Notice also that the approximate approaches (MaxSens and Ai2) finished their execution, though, due to their over-approximation, these procedures did not estimate  the correct status well (which is acceptable, as these approaches are sound but not complete).

\begin{table}[!ht]
    \centering
        \caption{Comparative results between the proposed approach (EPNM) regarding existing reachability approaches from the literature. These results refer to the verification of Property \ref{property:1} of ACAS XU models.}
    \label{tab:rechability_results}
    \begin{tabular}{c|c|c|c|c}
       \multicolumn{1}{c}{} & \multicolumn{1}{|c}{Proposed} &  \multicolumn{3}{|c}{Approaches from the Literature}  \\   \cline{2-5}
        Status & EPNM & ExactReach & MaxSens & Ai2\\ \hline
        holds & 43 & - & - & - \\ \hline
        violated & - & - & 45 & 45\\ \hline
        timeout & 2 & 45 & - & -
    \end{tabular}

\end{table}

\subsubsection{EPNM versus search and optimization approaches}

%We also have that,  %%ea
In comparison to optimization and search approaches, the proposed EPNM approach also reached interesting results. Compared to Reluplex results from the literature, EPNM could verify more neural networks within the specified timeout. However, by executing Reluplex in the same hardware conditions of EPNM, the verification was not completed within 24 hours. The same occurred for the NSVerify procedure. 

\begin{table}[!ht]
    \centering
        \caption{Comparative results between the proposed approach regarding existing search and optimization approaches from the literature. These results refer to the verification of Property \ref{property:1} of ACAS XU models.}
    \label{tab:experimental_result}
    \begin{tabular}{c|c|c|c|c|c}
       \multicolumn{1}{c}{}        & \multicolumn{1}{|c}{Proposed} &  \multicolumn{4}{|c}{Approaches from the Literature}  \\   \cline{3-6}
           %%%%
        Status & EPNM & Reluplex\footnote[1]{} & Reluplex (24 hours)\footnote[2]{} & Duality & NSVerify\\ \hline
        holds & 43 & 41 & - & - & - \\ \hline
        violated & - & - & - & - & - \\ \hline
        timeout & 2 & 4 & 45 & - & 45\\ \hline
        unknown & - & - & - & 45 & -
    \end{tabular}

\end{table}

\footnotetext[1]{Results extracted from \cite{katz2017reluplex}.}
\footnotetext[2]{Results from the authors for a $24$-hour execution.}

\subsubsection{Parallel Computation}

Due to the characteristics of the proposed approaches, their implementation allows the parallelization in a procedural level.
We chose to implement the parallelization in two of the procedures (EI and II), because the remaining algorithms did not respond well due to the tradeoff between the overhead and the speed-up of the parallelization. 

The first one is the edge identification (EI) algorithm. 
    For this procedure, we created a pool for the execution with the size equal to the number of available threads. 
The pool guarantees that, after each thread ends its execution, a new thread is started and takes the empty space.
   For each vertex, a new thread was initiated, which verified if the adjacency property holds for the current and each of the remaining vertices. 
As sharing memory was not necessary, because each thread has its own adjacency list, no synchronization approach was implemented.
By the end of the execution, the algorithm concatenated the adjacency list calculated for each vertex.

The intersection identification (II) was the second parallelized procedure.
     Following the same idea, a new thread was created for each vertex. 
After the initialization is completed, the procedure identifies those intersection points between the current and each of its adjacent vertices.
     In this case, similarly to the previous one, no synchronization was necessary, as each thread carried its own list of intersections. 
At the end of the pool execution, the procured concatenated those intersection points associated with each vertex.

The results of the second part of the experiments are depicted in Figure \ref{fig:duration_x_threads}. 
As can be seen in this figure, as the number of available threads for the execution increases, the running time decreases significantly. 
This characteristic of the algorithm can be explored for huge problems.

Figure \ref{fig:speedup_x_threads} presents the speedup behavior for the algorithm EPNM. As expected, the algorithm indeed reduce the runtime as there is an increment on the available threads, though the difference between the real and the ideal curve indicates that the parallel processes are not ideally balanced (there are threads waiting for some execution to end).

% \begin{figure}
%     \centering
%     \includegraphics[scale=0.4]{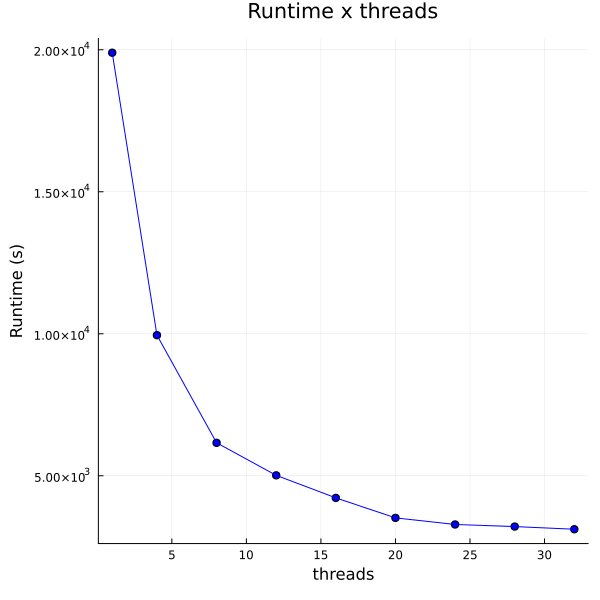}
%     \caption{
%     The graph illustrates the execution speed-up obtained from parallelizing the proposed algorithm. As the number of available threads increases, there is a significant reduction in the running time (duration).
%     }
%     \label{fig:duration_x_threads}
% \end{figure}

\begin{figure}[!ht]
     \centering
     \begin{subfigure}[b]{0.4\textwidth}
         \centering
        \includegraphics[width=\textwidth]{figures/tempo_x_threads.png}
        \caption{
        The graph illustrates the execution speed-up obtained from parallelizing the proposed algorithm. As the number of available threads increases, there is a significant reduction in the running time (duration).
        }
        \label{fig:duration_x_threads}
     \end{subfigure}
     \hfill
     \begin{subfigure}[b]{0.4\textwidth}
         \centering
         \includegraphics[width=\textwidth]{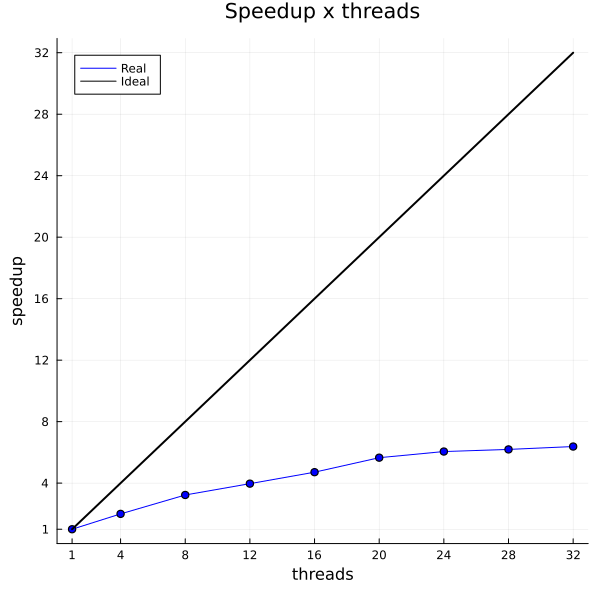}
         \caption{Comparison between the actual speed up of the EPNM and the ideal speed up.}
         \label{fig:speedup_x_threads}
     \end{subfigure}
    \caption{Illustrative visualization of the EPNM parallelization behavior. We present both, the runtime and the speed up for the algorithm EPNM execution on a single ACAS Xu model for the property \ref{property:1}.}
    \label{fig:speedup_behavior}
\end{figure}

\begin{figure}[!ht]
     \centering
     \begin{subfigure}[b]{0.4\textwidth}
         \centering
         \includegraphics[width=\textwidth]{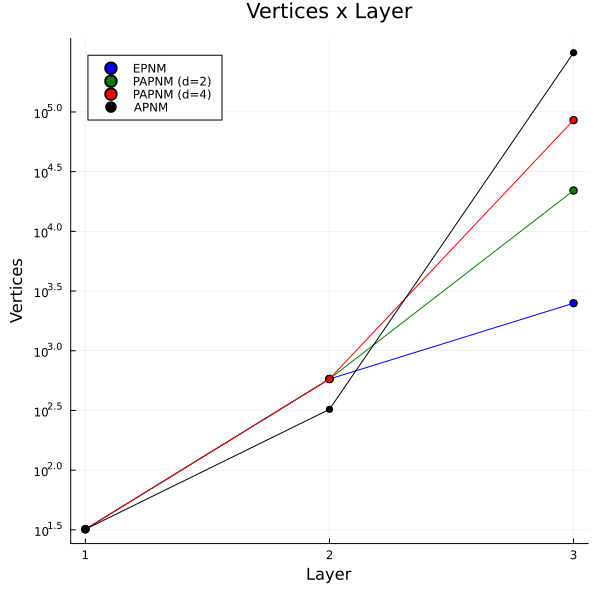}
         \caption{Total of number vertices processed at each layer of a single neural network. The y-axis is in logarithmic scale. The APNM algorithm presents a significantly higher increment in the number of vertices after layer $3$, in comparison to PAPNM and EPNM. Both PAPNM and EPNM  executions have the same value at layer $2$, as expected, though EPNM has a lower number of vertices at layer 3.}
         \label{fig:vertices_behavior}
     \end{subfigure}
     \hfill
     \begin{subfigure}[b]{0.4\textwidth}
         \centering
         \includegraphics[width=\textwidth]{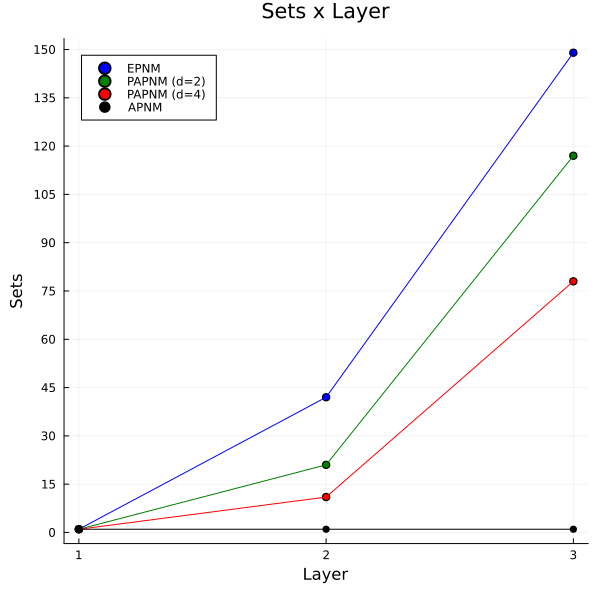}
         \caption{Total number of sets generated at each layer processed by each of the algorithms. As expected, APNM keeps $1$ set at the end of every layer execution. The EPNM has the greatest increment on the number of sets, which is an expected behavior.}
         \label{fig:sets_behavior}
     \end{subfigure}
    \caption{Illustration of the complexity behavior of APNM, EPNM and PAPNM with $d=2$ and $d=4$ for the first $3$ layers of a single neural network from ACAS XU model.}
    \label{fig:complexity_behavior}
\end{figure}

\subsubsection{Complexity behavior of the proposed approaches}

Our experiments showed that, differently from the expected, the algorithm EPNM has a shorter running time in comparison to APNM, for the ACAS XU model.
   This behavior can be explained considering the total number of vertices that are processed in both algorithms.
Figure \ref{fig:complexity_behavior} presents the behavior of both, the total number of vertices and sets processes at each layer of a single neural network from ACAS XU. 

\begin{table}[!htb]
    \caption{Results on the running time of EPNM, APNM and PAPNM with $d=2$ and $d=4$ for the first three layers of a single neural network.}
    \begin{subtable}{.48\linewidth}
      \centering
        \caption{Behavior of the total number of vertices processed for each layer of a neural network by APNM, EPNM and PAPNM.}
        \begin{tabular}{c|c|c|c|c}
            Layer & APNM & PAPNM & PAPNM & EPNM \\ 
             & & $(d=2)$ & $(d=4)$ & \\\hline
            1 & 32 & 32 & 32 & 32 \\ \hline
            2 & 323 & 580 & 580 & 580 \\ \hline
            3 & 313286 & 85404 & 21977 & 2502
        \end{tabular}
    \end{subtable}%
    \hfill
    \begin{subtable}{.48\linewidth}
      \centering
        \caption{Behavior of the total number of sets processed for each layer of a neural network by APNM, EPNM and PAPNM.}
        \begin{tabular}{c|c|c|c|c}
            Layer & APNM & PAPNM & PAPNM & EPNM \\ 
             & & $(d=2)$ & $(d=4)$ & \\\hline
            1 & 1 & 1 & 1 & 1 \\ \hline
            2 & 1 & 11 & 21 & 42 \\ \hline
            3 & 1 & 78 & 117 & 149
        \end{tabular}
    \end{subtable} 
    \label{tab:complexity_behavior}
\end{table}

Figure \ref{fig:vertices_behavior} shows that, from the same input set, the algorithm APNM generates the greater set of vertices for representing its approximation of the output set compared to both, EPNM and PAPNM (\textit{i.e.}, $313286$ vertices at the third layer).
    Figure \ref{fig:sets_behavior} depicts the increment on the number of sets for EPNM and PAPNM.
Table \ref{tab:complexity_behavior} reports the data used to create Figure \ref{fig:complexity_behavior}.

Both results lead to a lower average number of vertices across each of the sets for EPNM. 
On the other hand, the opposite occurs to APNM  which has a single set with a strongly increasing number of vertices.
Table \ref{tab:vertices_x_sets} shows the average number of vertices per set for each algorithm at each layer of the neural network. 
This means that the merging process (performed partially by PAPNM and completely by APNM) induces a simplification on the total number of sets, though as a side effect it significantly increases  the total number of vertices to be processed.

\begin{table}[!htbp]
    \centering
    \caption{Average of the total number of vertices within each set for each algorithm. The results are presented layer by layer.}
    \begin{tabular}{c|c|c|c|c}
        Layer & APNM & PAPNM & PAPNM & EPNM \\ 
         & & $(d=2)$ & $(d=4)$ & \\\hline
        1 & 32 & 32 & 32 & 32 \\ \hline
        2 & 323 & 52.7 & 27.6 & 13.8 \\ \hline
        3 & 313286 & 1094.9 & 187.8 & 16.8
    \end{tabular}
    \label{tab:vertices_x_sets}
\end{table}

The number of vertices within a set is directly related to the running time for each algorithm because the computational complexity of each of the procedures that comprise APNM, PAPNM and EPNM is directly related to the total of vertices processed.
% It is possible to notice that, as many more sets are merged (which is performed by the PAPNM), the higher the total of vertices to be processed. 

\section{Conclusion}
\label{sec:conclusion}

In this work, we proposed two vertex-based reachability algorithms for formal verification of deep neural networks. These algorithms compute a reachable output set, which may consist of a set of polyhedral sets, for a given input polyhedral set, satisfying different properties: the first one (APNM) computes an approximation for the output reachable set, while the second one (EPNM) computes the exact output reachable set.

Supported by formal demonstrations of correctness, the proposed algorithms were shown to correctly verify properties of neural networks with the $ReLU$ activation function. More specifically, it was shown that APNM yields an overestimation of the output reachable set, while EPNM computes the exact reachable set.

Our proposal was applied to a benchmark problem for neural network verification and compared to some of the algorithms previously proposed in the literature. The results showed that among the verification algorithms that make use of reachability analysis, the presented EPNM approach concluded most of the verifications (43 out of 45 neural networks), differently from the ExactReach, which is another exact approach from the literature that could not verify any neural network within the specified timeout. Compared to those algorithms that are not complete, despite the fact that these approaches were able to finish their executions, the expected output was not reached in any case (see Table \ref{tab:rechability_results}).

In comparison to those methods that make use of optimization and search strategies, the result reported in the literature for Reluplex surpassed ours in terms of running time. However, under the same hardware conditions, our approach was able to overcome their results.

Finally, we argue that the algorithms proposed in this paper are strong candidates for the verification of neural networks with $ReLU$ activation. Additionally, the running time can be reduced drastically by using multiple processing cores. Future work includes the investigation of a different construction for the search approach in EPNM and of a third approach that can be designed to improve performance by using heuristics for the reduction of sets and vertices during the search process.

\section*{Acknowledgment}

This work was funded in part by Funda{\c{c}}{\~a}o de Amparo {\`a} Pesquisa e Inova{\c{c}}{\~a}o do Estado de Santa Catarina (FAPESC) under grant 2021TR2265.

\bibliography{refs}
\bibliographystyle{ieeetr}

\end{document}